\documentclass[journal]{IEEEtran}
\usepackage{diagbox}
\usepackage{booktabs}
\usepackage{multirow}
\usepackage{graphicx}
\usepackage{subfigure}
\usepackage{amsmath, amsthm, amssymb}

\usepackage{color}
\usepackage{times}
\usepackage{subfig}
\usepackage{cite}
\usepackage{stfloats}
\usepackage{float}
\usepackage{placeins}
\allowdisplaybreaks
\usepackage{array}
\usepackage{xcolor,graphicx}
\definecolor{darkgreen}{HTML}{006400} 

\usepackage{xcolor, soul}
\sethlcolor{green}

\newtheorem{remark}{Remark}

\newtheorem{theorem}{Theorem}

\newtheorem{assumption}{Assumption}
\newtheorem{lemma}{Lemma}

\begin{document}
%
\title{Stability and Comfort in Mobile Robot-Pedestrian Interactions}
%
%
%

\author{\textcolor{black}{Alireza Jafari, Hong-Son Nguyen, and Yen-Chen Liu$^*$}
\thanks{\textcolor{black}{This work was supported in part by the National Science and Technology Council (NSTC), Taiwan, under Grant NSTC 114-2628-E-006-010 and NSTC 114-2218-E-006-021.}}
\thanks{\textcolor{black}{National Cheng Kung University's Institutional Review Board (IRB) reviewed and approved all the study procedures.}}
\thanks{\textcolor{black}{The authors are with the Department of Mechanical Engineering, National Cheng Kung University, Tainan, Taiwan. (e-mail: \texttt{yliu@mail.ncku.edu.tw})}}\par
}

%
%

\markboth{Arxiv.org version}%
{Shell \MakeLowercase{\textit{et al.}}: Bare Demo of IEEEtran.cls for IEEE Journals}
%



\maketitle
\begin{abstract}
Mobile robots in public spaces must ensure pedestrians' comfort, and yet empirical studies of walkers' subjective safety are rare.
\textcolor{black}{Many classical navigation algorithms do not distinguish the walkers from dynamic obstacles and do not explicitly model subjective human factors.}
Moreover, most studies focus on holonomic mobile robots, whereas applications demand Nonholonomic Mobile Robots (NMR). 
This paper develops socially aware algorithms for NMRs, proves the stability, verifies the performance experimentally, and statistically analyzes the reported comfort.
We design a framework for NMRs using Social Force Model (SFM) and the projected Time-to-collision Social Force Model (TSFM).
We formalize the NMR-pedestrians' and NMR-obstacles' interactions and prove the system's stability, assuming boundedly nonpassive pedestrians.
Simulations calibrate the models by maximizing a hybrid cost function of comfort and speed. 
Pedestrian-robot interaction experiments compare SFM and TSFM to two remote-controlled baselines and collect walkers' reported comfort.
Statistical tools analyze survey results collected during the experiments.
Benchmarking the algorithms against previous studies highlights the proposed methods' advantage with respect to the studied metrics.
Overall, the models are stable and improve pedestrian comfort when an NMR navigates through a pedestrian crowd.
\end{abstract}

\begin{IEEEkeywords}
Pedestrian-Robot Interaction Stability, Pedestrian Comfort, Socially-Aware Navigation, Nonholonomic Mobile Robots (NMR), Social Force Model (SFM), Projected Time-to-Collision (PTTC).
\end{IEEEkeywords}
%
\IEEEpeerreviewmaketitle
\section{Introduction}\label{sec:intro}
\IEEEPARstart{S}{ocially-aware} mobile robots on shared public spaces must care for the pedestrians' comfort or subjective safety in addition to their objective safety~\cite{He2025}.
Objective safety focuses on avoiding physical collisions with pedestrians, whereas subjective safety focuses on the feeling of being safe.
\textcolor{black}{Many classical navigation algorithms treat the walkers and dynamic obstacles the same}, for example~\cite{Lee2024, Xie2023}.
Perhaps the reason lies in the difficulty of quantifying human discomfort by the robot, preventing the development of implementable algorithms~\cite{Francis2025}.
\par
Mobile robots inevitably join futuristic public spaces.
These shared spaces are usually unstructured, and the walkers do not follow strict guidelines. 
Without considering walkers' movement patterns in the navigation algorithm, cruising through a crowd is challenging~\cite{Brscic2017}.
Therefore, the interaction with human crowds requires further development in traditional obstacle avoidance~\cite{Satake2013}, for example, human-robot cooperative path-planning~\cite{Trautman2015}.
Social Force Models (SFM)~\cite{JAFARI2024-SIMPAT}, Velocity-Obstacle (VO)~\cite{Fiorini1998} and its variant Optimal Reciprocal Collision Avoidance (ORCA)~\cite{vandenBerg2011}, and learning-based methods~\cite{Le2024} are the main candidates for mobile robot navigation in pedestrian crowds~\cite{VanDerMeer2025}.
Apart from the mainstream research, the game theory and mixed strategy Nash equilibrium structurally match mobile robot-crowd interaction dynamics~\cite{Sun2026}; however, the framework for socially aware navigation remains largely understudied.
\par
SFM, initially designed for crowd panic evacuations~\cite{Helbing1995}, is a promising candidate for mobile robot interaction with pedestrians.
In mobile robot-pedestrian interactions, SFM is usually used to model and predict the pedestrian trajectory, for example~\cite{kidokoro2015}.
\textcolor{black}{We focus on using SFM as the mobile robot navigation algorithm, rather than predicting other agents' movements.
Since SFM is a pedestrian walking model, the core motivation for using it as the navigation algorithm is to imitate human movement patterns.
The assumption is that if a robot moves like a pedestrian, the surrounding pedestrians feel more comfortable around it.}
\par
Recently, Liu et al. proposed an SFM variant for e-scooters~\cite{Liu2022-T-ITS}. 
Their further research extends the model to other personal mobility vehicles, including mobile robots~\cite{JAFARI2024-SIMPAT, JAFARI2024-3-SORO}.
For instance, they propose a predictive SFM variant with occasionally visible improvements in pedestrian comfort~\cite{Jafari2026-2-aim}.
Similarly, Shiomi et al. improve pedestrians' experience around the mobile using an SFM~\cite{Shiomi2014}.
The robot reproduces human-like movement patterns.
Ægidius et al. propose an augmented SFM for guiding mobile robots through human crowds~\cite{AEgidius2024}.
Using a quadruped robot, they experimentally improve social navigation metrics.
Another example is the work by Kamezaki et al. that uses inducible SFM and reactively cruises in a crowd~\cite{Kamezaki2022}.
Ferrer et al. calibrate and use a basic holonomic and circular SFM on a nonholonomic setup~\cite{Ferrer2013}.
In addition, Brayan et al. collect a dataset for social navigation using a holonomic robot and an NMR~\cite{Brayan2026}.
Then, they calibrate a distance-based circular SFM and compare the trajectories from the dataset.
\par
\textcolor{black}{Humans, having no velocity constraints, are naturally holonomic agents, while most practical mobile robots are nonholonomic.
Thus, most SFM research uses holonomic mobile robots, whereas applying the human-like holonomic model to physically constrained mobile robots remains an ongoing challenge. 
This paper develops a framework for nonholonomic robots to apply SFM and its variants.
Moreover, our TSFM variant incorporates time-to-collision into its formulation to relate more directly to pedestrian comfort and empirically compares SFM, TSFM, and two remote-controlled scenarios.}
\par
ORCA is another mobile robot navigation algorithm that evolved from Velocity-Obstacle (VO) and Reciprocal Velocity Obstacle (RVO).
It shares the avoidance responsibility between the agents and realizes safe velocities at each instance~\cite{vandenBerg2011}.
Martinez-Baselga et al. present AdaptiVe Optimal Collision Avoidance Driven by Opinion (AVOCADO) and relax the reciprocity assumption of VO~\cite{Martinez2025}.
AVOCADO uses opinion dynamics and adaptively estimates the cooperation share of the neighboring agents.
Levy et al.~\cite{Levy2015} use a robust collision avoidance system for a robot swarm.
They utilize an ORCA extension to ensure safe and natural robot movements in a crowded environment.
Samavi et al. introduce SICNav, an MPC path-planner that uses ORCA to predict pedestrian movements~\cite{Samavi2025}.
They experimentally evaluate ORCA's performance in pedestrian walking pattern replication.
\par
\textcolor{black}{Most VO variants, such as ORCA, treat pedestrians as dynamic obstacles and therefore are designed to guarantee objective safety.
Since objective safety correlates with subjective safety, such guarantees may also improve pedestrian comfort.
However, the algorithms are not specifically designed to care for pedestrian feelings.}
\par
Learning-based navigation methods are turning into the primary focus in mobile robot navigation research.
Hirose et al. train robots to move through a crowd while minimizing their presence impact on the crowd~\cite{Hirose2024}.
Liu et al. introduce a deep reinforcement learning approach for robot navigation in crowded environments~\cite{Liu2020-2}.
They process static and dynamic obstacles, improving motion planning and reporting high performance in simulated and real-world scenarios.
Flogel et al. propose a Deep Reinforcement Learning (DRL) framework where the robot's actions emerge from interactions rather than predefined rules~\cite{Flogel2024}.
\par
Learning-based studies also improve generalization and social awareness.
Sen et al. enhance the generalization capabilities of human-aware robot navigation using Domain Randomization (DR) techniques by widening the pedestrian behavior range in simulations~\cite{Sen2025}.
They train navigation policies using the broader pedestrian behavior and stand out among robots trained using ORCA pedestrians within a narrow behavior spectrum.
Chen et al. present a method for socially aware Object Goal Navigation (ObjectNav) in dynamic environments using a heterogeneous scene representation learning approach~\cite{Chen2024-3}.
The approach models human-robot-object ternary interactions to maintain feature specificity.
It uses a DRL strategy to navigate efficiently by predicting state transitions and estimating future states.
\par
\textcolor{black}{Standard learning-based methods lack formal stability analysis by construction, in contrast to classical control approaches, where stability must be proven using tools such as Lyapunov theory.
These methods rely on navigation policy, a high-dimensional function like neural networks, learned from data or interaction.
The function doesn't have a structured formulation, and control stability analysis relies on explicit mathematical formulations and constraints.
Therefore, by construction, an analytical stability proof for the standard learning methods is unseen.
Experiments validate safety and convergence for learning methods, and there are always risks, especially when facing unseen scenarios or distribution shifts in reality.
That being said, formal guarantees are discussable in learning methods by augmenting learning-based algorithms with other mechanisms such as control barrier functions~\cite{Cheng2019}, Lyapunov-based critics~\cite{Chow2018}, reachability analysis~\cite{Wang2024}, or policy verification/safety shielding~\cite{Alshiekh2018}.
However, these are beyond standard DRL and require hybrid designs.}
\par
Human discomfort and subjective safety are \textcolor{black}{relatively underexplored areas in mobile robot-human interactions compared to objective safety and collision avoidance.}
Notably, a few research studies have examined and addressed human discomfort.
For example, Neggers et al. experimentally study the geometry and the dimensions of human personal space~\cite{Neggers2022}.
They provide discomfort contours around people interacting with holonomic mobile robots.
Hoang et al.'s study estimates a socially optimal path to approach a group of people~\cite{Hoang2023}.
On its way to the group, the robot avoids pedestrians and their dynamic social zones.
Greenberg et al. highlight pedestrian optimism toward mobile robots when their trajectories have mild curvatures~\cite{Greenberg2025}.
Jang and Ghaffari simulate a mobile robot navigating around a human~\cite{Jang2024}.
The robot uses barrier functions to avoid entering experimentally identified social zones.
Jafari et al. study the effect of kinematic variables on subjective safety and suggest a handcrafted estimator for pedestrian comfort~\cite{Jafari2026-1-icra}.
\textcolor{black}{We measure subjective and descriptive aspects of the pedestrian feelings using four questions (Q1--Q4) that capture the perceived comfort, smoothness, distance, and speed, respectively.
In this paper, we use the terms ``comfort" and ``subjective safety" interchangeably for the pedestrians' overall evaluation of the interaction Q1.}
\par
Time-to-Collision (TTC) is a well-known, long-developed metric for near-miss severity in traffic safety~\cite{Hayward1972}.
In robotics research, however, the TTC application is quite recent.
Masaki et al. use TTC as an indicator of collision possibility. 
In their study, the force generated by the TTC assists the remote operator in managing a mobile robot~\cite{Masaki2020}.
Shahriari and Biglarbegian quantify the collision immediacy among mobile robot crowds using TTC and suggest a Lyapunov-stable controller for heterogeneous mobile robot groups~\cite{Shahriari2022}.
Jafari and Liu model a heterogeneous crowd movement pattern on a futuristic sidewalk using a TTC extension~\cite{JAFARI2024-3-SORO}.
In an e-scooter-pedestrian safety study, Projected Time-to-Collision (PTTC), a variant of TTC, correlates with pedestrian subjective safety~\cite{JAFARI2024-2-NATCOM}.
If the correlation extends to mobile robots, online estimations of nearby pedestrians' comfort are possible.
\par
\textcolor{black}{Previous research mainly focuses on trajectory-based path planning/tracking or local re-planning using predefined trajectories, for example~\cite{Zou2022, Fox1997, Ma2025-2, Tarantos2024}, which is not the case for sidewalks where the pedestrians are dynamic and unpredictable over long horizons.}
In addition, when the research considers pedestrians and a dynamic environment, the mobile robot is often holonomic. 
However, major practical applications are interested in Nonholonomic Mobile Robots (NMRs) on shared spaces because of their higher speed and lower maintenance~\cite{Taheri2020}. 
Examples are delivery services and sweeping machines.
Although a few previous studies use NMRs for public space experiments, the underlying theory assumes holonomic mobile robots~\cite{Repiso2024, Zanlungo2017, Shiomi2014, Mac2025, Vasquez2014}.
A typical inconsistency is the assumption of a centered mass and double-integrator dynamics for NMRs.
A conventional SFM often requires impossible lateral movements on a nonholonomic platform. 
For instance, Ferrer et al. directly apply the conventional centered-mass double-integrator SFM to an NMR~\cite{Ferrer2013}, ignoring the inconsistency.
\par
We provide a framework for Artificial Potential Field methods (APF), specifically SFM, to be implemented on NMRs.
Moreover, we propose a PTTC-based SFM (TSFM) to improve pedestrian comfort around the NMR.
\textcolor{black}{The SFM and TSFM comparison isolates the effect of the PTTC integration into TSFM.}
A hybrid cost function calibrates both models in simulations.
The experiment results show that TSFM surpasses SFM in the walkers' comfort metric but falls behind in the robots' speed criterion.
\par
Additionally, the mobile robot-pedestrian interaction studies often lack stability analysis. 
\textcolor{black}{
A mobile robot entering a pedestrian crowd disturbs the harmonious multi-agent environment. 
Due to the collision-avoidance algorithms, the mobile robot itself reacts to the disturbed flow.
Therefore, unbounded robot velocities and divergent system states are possible, posing physical safety risks to nearby walkers.
Thus, the stability analysis of robot-pedestrian interaction is critical.
In this paper, the robot-pedestrian interaction is stable if the interaction dynamics states are all bounded over time.
}
We formalize the interaction and lay the foundations for stability analysis under the key assumption of bounded non-passivity for pedestrians.
\par
In addition, we compare SFM and TSFM with a previous learning-based study (DRL) and remote-controlled trials by an extensive set of experiments.
The remote-controlled cases and another previous research (ORCA) on pedestrian comfort form the comparison baselines.
Moreover, since mobile robot-pedestrian studies rarely consider pedestrian comfort in experiments, we use questionnaires to benchmark the algorithms' comfort level empirically.
The statistical analysis suggests that TSFM is more comfortable than SFM. 
Both autonomous algorithms are inferior to remote-controlled interactions.
Pedestrian ratings analysis hints that the movement smoothness is the underlying reason.
\par
Briefly, this paper presents a framework for force-based navigation algorithms specialized for NMRs and proves the interaction stability.
Moreover, we experimentally compare the algorithms with previous research using comfort metrics.
Survey responses compare the methods and verify the pedestrian comfort improvement.
The paper's contributions are as follows:
\begin{enumerate}
    \item The paper develops a framework for force-based navigation algorithms for NMRs.
    The framework applies beyond the SFM and to any other APFs.
    Previous research either fully focuses on holonomic mobile robots or develops a holonomic algorithm and applies it to an NMR during experiments.
    We apply SFM to an NMR as proof of concept.
    Previous research implements SFM on robots in a pedestrian-like manner, ignoring the NMR's kinematic constraints.
    The framework allows for the consideration of kinematic constraints in an SFM-based NMR navigation.
    The successful implementation of SFM on an NMR proves the concept.
    \item The paper introduces a PTTC-based SFM (TSFM) and improves pedestrian experience around mobile robots.
    TSFM modeled e-scooter rider movement patterns in previous research. 
    This paper extends the novelty to mobile robots, resulting in a more pleasant walk for pedestrians in shared sidewalks.
    \item The paper lays the foundations for stability analysis of mobile robot-pedestrian interactions.
    Assuming pedestrians' bounded non-passivity, the stability analysis shows uniform boundedness of the system.
    Previous research focused on obstacle avoidance and has not studied the stability in mobile robot-pedestrian interactions and \textcolor{black}{mainstream} learning-based methods do not have formal stability guarantees.
\end{enumerate}
In addition to the theoretical contributions, the paper presents the following practical novelties:
\begin{enumerate}
    \item The comparison with a previous learning-based algorithm highlights the advantages of SFM and TSFM.
     The inclusion of remote-controlled cases provides insights into the models' performance.
     \item We compare the algorithms' performance in pedestrian comfort using surveys. 
    Statistical tools benchmark the algorithms against each other and two remote-controlled cases.
\end{enumerate}
In addition, outdoor multi-pedestrian tests verify the algorithms on actual sidewalks.
In summary, the major contributions of the paper are the study of stability in the presence of pedestrians and the statistical analysis of pedestrian comfort.
\par
The paper structure is as follows.
Section~\ref{sec:back} reviews the necessary background.
Section~\ref{sec:main} introduces the main theoretical contributions.
Section~\ref{sec:stability} proves the system stability.
Section~\ref{sec:cal} calibrates the models.
Section~\ref{sec:exp} details the experiments.
Section~\ref{sec:res} presents the results and discusses the outputs.
Section~\ref{sec:con} outlines the paper and suggests follow-ups.
\section{Background}\label{sec:back}
This section reviews the required background for further building upon in the paper.
It briefly introduces the SFM's basics, introduces TTC, and details the PTTC as a variant of TTC in 2-dimensional space~\cite{JAFARI2024-3-SORO,JAFARI2024-2-NATCOM}.
\subsection{Social force model}\label{sec:back:sfm}
Helbing et al. formulated the pedestrian micro movements in a crowd using SFM~\cite{Helbing1995}.
Over the last thirty years, SFM has become the dominant tool in crowd dynamics simulations~\cite{Ma2025}.
Moreover, a few studies implemented it on Personal Mobility Vehicles (PMVs)~\cite{Dias2017, Dias2018, HASEGAWA2018} and mobile robots~\cite{Shiomi2014, Truong2017, Repiso2024, Kamezaki2022, Mac2025}. 
All these studies used the pedestrian model directly on the mobile robot, ignoring the kinematic constraints.
\par
SFM assumes that the movement of people in a shared space is the result of their interactions and reactions to walls, curbs, obstacles, PMVs, etc.
It formulates the interactions by a set of psychological forces, creating a virtual force space.
The superposition of these forces adds linearly and forms an equivalent total psychological force $\vec{F}_i$ exerted on pedestrian $i$,
\begin{equation}
\vec{F}_{i}=\sum_{j=1}^{N}\vec{f}_{soc,ij}+\sum_{k=1}^{B}\vec{f}_{bnd,ik}+\vec{f}_{des,i}\;,
\label{eq:SFMForce}
\end{equation}
where $\vec{f}_{soc,ij}$ is a social repulsive force designed to replicate pedestrian $i$'s intention to keep a distance from pedestrian $j$ and $N$ is the number of nearby pedestrians.
Similarly, $\vec{f}_{bnd,ik}$ is a repulsive force to prevent the pedestrian $i$ from getting too close to the boundary or wall $k$; $B$ is the number of boundaries in $i$'s vicinity.
$\vec{f}_{des,i}$ is the an attractive force that pulls the pedestrian $i$ toward its destination with its desired velocity.
\par
Individual $i$'s movement pattern is the result of its reaction to $\vec{F}_i$.
Conventional SFM assumes that pedestrians and the mobile robots are centered masses. 
Therefore, their movement patterns follow a double integrator format
\begin{equation}
\vec{F}_{i}=m_i\vec{a}_i,
\label{eq:F=ma}
\end{equation}
where $m_i$ and $\vec{a}_i$ are the agent's virtual mass and acceleration, respectively.
The framework suggested in Section~\ref{sec:SFM:framework} uses the robot's dynamics instead.
Sections~\ref{sec:SFM:SFM} and~\ref{sec:SFM:TSFM} formulate the forces in detail for SFM and TSFM, respectively.
\subsection{Projected time-to-collision}
\begin{figure}[t]
\centering
\includegraphics[width=0.48\textwidth]{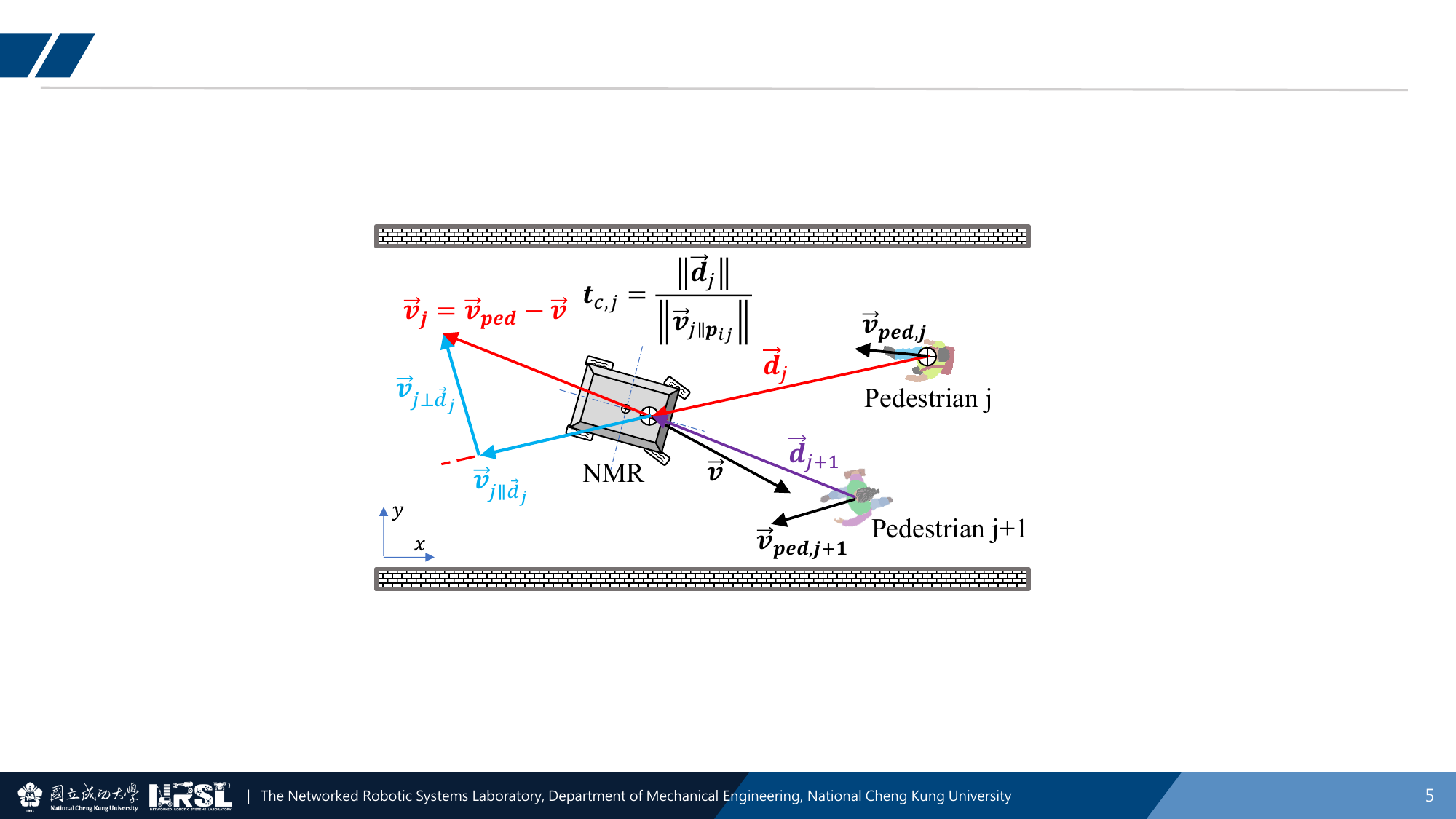}
\caption{PTTC concept when a robot passes through a crowd. The robot detects pedestrians $j$ and $j+1$. It measures $\vec{d}_j$ and $\vec{v}_j$ and computes $t_{c,j}$ and $\vec{f}_{soc,j}$ for pedestrian $j$. Then, it iterates for pedestrian $j+1$.}
\label{fig:PTTC}
\end{figure}
TTC is the remaining time until two objects collide.
On a straight line and assuming the objects maintain their current velocity, it is $\frac{\Delta X}{\Delta V}$, where $\Delta X$ and $\Delta V$ are their relative distance and relative velocity, respectively.
However, in a 2-dimensional space, the objects may not collide if their trajectories are skewed.
Still, the rate at which the two agents approach each other on their line of sight affects their avoidance behavior~\cite{JAFARI2024-3-SORO}.
\par
Jafari and Liu defined PTTC as a pedestrian discomfort metric for e-scooters~\cite{Jafari2023-2-IFAC}.
They correlated the human-reported discomfort with PTTC in pedestrian interaction with e-scooters~\cite{JAFARI2024-2-NATCOM}.
Moreover, they proposed a model describing e-scooter rider movement patterns on a sidewalk using a PTTC variant~\cite{JAFARI2024-3-SORO}.
We extend the PTTC application to mobile robots, specifically NMRs, and propose the PTTC-based Social Force Model (TSFM) as a navigation algorithm for NMRs on sidewalks.
For the sake of completeness, we briefly introduce PTTC; please see~\cite{JAFARI2024-2-NATCOM, JAFARI2024-3-SORO} for a detailed description.
\par
Fig.~\ref{fig:PTTC} describes the PTTC concept when the robot passes through a pedestrian crowd.
The rate at which the robot and the pedestrian $j$ approach each other is the projection of their relative velocity $\vec{v}_j$ onto their relative distance $\vec{d}_j$,
\begin{align}
\vec{v}_{j\parallel\vec{d}_j}&=\frac{\vec{d}_j\cdot\vec{v}_j}{\|\vec{d}_j\|},\label{eq:v_par_d}\\
\vec{v}_j&=\vec{v}_{ped,j}-\vec{v},\label{eq:vj}
\end{align}
where $\vec{v}_{j\parallel\vec{d}_j}$ is the projection and $\vec{v}_{ped,j}$ is the pedestrian $j$'s velocity vector in the global frame.
$\vec{v}_j$ and $\vec{d}_j$, the relative velocity and position, may be interpreted as the pedestrian's absolute velocity and position in the robot's local frame as well. 
The onboard depth camera measures $\vec{v}_j$ and $\vec{d}_j$ directly, eliminating the need for a global frame.
\par
$t_{c,j}$ denotes PTTC to pedestrian $j$ on the sidewalk.
$t_{c,j}$ is the time that the pedestrian perceives as the time remaining to the collision if the distance on the line of sight  $\vec{d}_j$ keeps decreasing at the current rate $\vec{v}_{j\parallel\vec{d}_j}$. Thus,
\begin{equation}\label{eq:tcj} 
t_{c,j}=\frac{\|\vec{d}_j\|}{\vec{v}_{j\parallel\vec{d}_j}}=\frac{\|\vec{d}_j\|^2}{\vec{d}_j\cdot\vec{v}_j}.   
\end{equation}
The robot assigns a PTTC to all detected pedestrians on the sidewalk.
\section{Social Force Models for Nonholonomic Mobile Robots}\label{sec:main}
This section develops a framework for NMRs that use force-based models.
The framework produces movements that are compatible with the robot's kinematic constraints in response to the exerted virtual forces.
We introduce two algorithms: the mainstream formulation for SFM and a novel formulation, TSFM.
\subsection{Nonholonomic framework}\label{sec:SFM:framework}
\begin{figure}[t]
\centering
\includegraphics[width=0.48\textwidth]{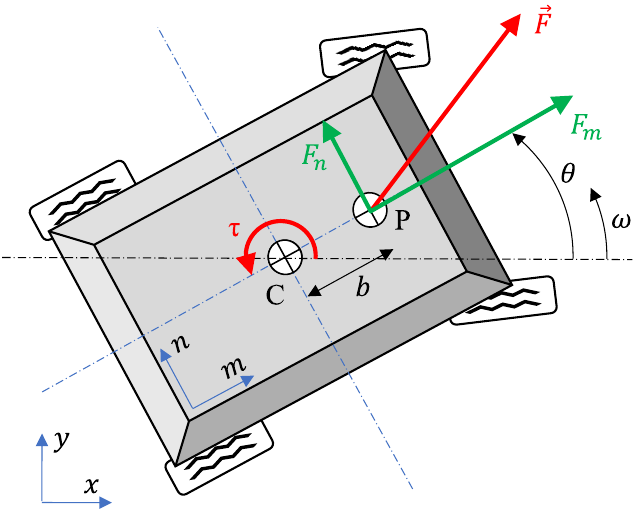}
\caption{The mobile robot geometry, the effective force $\vec{F}$, and its projections $F_m$ and $F_n$ in the local coordinates.
The point of force application P is a design choice. Camera measurements must be transformed to this point.}
\label{fig:DDNMR}
\end{figure}
A novelty of this paper is a framework for applying SFM and its variants to NMRs.
The framework applies to holonomic mobile robots with proper modifications.
However, the paper focuses on a more challenging case of NMRs.
Here, we construct the mathematical foundation using a differential-drive mobile robot. 
Nevertheless, the approach is extendable to other NMRs by applying the corresponding kinematic constraints. 

\par
To generate pedestrian movement patterns, SFM assumes that psychological forces are physical forces exerted on people as if they were centered masses~\cite{Helbing1995}.
Our framework adopts the idea of assuming virtual forces as physical forces from SFM. 
However, instead of applying it to centered masses, it applies it to NMRs and considers their kinematic constraints.
\par
The navigation algorithms require the NMR to react to a virtual force.
The algorithms calculate the virtual force, apply it to the robot's dynamics under the constraints, and get the robot's virtual reaction.
Then, it sends the virtual reaction as a command to the robot, and the robot behaves as if it is subjected to an equivalent physical force.
Therefore, the framework generates a feasible reaction to virtual forces for NMRs under kinematic constraints.
Consequently, the robot behaves as if it's being dragged by $\vec{F}$. 
\par
\noindent \textbf{Kinematics.}~
Fig.~\ref{fig:DDNMR} shows a differential-drive NMR and the virtual force vector, $\Vec{F}$. 
Its center of rotation is $C$.
The distance between the virtual force exertion point $P$ and $C$ is the torque arm $b$, assuming lateral symmetry.
$x$ and $y$ are $C$'s global coordinates and $\theta$ shows the robot orientation.
$v$ and $\omega$ stand for its linear and angular speeds; to avoid confusion, we use the term ``speed" referring to the scalar and ``velocity" to the vector.\par
We define $q=[x, y, \theta]^T$ and $z=[v, \omega]^T$;
\begin{equation}
\dot{q}=Jz, \text{with~}J=
\begin{bmatrix}  
  \cos \theta & 0 \\
  \sin \theta & 0 \\
  0           & 1
\end{bmatrix}.
\label{eq:J=[]}
\end{equation}
The no-lateral-movement kinematic constraint is 
\begin{equation}
A\dot{q}=0
\label{eq:Aq=0}
\end{equation}
with $A=[-\sin{\theta}, \cos{\theta}, 0]$ and $\dot{q}=[\dot{x}, \dot{y}, \dot{\theta}]^T$.
Note that
\begin{equation}
J^TA^T=0_{2 \times 1}.
\label{eq:JA=0}
\end{equation}
\par
\noindent \textbf{Dynamics.}~
The robot's mass and rotational inertia around $C$ are $m$ and $I$.
In Fig.~\ref{fig:DDNMR}, $\Vec{F}$ is the virtual force dictating the movement patterns and $F_m$ and $F_n$ are its components in the robot's local frame $m-n$.
$\tau$ is a virtual torque for further robot behavior tuning.
In this paper, we set $\tau=0$.
The dynamic model of an NMR is~\cite{Fierro1995}
\begin{equation}
M\Ddot{q}+C\dot{q}=Bu-A^T\lambda_L,
\label{eq:Dynamics}
\end{equation}
where $M$, $C$, $B$, and $A$ are inertia, Coriolis and centrifugal, input transformation, and Pfaffian constraint matrices.
$\lambda_L$ is a Lagrange multiplier enforcing the constraints and 
\begin{equation}
u=\vec{F}=
\begin{bmatrix}  
  F_m \\
  F_n 
\end{bmatrix}.
\label{eq:u=[]}
\end{equation}
Multiplying \eqref{eq:Dynamics} in $J^T$ from left and using \eqref{eq:Aq=0} results in
\begin{equation}
J^TM\Ddot{q}+J^TC\dot{q}=J^TBu.
\label{eq:DynamicsReduced}
\end{equation}
Using \eqref{eq:J=[]} and its derivative $\Ddot{q}=J\dot{z}+\dot{J}z$, the dynamics reduces to
\begin{equation}
M_r\dot{z}+C_rz=B_ru,
\label{eq:DynamicsFinal}
\end{equation}
where $M_r=J^TMJ$, $C_r=J^TM\dot{J}+J^TCJ$, and $B_r=J^TB$.
For details on the matrices, see Appendix~\ref{sec:app:matrices}.
\par
\begin{remark}\label{remark:ParSel}
Since we are setting up a virtual force framework, the use of actual $m$, $I$, and $b$ is not required.
For example, selecting small $m$ and $I$ results in translational and rotational agility. 
However, we use the actual physical values so that the resulting virtual force matches the equivalent physical system, easing the interpretation of the results.
\end{remark}
The proposed framework applies the virtual forces using $u$ in \eqref{eq:DynamicsFinal}, obtains $z$ at each time step, and sends it as a command to the robot. 
Therefore, the robot reacts to virtual forces considering the kinematic constraints.
\subsection{SFM formulation}\label{sec:SFM:SFM}
\begin{figure}[t]
\centering
\includegraphics[width=0.48\textwidth]{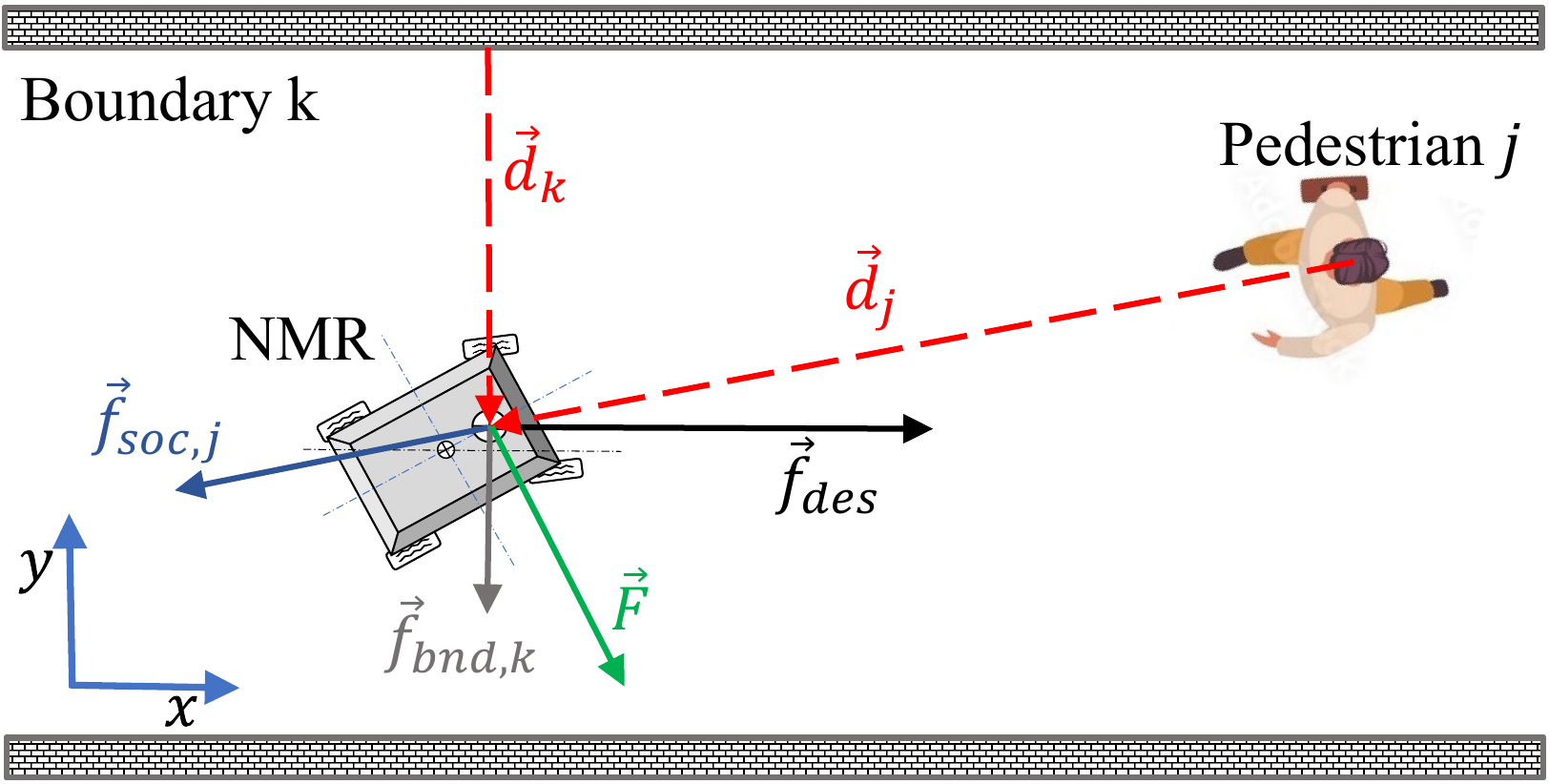}
\caption{An NMR moves in response to $\vec{F}$. 
$\vec{F}$ is the linear superposition of boundary forces, social forces, and desired force; see~\eqref{eq:SFMForce}.}
\label{fig:SFMNMR}
\end{figure}
In the proposed framework, $\Vec{F}$ drives the NMR and determines its motion.
Following the pedestrian SFM formulation \eqref{eq:SFMForce}, this section defines the social force $\Vec{f}_{soc,j}$, the boundary force $\vec{f}_{bnd,k}$, and the desired force $\Vec{f}_{des}$.
Since we study a single robot on a sidewalk, we drop the subscript $i$.
See Fig.~\ref{fig:SFMNMR} for a visual representation of the following.\par
$\Vec{f}_{soc,j}$ is the force applied to the robot to avoid pedestrians. 
The most common formulation for the social force is \cite{Helbing1995}
\begin{equation}
\Vec{f}_{soc,j}\left(\Vec{d}_j,\phi_j\right)=\Vec{g}\left(\Vec{d}_j\right)\gamma\left(\phi_j\right),
\label{eq:f_soc}
\end{equation}
where $\vec{d}_j$ is the relative position of the robot with respect to the pedestrian $j$ and $\phi_j$ is their encounter angle~\cite{Liu2022-T-ITS}.
The function $\vec{g}$ is
\begin{align}
\vec{g}(\vec{d}_j)&=\alpha \exp({\frac{-\|\vec{d}_j\|}{\beta}})\vec{e}_j.\label{eq:g}
\end{align}
The constants $\alpha$ and $\beta$ determine the $\vec{g}$'s magnitude and range, respectively.
$\vec{e}_j$ is $\vec{d}_j$'s unit vector.\par
The function $\gamma$ anisotropizes $\vec{g}$, making it more sensitive to pedestrians in front of the robot,
\begin{align}
\gamma\left(\phi_j\right)&=\lambda+(1-\lambda)\frac{1+\cos{\phi_j}}{2},\label{eq:gamma}\\
\cos{\left(\phi_j\right)}&=\frac{\vec{v}_j}{\|\vec{v}_j\|}\cdot\frac{-\vec{d}_j}{\|\vec{d}_j\|},
\label{eq:cosphi}
\end{align}
where $\lambda$ determines the anisotropy degree; $\lambda=1$ is isotropic and $\lambda=0$ is fully anisotropic.
$\vec{v}_j$ is the relative velocity of the agent $j$ with respect to the NMR.
\par
The robot reacts to forces from boundaries, such as curbs or walls, to avoid collisions with them.
The boundary force is
\begin{equation}
\vec{f}_{bnd,k}\left(\Vec{d}_k\right)=\alpha_{b}\exp{\left(-\frac{\|\vec{d}_k\|}{\beta_{b}}\right)}\vec{e}_k,
\label{eq:f_bnd}
\end{equation}
where $\vec{d}_k$, shown in Fig.~\ref{fig:SFMNMR}, connects the boundary to the point of force application and is normal to the boundary; $\vec{e}_k$ is its unit vector.
The constants $\alpha_b$ and $\beta_b$ determine the $\Vec{f}_{bnd,k}$'s magnitude and range, respectively.\par
The desired force is the force that drives the robot to its destination, letting it deliver the required service.
Following~\cite{Helbing1995},
\begin{align}
\vec{f}_{des}=m\frac{\vec{v}_{des}-\vec{v}}{\tau_d},
\label{eq:f_des}
\end{align}
where $\vec{v}_{des}$ and $\vec{v}$ are the robot's desired velocity and instantaneous velocity, respectively.
Parameter $\tau_d$ adjusts the robot's aggressiveness in reaching its desired velocity.\par
Most research assumes that $\vec{v}_{des}$ is constant both for pedestrians~\cite{Helbing1995} and for robots~\cite{Repiso2024}.
While the desired velocity is quasi-static for SFM pedestrians, it varies considerably for other agents~\cite{Liu2022-T-ITS}.
In our experience, the assumption leads to excessively aggressive robots in crowded areas.
Therefore, we consider varying $\vec{v}_{des}$~\cite{JAFARI2024-SIMPAT},
\begin{align}
\vec{v}_{des}&=\left(\exp{\left(-\frac{S}{\sigma}\right)}\left(v_{max}-v_{min}\right)+v_{min}\right)\vec{e}_{des},\label{eq:v_des}\\
S&=\sum_{j=1}^{N}\|\vec{f}_{soc,j}\|+\sum_{k=1}^{B}\|\vec{f}_{bnd,k}\|,
\label{eq:S}
\end{align}
where $v_{max}$ and $v_{min}$ are the maximum and minimum speeds occurring in free motion ($S=0$) and passing through a dense crowd ($S=\infty$), respectively.
Constant $\sigma$ adjusts the desired speed; $\vec{e}_{des}$ is the desired direction's unit.
This paper sets $v_{max}=2.5~m/s$ and $v_{min}=0~m/s$.
Thus, the SFM parameters that require calibration are 
\begin{align}
\Gamma_{SFM}=\left[\alpha, \beta, \Delta t, \lambda, \alpha_b, \beta_b, \tau_d, \sigma\right].
\label{eq:SFMParset}
\end{align}
\subsection{TSFM formulation}\label{sec:SFM:TSFM}
In this section, the goal is to develop an algorithm that takes pedestrians' comfort into account, rather than just avoiding them as a dynamic obstacle.
Since PTTC correlated with pedestrian comfort when interacting with e-scooters on sidewalks~\cite{JAFARI2024-2-NATCOM}, we propose to use PTTC, instead of $\vec{d}_j$, to determine the robot's avoidance behavior. 
In TSFM-based algorithm, we use~\eqref{eq:f_bnd} and~\eqref{eq:f_des} for $\vec{f}_{bnd,k}$ and $\vec{f}_{des}$ and update $\vec{f}_{soc,j}$.
In SFM, $\vec{f}_{soc,j}$ mainly depends on $\vec{d}_j$.
The relative velocity $\vec{v}_j$ appears only in the anisotropy adjustment~\eqref{eq:cosphi}.
\par
In TSFM, 
\begin{align}
\vec{f}_{soc,j}=\exp{\left(-\frac{T_j-T_{cr}}{\beta_T}\right)}\vec{e}_{j},
\label{eq:f_soc:TSFM}
\end{align}
where $T_j$ is the PTTC between the robot and the pedestrian $j$ \textcolor{black}{obtained using~\eqref{eq:tcj}}, $\beta_T$ is the range tuner, and $\vec{e}_{j}$ is $\vec{d}_j$'s unit vector.
$T_{cr}$ adjusts the force magnitude, like $\alpha$ and $\alpha_b$ in \eqref{eq:g} and \eqref{eq:f_bnd}.
Note that \eqref{eq:f_soc:TSFM} is equivalent to
\begin{equation}
\Vec{f}_{soc,j}=\alpha_{T}\exp{\left(-\frac{T_j}{\beta_{T}}\right)}\vec{u}_{j},
\label{eq:f_soc:TSFMeq}
\end{equation}
with $\alpha_T=\exp{\left(\frac{T_{cr}}{\beta_T}\right)}$.
TSFM parameters that require calibration are 
\begin{align}
\Gamma_{TSFM}=\left[T_{cr}, \beta_T, \alpha_b, \beta_b, \tau_d, \sigma\right].
\label{eq:TSFMParset}
\end{align}
\section{Stability Analysis}\label{sec:stability}
This section analyzes the stability of the system introduced in Section~\ref{sec:main}.
First, Theorem~\ref{theorem:asymptoticallyStable} assumes a constant $\vec{v}_{des}$ and proves the asymptotic stability using Lemmas~\ref{lemma:UUB1} and~\ref{lemma:UUB2}.
Then, Theorem~\ref{theorem:UUBStable} relaxes the assumption and proves that the system is Uniformly Ultimately Bounded (UUB).
Since the stability analysis relies on the passivity concept in the context of two-port networks, Theorems~\ref{theorem:Nonpassivity_f_des} and~\ref{theorem:wall_nonpassivity} prove that the robot attraction to a goal and the static obstacles/boundaries repulsion terms are boundedly non-passive.
\par
The key challenge in interaction stability analysis is the arbitrary pedestrian movements and difficulties in modeling their intentions.
A pedestrian may destabilize the system by injecting infinite energy into it.
Assumption~\ref{assumption:bounded_nonpassivity} circumvents the challenge by restricting the pedestrian's injected energy to the system and formalizes bounded non-passivity for the walkers.
Finally, Theorem~\ref{theorem:boundedness} proves the boundedness of the whole system relying on Theorems~\ref{theorem:Nonpassivity_f_des} and~\ref{theorem:wall_nonpassivity} and Assumption~\ref{assumption:bounded_nonpassivity}.
\par
First, we assume that there are no pedestrians and boundaries in the robot's vicinity and only $\vec{f}_{des}$ drives the robot,
\begin{align}
\sum_{j=1}^N\vec{f}_{soc,j}=\sum_{k=1}^B\vec{f}_{bnd,k}=0.
\end{align}
Therefore, using~\eqref{eq:DynamicsSimple} in Appendix~\ref{sec:app:matrices}, the robot's dynamics~\eqref{eq:DynamicsFinal} becomes
\begin{equation}
\left\{
\begin{aligned}
&m \dot{v} = -\frac{m}{\tau} \bigl(v - v_{des} \cos(\psi)\bigr) + mb \omega^2 \\
&(I + mb^2) \dot{\omega} = -\frac{mbv_{des}}{\tau_d} \sin(\psi) - mbv \omega
\end{aligned}
\right.
\label{eq:system}
\end{equation}
where $\dot{z}=[\dot{v}, \dot{\omega}]^T$, and $-\frac{\pi}{2}\leq\psi\leq \frac{\pi}{2}$ is the angle between the desired velocity $\vec{v}_{des}$ and the robot's velocity $\vec{v}$~($\cos{\psi\ge0}$).
\begin{remark}
    If $\cos{\psi}<0$, the term $(v - v_{des} \cos(\psi))$ in~\eqref{eq:system} changes to $(v + v_{des} \cos(\psi))$ during the global coordinate to local coordinate transformation. 
    Thus, the following discussions apply to the case of $\cos{\psi}<0$, too. 
\end{remark}
Theorem~\ref{theorem:asymptoticallyStable} proves the asymptotic stability of the system~\eqref{eq:system} in the sense of Lyapunov, assuming $v_{des}$ is constant.
First, we show the boundedness of the system for a constant desired speed using Lemmas~\ref{lemma:UUB1} and~\ref{lemma:UUB2}.
Then, the Lemmas support Theorem~\ref{theorem:asymptoticallyStable}.
\begin{lemma}\label{lemma:UUB1}
    Consider an NMR described by~\eqref{eq:system}.
    If $v_{des}$ is constant, the system is Ultimately Uniformly Bounded (UUB).
    Specifically, the speed $v$ enters the bounded region $[0, v_{des}\cos{\psi}]$ after some finite time and remains in that region thereafter.
\end{lemma}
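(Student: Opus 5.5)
The plan is to treat the first equation of \eqref{eq:system} as a scalar linear system in $v$, driven by the input $v_{des}\cos\psi$ and perturbed by the centrifugal term $mb\omega^2$. The boundedness claim should then follow from a comparison argument on $v$ alone. Before starting, I would note that $\tau$ in the first line should read $\tau_d$. I would also identify the natural ``target'' $v^\ast := v_{des}\cos\psi \ge 0$, which is well defined on the interval $|\psi|\le\pi/2$ assumed in the setup (the case $\cos\psi<0$ is covered by the preceding Remark).

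\textbf{Step 1: the rotational equation.} I would first show that $\omega$ is bounded and in fact decays. I would take $V_\omega=\tfrac12 (I+mb^2)\omega^2$, whose derivative is
\begin{equation*}
\dot V_\omega=-\frac{mbv_{des}}{\tau_d}\omega\sin\psi-mbv\omega^2 .
\end{equation*}
Whenever $v\ge 0$, the second term is dissipative. The first term is bounded by $\frac{mbv_{des}}{\tau_d}|\omega|$, so $\dot V_\omega<0$ outside a ball in $\omega$, and $\omega$ is UUB. To get decay, I would use that $\psi$ is itself driven by $\omega$ (since $\dot\psi=-\omega$ for constant $\vec v_{des}$). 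The pair $(\psi,\omega)$ then behaves like a damped pendulum with energy
\begin{equation*}
\tfrac12(I+mb^2)\omega^2+\frac{mbv_{des}}{\tau_d}(1-\cos\psi),
\end{equation*}
whose derivative is $-mbv\omega^2\le 0$. From this, $\omega\to0$, so $mb\omega^2$ becomes arbitrarily small after finite time.

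\textbf{Step 2: comparison for $v$.} I would set $e=v-v^\ast$ and write
\begin{equation*}
\dot e=-\frac{1}{\tau_d}e+b\omega^2-\dot v^\ast .
\end{equation*}
Taking $V=\tfrac12 e^2$ gives $\dot V\le -\frac{1}{\tau_d}e^2+|e|\,(b\omega^2+|\dot v^\ast|)$, where $\dot v^\ast=-v_{des}\sin\psi\,\dot\psi$ is bounded by Step 1. Hence $|e|$ enters the ball of radius $\tau_d\sup(b\omega^2+|\dot v^\ast|)$ in finite time and stays there. Since both perturbations vanish asymptotically by Step 1, this radius shrinks, and $v$ ultimately lies in (a neighbourhood of) $[0,v^\ast]$. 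For the lower bound $v\ge0$, I would argue directly: at $v=0$ the equation gives $\dot v=\frac{1}{\tau_d}v^\ast+b\omega^2\ge0$, so $\{v\ge0\}$ is forward invariant and attracting.

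\textbf{Main obstacle.} The hard part is the upper endpoint: the statement claims $v\le v_{des}\cos\psi$ exactly, yet $mb\omega^2\ge 0$ pushes $v$ above $v^\ast$, so the upper bound holds only up to an $O(b\,\omega^2\tau_d)$ margin. I would first try to absorb this margin into the UUB statement by interpreting the region as ``$[0,v^\ast]$ enlarged by an arbitrarily small $\epsilon$ after finite time'', using the decay of $\omega$. If that is not acceptable, the fallback is to show that the set $\{v\le v^\ast\}$ is reached when $\omega$ and $\dot\psi$ are small, and then argue invariance using the sign of $\dot e$ at the boundary, invoking that $\cos\psi$ increases as $\psi\to0$. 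That monotone approach of $\psi$ is the delicate point, since it requires ruling out oscillation of $\psi$, i.e.\ sufficient damping $mbv$ in the pendulum.
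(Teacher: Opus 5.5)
Your suspicion about the upper endpoint is right, and the problem is worse than a small margin. The statement as written is false, so the fallback in your last paragraph (reach $\{v\le v_{des}\cos\psi\}$ in finite time, then prove invariance) cannot succeed. Take $\psi(0)=0$, $\omega(0)=0$, $v(0)>v_{des}$. Since $\dot\omega=0$ at $(\psi,\omega)=(0,0)$, that pair stays at the origin, and the first equation gives $v(t)=v_{des}+(v(0)-v_{des})e^{-t/\tau_d}>v_{des}\cos\psi$ for every $t$, so the interval is never entered. Invariance fails too: on $e=0$ one has $\dot e=b\omega^2+v_{des}\omega\sin\psi$, which is positive for $\psi=0$, $\omega\neq0$. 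The paper uses a different route from yours. It works with the single function $V_1=\tfrac12 mv^2+\tfrac12(I+mb^2)\omega^2+\tfrac{mbv_{des}}{\tau_d}(1-\cos\psi)$, obtains $\dot V_1=-\tfrac{m}{\tau_d}v(v-v_{des}\cos\psi)$ (using $\dot\psi=\omega$), and concludes that $v$ is driven into $[0,v_{des}\cos\psi]$ and kept there. That inference is the broken step. $\dot V_1<0$ says only that $V_1$ decreases, not that the coordinate $v$ moves toward the interval; in the example, $V_1=\tfrac12mv^2$ decreases forever while $v$ stays above $v_{des}$. Moreover, $\{\dot V_1\ge0\}$ is unbounded in $\omega$, so not even the standard Lyapunov UUB argument (which needs $\dot V_1<0$ outside a compact set) applies.

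What is true is the asymptotic version: $v-v_{des}\cos\psi\to0$ with $v\ge0$, so for each $\epsilon>0$ eventually $0\le v\le v_{des}\cos\psi+\epsilon$. Your cascade route (pendulum energy for $(\psi,\omega)$, then a perturbed first-order filter for $e$) is a sound way to get it, and it is also what the paper's Theorem~1 really uses. Three repairs are needed.

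(i) With the paper's conventions, $\dot\psi=+\omega$. Only then does $E=\tfrac12(I+mb^2)\omega^2+\tfrac{mbv_{des}}{\tau_d}(1-\cos\psi)$ satisfy $\dot E=-mbv\omega^2$; with $\dot\psi=-\omega$ the cross terms add and $\psi=0$ becomes an inverted-pendulum equilibrium. Hence $\dot v^\ast=-v_{des}\omega\sin\psi$.

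(ii) $\dot E\le0$ does not by itself give $\omega\to0$; you need $v\ge v_0>0$ eventually, plus Barbalat's lemma. Assume $v(0)\ge0$ and $E(0)<mbv_{des}/\tau_d$. A bootstrap then gives $v\ge0$ and $\cos\psi\ge c:=1-\tau_dE(0)/(mbv_{des})>0$ for all $t$. The comparison $\dot v\ge\tfrac{1}{\tau_d}(v_{des}c-v)$ yields $v\ge v_{des}c/2$ after finite time, hence $\int_0^\infty\omega^2\,dt<\infty$, and bounded $\dot\omega$ gives $\omega\to0$. Without such a restriction $\psi$ can reach $\pm\pi/2$, where both the invariance of $\{v\ge0\}$ and the damping fail. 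Your $V_\omega$ ball argument also needs $v$ bounded away from zero, and it is superseded by $E$.

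(iii) In Step 2, $|e|$ enters in finite time any ball of radius strictly larger than $\tau_d\sup(b\omega^2+|\dot v^\ast|)$, not that ball itself. This is the same finite-time versus asymptotic distinction that sinks the original statement.
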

\begin{proof}~ See Appendix~\ref{sec:app:proof}.
\end{proof}
\begin{lemma}\label{lemma:UUB2}
    Consider an NMR described by~\eqref{eq:system}.
    If $v_{des}$ is constant, the system is Ultimately Uniformly Bounded (UUB).
    Specifically, the speed $v$ enters the bounded region $[v_{des}\cos{\psi}, v_{des}]$ after some finite time and remains in that region thereafter.
\end{lemma}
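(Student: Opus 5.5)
Lemma~\ref{lemma:UUB2} is the companion of Lemma~\ref{lemma:UUB1}: it treats trajectories whose speed starts above $v_{des}\cos\psi$. I would use the same scalar Lyapunov-like argument on the first equation of~\eqref{eq:system}. I treat $v_{des}$ as constant, as in the statement. I also use $\tau=\tau_d$: the $\tau$ in~\eqref{eq:system} is the desired-force time constant, not the virtual torque.

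The plan has two steps. First, show that the upper bound $v\le v_{des}$ is eventually reached and is invariant. Second, show that $v$ cannot fall below $v_{des}\cos\psi$ while it is still converging from above, or, if it does, it enters the region of Lemma~\ref{lemma:UUB1}. In that case the union $[0,v_{des}]$ is the ultimate bound.

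\textbf{Step 1 (upper bound).} Let $V=\tfrac12(v-v_{des})^2$ on the set $v>v_{des}$. Along~\eqref{eq:system},
\begin{equation*}
\dot V=(v-v_{des})\Bigl(-\tfrac{1}{\tau_d}\bigl(v-v_{des}\cos\psi\bigr)+b\omega^2\Bigr).
\end{equation*}
Because $v-v_{des}\cos\psi\ge v-v_{des}$, the drift term is at most $-\tfrac{2}{\tau_d}V$. The obstruction is the centrifugal term $b\omega^2$.

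To handle it, I would use the second equation of~\eqref{eq:system}. When $v>v_{des}\ge0$, the term $-mbv\omega$ damps $\omega$ at rate at least $mbv_{des}/(I+mb^2)$. The forcing $-\tfrac{mbv_{des}}{\tau_d}\sin\psi$ is bounded by $\tfrac{mbv_{des}}{\tau_d}$. Hence $|\omega|$ is ultimately bounded by $\bar\omega=\tfrac{1}{\tau_d}$, up to constants. Substituting this bound, $\dot v<0$ whenever $v>v_{des}\cos\psi+\tau_d b\bar\omega^2$.

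This yields a finite entry time into a neighbourhood of the upper bound, obtained by comparison with $\dot v\le -\tfrac{1}{\tau_d}(v-c)$. Invariance then follows because $\dot v\le0$ on the boundary.

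\textbf{Step 2 (lower bound and the main obstacle).} The main obstacle is closing the gap between the bound obtained in Step 1 and the clean interval $[v_{des}\cos\psi,v_{des}]$, and in particular controlling $b\omega^2$ and $\psi(t)$, which vary in time. My first attempt would exploit the coupling: as $\psi\to0$ the steering torque vanishes, so $\omega\to0$ and $b\omega^2\to0$.

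Concretely, I would show that $\psi$ is driven toward $0$. The relevant identity is $\dot\psi=-\omega$, since $\vec v_{des}$ is fixed. Combined with the $\omega$-equation, this is a damped-pendulum system
\begin{equation*}
(I+mb^2)\ddot\psi=-\tfrac{mbv_{des}}{\tau_d}\sin\psi-mbv\dot\psi .
\end{equation*}
Its energy $E=\tfrac12(I+mb^2)\dot\psi^2+\tfrac{mbv_{des}}{\tau_d}(1-\cos\psi)$ satisfies $\dot E=-mbv\dot\psi^2\le0$, using $v\ge0$ from Lemma~\ref{lemma:UUB1}. Thus $\omega$ is bounded by its initial energy and tends to zero in the ultimate sense.

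The centrifugal term is then an asymptotically vanishing perturbation. The speed equation gives $\dot v\ge-\tfrac1{\tau_d}(v-v_{des}\cos\psi)$, so $v$ cannot cross below $v_{des}\cos\psi$ from above. The upper bound from Step 1 shrinks to $v_{des}$. Together these give entry into $[v_{des}\cos\psi,v_{des}]$ in finite time and invariance thereafter, up to an arbitrarily small margin, which is the sense of UUB.
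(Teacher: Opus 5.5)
\textbf{There is a genuine gap in your Step~2.} Both invariance claims treat the band $[v_{des}\cos\psi,\,v_{des}]$ as if $\psi$ were frozen, but its endpoints move with $\psi(t)$.

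A sign first: your pendulum equation and $\dot E=-mbv\dot\psi^2$ require $\dot\psi=\omega$. This is also the convention the paper uses when it differentiates $1-\cos\psi$. With $\dot\psi=-\omega$ the steering torque in \eqref{eq:system} would be destabilizing.

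On the lower endpoint $v=v_{des}\cos\psi$ one has $\tfrac{d}{dt}(v-v_{des}\cos\psi)=\omega\,(b\omega+v_{des}\sin\psi)$. This is negative whenever the heading swings back toward $\vec{v}_{des}$ with $b|\omega|<v_{des}|\sin\psi|$. So $\dot v\ge-\tfrac{1}{\tau_d}(v-v_{des}\cos\psi)$ does not stop the moving endpoint from overtaking $v$.

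On the upper endpoint $v=v_{des}$ one has $\dot v=b\omega^2-\tfrac{v_{des}}{\tau_d}(1-\cos\psi)$. This is positive whenever $\psi$ passes through $0$ with $\omega\neq0$, so the Step~1 bound ``shrinking to $v_{des}$'' gives no invariance.

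In fact the exact statement cannot be proved. If $\psi(t_0)=0$ and $\omega(t_0)\neq0$ at any $t_0$ after the alleged entry time, the band is the single point $\{v_{des}\}$. Yet $\dot v(t_0)=b\omega(t_0)^2>0$, so $v$ leaves the band at once. With the parameters of Table~\ref{tab:simupar} ($v_{des}=v_{max}=2.5$), the linearized heading loop $(I+mb^2)\ddot\psi+mbv_{des}\dot\psi+\tfrac{mbv_{des}}{\tau_d}\psi=0$ is underdamped, since $(mbv_{des})^2\approx977<4(I+mb^2)\tfrac{mbv_{des}}{\tau_d}\approx3111$. Hence $\psi$ crosses zero with $\omega\neq0$ infinitely often along generic trajectories.

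Your ``up to an arbitrarily small margin'' is therefore a necessary change of the statement, not a technicality. Even that version is not yet established: $\dot E\le0$ bounds $\omega$ but does not give $\omega\to0$, because with only $v\ge0$ the damping $mbv$ may degenerate.

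\textbf{The paper's own route does not close this gap either.} It uses $V_2=\tfrac12m(v-v_{des})^2+E$. The $(\omega,\psi)$-part of $V_2$ is exactly your pendulum energy, and the $\sin\psi$ cross terms cancel the same way. It obtains $\dot V_2=-\tfrac{m}{\tau_d}(v-v_{des})(v-v_{des}\cos\psi)-mbv_{des}\omega^2$; the printed last line has $v$ where $v-v_{des}$ belongs. This is negative outside the band, but that yields at most ultimate boundedness in a sublevel set of $V_2$. It does not give entry into, and invariance of, a band with moving endpoints.

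What your ingredients do give is convergence. The steps are as follows.
\begin{enumerate}
\item Take $\bar v=\max\{v(0),\,v_{des}+2\tau_d bE(0)/(I+mb^2)\}$. This is your Step~1 with the energy bound $\omega^2\le2E(0)/(I+mb^2)$ in place of $\bar\omega$.
\item Under the paper's standing restriction $|\psi|\le\pi/2$, the set $\Omega=\{0\le v\le\bar v,\ E\le E(0)\}$ is compact and positively invariant. Here $v\ge0$ follows directly from $\dot v=\tfrac{v_{des}}{\tau_d}\cos\psi+b\omega^2\ge0$ at $v=0$, so you need not rely on Lemma~\ref{lemma:UUB1}.
\item Apply LaSalle to $E$ on $\Omega$. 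On an invariant subset of $\{v\omega^2=0\}$, $\omega$ cannot be nonzero at any instant: it would then be nonzero on an interval where $v\equiv0$, contradicting $\dot v=\tfrac{v_{des}}{\tau_d}\cos\psi+b\omega^2>0$.
\item So $\omega\equiv0$, hence $\sin\psi\equiv0$ and $\psi\equiv0$. Bounded complete orbits of $\dot v=-(v-v_{des})/\tau_d$ then force $v\equiv v_{des}$.
\end{enumerate}
Thus $(v,\omega,\psi)\to(v_{des},0,0)$, and the lemma holds only in the weakened form: for every $\epsilon>0$ there is $T$ with $v_{des}\cos\psi(t)-\epsilon\le v(t)\le v_{des}+\epsilon$ for all $t\ge T$.
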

\begin{proof}~ See Appendix~\ref{sec:app:proof}.
\end{proof}
\begin{theorem}\label{theorem:asymptoticallyStable}
    Consider an NMR described by~\eqref{eq:system}.
    If $v_{des}$ is constant, the system is asymptotically stable.
    Specifically, the speed $v$, the angular speed $\omega$, and the relative angle $\psi$ satisfy
    \[
    \lim_{t \to \infty} v(t) = v_{des}, \qquad 
    \lim_{t \to \infty} \omega(t) = 0, \qquad 
    \lim_{t \to \infty} \psi(t) = 0.
    \]
\end{theorem}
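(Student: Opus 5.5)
The plan is a cascade argument built on an energy-like function for the rotational subsystem, followed by an input-to-state argument for the speed. First I would make explicit the kinematics of the relative angle. Since $v_{des}$ and $\vec{e}_{des}$ are constant, $\psi$ changes only through the heading $\theta$, so $\dot{\psi}=\pm\omega$. I would fix the sign convention so that the restoring torque $-\frac{mbv_{des}}{\tau_d}\sin\psi$ in~\eqref{eq:system} opposes $\psi$, i.e.\ $\dot{\psi}=\omega$. I would also read $\tau$ in the first line of~\eqref{eq:system} as $\tau_d$.

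For the rotational part I would take
\begin{equation*}
V_r=\tfrac12 (I+mb^2)\,\omega^2+\frac{mbv_{des}}{\tau_d}\bigl(1-\cos\psi\bigr).
\end{equation*}
Along~\eqref{eq:system}, the $\sin\psi\,\omega$ terms cancel, giving
\begin{equation*}
\dot V_r=-mb\,v\,\omega^2 .
\end{equation*}
By Lemmas~\ref{lemma:UUB1} and~\ref{lemma:UUB2}, after a finite time $v\in[0,v_{des}]$, so $\dot V_r\le 0$. Hence $V_r$ is nonincreasing. Consequently $\omega$ is bounded and $\psi$ stays in a sublevel set of $1-\cos\psi$ inside $(-\pi/2,\pi/2)$. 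Combined with the bound on $v$, the whole state is bounded, and $\dot v,\dot\omega$ are bounded as well.

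Next I would integrate $\dot V_r$ to get $\int_{t_0}^\infty v\,\omega^2\,dt<\infty$. To conclude $\omega\to0$ via Barbalat's lemma I need $v$ to be eventually bounded below by some $\underline{v}>0$. This is the step I expect to be the main obstacle, because the coupling is non-autonomous and $v$ could a priori linger near zero, where the damping $-mbv\omega$ vanishes. I would argue as follows. Because $\psi$ stays in a compact subset of $(-\pi/2,\pi/2)$, we have $\cos\psi\ge c_0>0$. Then, whenever $v<\tfrac12 v_{des}c_0$, the first equation gives $\dot v\ge \frac{1}{\tau_d}\bigl(\tfrac12 v_{des}c_0\bigr)+b\omega^2>0$. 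So $v$ enters and remains in $[\tfrac12 v_{des}c_0,\,v_{des}]$ after finite time. With $v\ge\underline{v}>0$, we get $\int\omega^2<\infty$, and since $\omega$ is uniformly continuous (bounded $\dot\omega$), Barbalat gives $\omega\to0$. Applying Barbalat once more to $\dot\omega$ requires $\ddot\omega$ bounded, which follows from the boundedness of the state and of its derivatives. This gives $\dot\omega\to0$, and the second equation then forces $\sin\psi\to0$, i.e.\ $\psi\to0$ within the admissible range.

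Finally I would treat the speed equation as a stable linear system
\begin{equation*}
\dot v=-\tfrac1{\tau_d}v+w(t),\qquad w(t)=\tfrac{1}{\tau_d}v_{des}\cos\psi+b\omega^2 .
\end{equation*}
Its input satisfies $w(t)\to v_{des}/\tau_d$ because $\psi\to0$ and $\omega\to0$. The converging-input converging-state property of exponentially stable linear systems (variation of constants) then yields $v\to v_{des}$.

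For Lyapunov stability of the equilibrium $(v_{des},0,0)$, as opposed to mere convergence, I would combine the monotonicity of $V_r$, which keeps $(\omega,\psi)$ small when they start small, with the ISS bound of the linear $v$-subsystem. Together these give the standard $\epsilon$–$\delta$ estimate, completing asymptotic stability.
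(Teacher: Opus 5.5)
Your route differs from the paper's. The paper first uses Lemmas~\ref{lemma:UUB1} and~\ref{lemma:UUB2} to assert $v\to v_{des}\cos\psi$. Only then does it apply LaSalle to $V_2=\frac12 m(v-v_{des})^2+V_r$, whose derivative has a speed term of indefinite sign next to $-mbv_{des}\omega^2$. You drop the speed term, obtain the sign-definite identity $\dot V_r=-mbv\omega^2$ (correct, with $\dot\psi=\omega$ as the paper implicitly uses), and recover $v$ afterwards by a converging-input argument. That ordering avoids the paper's weakest step.

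However, the step you single out as the crux is only closed under an extra condition. From $V_r(t)\le V_r(t_0)$ you get $\cos\psi\ge 1-\frac{\tau_d V_r(t_0)}{mbv_{des}}$, which is a positive $c_0$ only if $V_r(t_0)<\frac{mbv_{des}}{\tau_d}$. Suppose the rotational energy at $t_0$ is larger, e.g.\ $\psi(t_0)=0$ and $\frac12(I+mb^2)\omega(t_0)^2\ge\frac{mbv_{des}}{\tau_d}$. Then the sublevel set no longer excludes $\psi=\pm\pi/2$, and the paper's standing restriction gives only $\cos\psi\ge0$. Your bound $v\ge\frac12 v_{des}c_0$ then becomes vacuous. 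As written, you prove convergence only from $\{V_r<\frac{mbv_{des}}{\tau_d}\}$, not for every admissible trajectory.

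The repair is to keep $v$ as a state. For constant $v_{des}$ the system $(v,\omega,\psi)$ is autonomous; the non-autonomy you worried about comes only from treating $v(t)$ as an exogenous coefficient. Under $\cos\psi\ge0$, the set $\{v\ge0\}$ is forward invariant, since at $v=0$ we have $\dot v=\frac{v_{des}}{\tau_d}\cos\psi+b\omega^2\ge0$. From then on $V_r$ bounds $\omega$, and $\dot v\le-\frac{1}{\tau_d}(v-v_{des})+b\,\omega_{\max}^2$ bounds $v$.

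LaSalle then places the $\omega$-limit set in the largest invariant subset of $\{v\omega^2=0\}$ within $\{v\ge0,\ \cos\psi\ge0\}$. On that subset $\omega\equiv0$: if $\omega\neq0$ at some instant, then $v\equiv0$ nearby, forcing $\dot v=\frac{v_{des}}{\tau_d}\cos\psi+b\omega^2=0$, which is impossible. Hence $\psi$ is constant there, and $\dot\omega=0$ forces $\sin\psi=0$. So $\omega,\psi\to0$ with no uniform lower bound on $v$ and no second Barbalat step, and your cascade step then gives $v\to v_{des}$.

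Two smaller points. First, do not rely on the lemmas for $v\le v_{des}$ after finite time. With $\omega=\psi=0$ and $v(0)>v_{des}$ one has $v(t)=v_{des}+(v(0)-v_{des})e^{-t/\tau_d}>v_{des}$ for all $t$, and $b\omega^2$ can push $v$ above $v_{des}$ anyway. You only need $v\ge0$ and some upper bound. Second, Lyapunov stability follows at once from the Jacobian at $(v_{des},0,0)$. Its eigenvalues are $-1/\tau_d$ and the roots of $(I+mb^2)s^2+mbv_{des}s+\frac{mbv_{des}}{\tau_d}$, all in the open left half-plane.
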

\begin{proof}~ See Appendix~\ref{sec:app:proof}.
\end{proof}
Theorem~\ref{theorem:asymptoticallyStable} proves the system's asymptotic stability in the absence of pedestrians and boundaries, i.e., constant desired velocity $\vec{v}_{des}$.
However, in this paper, for practical reasons, $v_{des}$ varies between $v_{min}$ and $v_{max}$ when interacting with other agents in the neighborhood; see~\eqref{eq:v_des}.
If $v_{des}$ is quasi-static or its variation rate is considerably slower than the NMR's states, Theorem~\ref{theorem:asymptoticallyStable} still applies.
Nevertheless, that may not be the case in actual interactions.
Theorem~\ref{theorem:UUBStable} analyzes the system with varying $v_{des}$.
\begin{theorem}\label{theorem:UUBStable}
    Consider an NMR described by~\eqref{eq:system} and bounded but time-varying $v_{des}$.
    The system is UUB, i.e., the speed $v$ and the angular speed $\omega$ eventually enter a bounded region and remain there.
\end{theorem}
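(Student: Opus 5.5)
The plan is to build a Lyapunov-like function for the reduced dynamics~\eqref{eq:system} that is the kinetic energy of the virtual system,
\[
V(v,\omega)=\tfrac12 m v^2+\tfrac12 (I+mb^2)\omega^2 ,
\]
and show that, outside a sufficiently large ball, $\dot V<0$ uniformly in time whenever $v_{des}(t)\in[v_{min},v_{max}]$. The bound on $v_{des}$ is available directly from~\eqref{eq:v_des}: since $S\ge 0$, we always have $v_{min}\le v_{des}\le v_{max}$. This holds regardless of how fast $v_{des}$ varies, which is what replaces the constancy assumption of Lemmas~\ref{lemma:UUB1},~\ref{lemma:UUB2} and Theorem~\ref{theorem:asymptoticallyStable}.

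The first step is to differentiate $V$ along~\eqref{eq:system}. The gyroscopic terms $mb\omega^2 v$ and $-mbv\omega^2$ cancel exactly, as expected from the skew structure of $C_r$ in~\eqref{eq:DynamicsFinal}. I will use the same time constant $\tau=\tau_d$ in both equations. What remains is
\[
\dot V=-\frac{m}{\tau_d}v^2+\frac{m}{\tau_d}v_{des}\cos\psi\,v-\frac{mb\,v_{des}}{\tau_d}\sin\psi\,\omega .
\]
Using $|\cos\psi|,|\sin\psi|\le1$ and $v_{des}\le v_{max}$, this gives
\[
\dot V\le-\frac{m}{\tau_d}|v|^2+\frac{m v_{max}}{\tau_d}|v|+\frac{mb v_{max}}{\tau_d}|\omega| .
\]

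The main obstacle is that $V$ dissipates only in $v$, not in $\omega$. The negative-definite term contains no $-\omega^2$, so the sign of $\dot V$ is not guaranteed for large $|\omega|$ with small $v$.

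I would handle this in two stages, mirroring the structure of Lemmas~\ref{lemma:UUB1} and~\ref{lemma:UUB2}.

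\textbf{Stage 1: bound $v$.} Treat $\omega$ as an input in the $v$-equation; the term $mb\omega^2\ge0$ only pushes $v$ upward. Then use $V$ to bound the total energy: $V$ can grow only through the bounded linear terms, so a comparison argument gives $\dot V\le -c_1 v^2 + c_2\sqrt{V}$. This shows $v$ is ultimately in $[0,\bar v]$ with $\bar v$ depending on $v_{max}$ and on the $\omega$-bound.

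\textbf{Stage 2: bound $\omega$.} Write the $\omega$-equation as
\[
(I+mb^2)\dot\omega=-mbv\,\omega+d(t),\qquad |d(t)|\le \frac{mb v_{max}}{\tau_d}.
\]
Once $v$ is ultimately bounded below by some positive $\underline v$, this is a linear system with damping rate at least $mb\underline v/(I+mb^2)$ driven by a bounded input. Standard input-to-state estimates then give $|\omega|$ ultimately below $\frac{v_{max}}{\tau_d\underline v}$ plus an arbitrary margin.

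The delicate point is securing $\underline v>0$, i.e.\ excluding $v\to0$ with $\omega$ large. I would argue that large $|\omega|$ forces $\dot v\ge mb\omega^2/m-(v-v_{des}\cos\psi)/\tau_d>0$, so $v$ grows, which in turn damps $\omega$. The region $\{v\text{ small},|\omega|\text{ large}\}$ is therefore not invariant, and trajectories leave it in finite time.

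Combining the two stages closes the loop. It yields a compact set $\Omega=\{(v,\omega): 0\le v\le\bar v,\ |\omega|\le\bar\omega\}$, with constants depending only on $m,I,b,\tau_d,v_{max}$. Every trajectory enters $\Omega$ in finite time, uniformly in the initial time, and remains there, which is the UUB claim.

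If the cross-coupling argument proves too loose, the fallback is to add a small cross term $\epsilon v\omega$ or $\epsilon\omega\sin\psi$ to $V$. This is meant to manufacture a $-\epsilon\omega^2$ contribution in $\dot V$ and give a direct Lyapunov UUB certificate.
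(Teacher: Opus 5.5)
There is a genuine gap. Your Stage~1 and Stage~2 depend on each other in a circle. The $v$-equation is driven by $b\omega^2$, so any $\bar v$ must come from an $\omega$-bound. Your Stage~2 $\omega$-bound in turn needs an ultimate lower bound $v\ge\underline v>0$.

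The estimate $\dot V\le -c_1v^2+c_2\sqrt V$ does not break this circle, because the dissipation acts only on $v$. Where $v$ is small and $|\omega|$ is large, $\dot V$ can be positive, and the inequality only limits $V$ to at most quadratic growth in $t$.

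Moreover, the lower bound $\underline v>0$ is false in general. Take $\vec e_{des}$ fixed, $v_{des}(t)\to v_{min}=0$, $\psi(0)=0$ and $\omega(0)=0$. Then $\{\omega=0,\psi=0\}$ is invariant and $v(t)\to0$. Here $\omega$ stays bounded, so the theorem is not contradicted, but your route through $\underline v$ is. The non-invariance remark does not rescue it: leaving $\{v\ \text{small},\,|\omega|\ \text{large}\}$ does not prevent re-entering it, and it gives no uniform ultimate bound.

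Your fallback also fails as stated. With $\epsilon v\omega$, the term $\epsilon\dot v\,\omega$ contains $\epsilon b\,\omega^3$, a sign-indefinite cubic that no quadratic dissipation can dominate. With $\epsilon\omega\sin\psi$, you bring in $\dot\psi$ and at best obtain a term proportional to $\omega^2\cos\psi$, which gives no damping as $\cos\psi\to0$.

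The missing idea is to extract $\omega$-damping from the gyroscopic term $mb\omega^2$ in the $v$-equation by a \emph{linear} correction in $v$, that is, by shifting the speed coordinate by a positive constant. The paper uses
\[
V_3=\tfrac12 m(v-v_{max})^2+\tfrac12(I+mb^2)\omega^2 .
\]
With this choice the gyroscopic terms no longer cancel completely, and
\[
\dot V_3=-\frac{m}{\tau_d}(v-v_{max})(v-v_{des}\cos\psi)-mb\,v_{max}\,\omega^2-\frac{mb\,v_{des}}{\tau_d}\,\omega\sin\psi .
\]
Since $V_3$ involves neither $v_{des}$ nor $\psi$, no $\dot v_{des}$ or $\dot\psi$ terms appear. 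Using $0\le v_{des}\le v_{max}$ gives, uniformly in $t$,
\[
\dot V_3\le-\frac{m}{\tau_d}v^2-mb\,v_{max}\,\omega^2+\frac{2mv_{max}}{\tau_d}|v|+\frac{mbv_{max}}{\tau_d}|\omega|+\frac{mv_{max}^2}{\tau_d} .
\]
So $\dot V_3<0$ outside a compact set, and the radially unbounded $V_3$ yields UUB directly, with no lower bound on $v$ needed. Equivalently, your $V$ minus $\epsilon m v$ works for any $\epsilon>0$; that linear term, not a bilinear one, is the correction you were looking for.
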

\begin{proof}~ See Appendix~\ref{sec:app:proof}.
\end{proof}
Next, we prove that the NMR under the applied desired force $\vec{f}_{des}$, in the presence of pedestrians and boundaries, can be modeled as a port-Hamiltonian network structure with bounded non-passivity (limited activity).
Equivalently, the desired force may inject only a limited amount of energy into the system.
\begin{theorem}\label{theorem:Nonpassivity_f_des}
    The dynamics of the NMR \eqref{eq:system} subjected to the desired force $\vec{f}_{des}$~\eqref{eq:f_des} and with varying $v_{des}$~\eqref{eq:v_des} admit a port-Hamiltonian representation with bounded non-passivity (limited activity).
    Considering $\vec{f}_{des}$ and $B_r^Tz$ as the port's input and output, respectively, 
    \begin{equation}
        \int_0^t z^T B_r \vec{f}_{des}\,d\tau \; \leq \; E_{max}^{des}
    \end{equation}
    for all $t\ge 0$, where $E_{max}^{des}<\infty$ is a finite upper bound on the injected energy.
\end{theorem}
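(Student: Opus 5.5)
The plan is to write the reduced dynamics \eqref{eq:DynamicsFinal} in port-Hamiltonian form, using the kinetic energy as the storage function. Then the power delivered through the port $(\vec{f}_{des}, B_r^Tz)$ splits into a dissipative part and a bounded source-like part, and the boundedness of the time-varying $v_{des}$ together with the UUB result of Theorem~\ref{theorem:UUBStable} gives the finite energy bound.

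\textbf{Port-Hamiltonian representation.} Take the Hamiltonian
\begin{equation*}
H(z)=\tfrac12 z^TM_rz=\tfrac12 mv^2+\tfrac12(I+mb^2)\omega^2 .
\end{equation*}
From the matrices in Appendix~\ref{sec:app:matrices}, the Coriolis term satisfies the skew-symmetry property $z^T(\dot M_r-2C_r)z=0$, and $\dot M_r=0$ here. The $mb\omega^2$ and $-mbv\omega$ terms in \eqref{eq:system} then cancel in $z^TC_rz$. This gives
\begin{equation*}
\dot H=z^TB_ru=(B_r^Tz)^T\vec{f}_{des}+(B_r^Tz)^T\Big(\sum_j\vec{f}_{soc,j}+\sum_k\vec{f}_{bnd,k}\Big),
\end{equation*}
so $(\vec{f}_{des},B_r^Tz)$ is a power port of the interconnected network. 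The representation claimed in the theorem is therefore exhibited with $J_{ph}=-(C_r-C_r^T)/2$ and no internal dissipation. It remains to bound $\int_0^t z^TB_r\vec{f}_{des}\,d\tau$.

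\textbf{Splitting the port power.} Substitute \eqref{eq:f_des}. Since $B_r^Tz$ is the velocity of the application point $P$, the port power equals $\vec{v}_P\cdot \tfrac{m}{\tau_d}(\vec{v}_{des}-\vec{v})$, whose leading term is
\begin{equation*}
\tfrac{m}{\tau_d}\big(v\,v_{des}\cos\psi - v^2\big)+\tfrac{m b}{\tau_d}\,(\ldots)\,\omega .
\end{equation*}
Complete the square:
\begin{equation*}
v v_{des}\cos\psi - v^2\le \tfrac14 v_{des}^2-\tfrac12 v^2+\ldots .
\end{equation*}
The port power is then at most $-\tfrac{m}{2\tau_d}\|v\|^2$ plus a term bounded by $\tfrac{m}{\tau_d}\big(\tfrac14 v_{max}^2+c\,|v_{des}|\,b|\omega|\big)$. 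Here \eqref{eq:v_des} guarantees $0\le v_{des}\le v_{max}$.

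\textbf{Main obstacle: the time integral.} A pointwise bound only gives energy growth linear in $t$, not a finite $E^{des}_{max}$. I would handle this as follows.

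First, the ideal attractor $\vec{f}_{des}$ with constant $v_{des}$ drives the robot to $v=v_{des}$, $\omega=0$ by Theorem~\ref{theorem:asymptoticallyStable}. In that regime the port power $\to \tfrac{m}{\tau_d}v_{des}\cdot(v_{des}-v)\to0$.

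Second, for time-varying $v_{des}$, use that the net energy injected by $\vec{f}_{des}$ is the change of a potential-like quantity. Write $\vec{v}_{des}-\vec{v}=-(\vec{v}-\vec{v}_{des})$ and integrate:
\begin{equation*}
\int_0^t z^TB_r\vec{f}_{des}\,d\tau=-\tfrac{m}{\tau_d}\int_0^t\|\vec v-\vec v_{des}\|^2d\tau+\tfrac{m}{\tau_d}\int_0^t \vec v_{des}\cdot(\vec v_{des}-\vec v)\,d\tau .
\end{equation*}
Using the dynamics itself, $m(\dot{\vec v})\approx \vec f$, the last integral can be rewritten through $\vec v_{des}\cdot\tfrac{d}{dt}(m\vec v)$. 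Integrating by parts yields boundary terms $m\,\vec v_{des}\cdot\vec v\big|_0^t$, which are bounded by $m v_{max}\sup|v|$ using Theorem~\ref{theorem:UUBStable}. It also yields a term $\int m\,\dot{\vec v}_{des}\cdot\vec v$.

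The last term is where I expect the difficulty. I would try to bound it by noting that $v_{des}$ in \eqref{eq:v_des} is a monotone function of $S$ with total range $v_{max}-v_{min}$. If the number of speed-up/slow-down episodes is finite, or one simply assumes bounded total variation of $v_{des}$, the term is at most $m\sup|v|\,\mathrm{TV}(v_{des})$. This gives
\begin{equation*}
E^{des}_{max}=m\sup|v|\big(2v_{max}+\mathrm{TV}(v_{des})\big)+H(z(0))<\infty .
\end{equation*}
If bounded variation cannot be justified, I would fall back on absorbing the cross term into the dissipative term $-\tfrac{m}{\tau_d}\int\|\vec v-\vec v_{des}\|^2$ via Young's inequality.
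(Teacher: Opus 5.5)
Your proposal has a genuine gap, and it comes from where you place the port. You put $\vec f_{soc,j}$ and $\vec f_{bnd,k}$ into $u$, which makes $\dot H$ the sum of all port powers, and then you try to bound the $\vec f_{des}$-port by itself. That step never closes. The integration by parts relies on $m\dot{\vec v}\approx\vec f_{des}$, which is false as soon as other forces act. You also end up adding hypotheses that are not in the statement: bounded total variation of $v_{des}$, or an integrability condition on $\dot v_{des}$ for the Young fallback. In fact, in this interconnected reading the inequality is false, so no such hypothesis can save it. In the interconnection only the \emph{sum} of the port integrals equals $H(z(t))-H(z(0))$, and individual ports can exchange unbounded energy. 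For instance, let a pedestrian walk straight ahead of the robot at constant speed and distance. Then $\omega=0$ and $\psi=0$, the repulsive force $f_s$ is constant, and $v_{des}$ is constant, so its total variation is zero. The robot can follow at a speed $v_0\in(0,v_{des})$ with $\frac{m}{\tau_d}(v_{des}-v_0)=f_s$. Then $z^TB_r\vec f_{des}=\frac{m}{\tau_d}v_0(v_{des}-v_0)>0$ is constant and the integral grows linearly. Assumption~\ref{assumption:bounded_nonpassivity} still holds, because the pedestrian port absorbs exactly this power. (Separately, your completed square is wrong: $vv_{des}\cos\psi-v^2\le\frac14v_{des}^2-\frac12v^2$ fails at $v=v_{des}$, $\psi=0$; the correct bound is $\frac12v_{des}^2-\frac12v^2$.)

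The missing idea is to read the statement as written. It concerns \eqref{eq:system}, in which $\vec f_{des}$ is the \emph{only} input: $v_{des}$ may vary, but the social and boundary forces do not enter $u$. Then the identity you already derived, with the Coriolis terms cancelling, reads $\dot H=z^TB_r\vec f_{des}$ exactly. The whole port power, not just a piece of it, is a time derivative. Hence $\int_0^t z^TB_r\vec f_{des}\,d\tau=H(z(t))-H(z(0))$. Theorem~\ref{theorem:UUBStable} bounds $v$ and $\omega$, and therefore $H$, so $E^{des}_{max}=\sup_{t\ge0}H(z(t))-H(z(0))<\infty$. This is exactly the paper's proof with $V_4=H$; no splitting, integration by parts, or variation bound is needed. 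Keep in mind that this bound holds along trajectories of the isolated system \eqref{eq:system}; by the example above it does not extend to the interconnection.
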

\begin{proof}~Consider the Lyapunov-like function
    \begin{equation}
        V_4=\frac{1}{2}mv^2+\frac{1}{2}(I+mb^2)\omega ^2
    \end{equation}
    with time derivative
    \begin{equation}
    \begin{aligned}
        \dot{V}_4&=-\frac{m}{\tau_d}v(v-v_{des}\cos{\psi})-\frac{mbv_{des}\sin{\psi}}{\tau_d}\omega\\
                 &=z^T B_r \vec{f}_{des}
    \end{aligned}
    \end{equation}
    Integrating both sides gives
    \begin{equation}
    \begin{aligned}
    \int_0^t z^T B_r \vec{f}_{des}\,d\tau = \int_0^t \dot{V}_4 \,d\tau = V_4(t)-V_4(0).
    \end{aligned}
    \end{equation}
    Considering the boundedness of $v$ and $\omega$ according to Theorem~\ref{theorem:UUBStable}, $V_4(t)$ is bounded and there exists a finite $E_{max}^{des}$ which satisfies
    \begin{equation}
        \int_0^t z^T B_r \vec{f}_{des}\,d\tau \; \leq \; E_{max}^{des},
    \end{equation}
    where $E_{max}^{des}=\sup_{t\ge 0} V_4(t)-V_4(0)$.
    Therefore, the port is non-passive and the injected energy is uniformly bounded.
\end{proof}
Similarly, Theorem~\ref{theorem:wall_nonpassivity} proves that the static obstacles/boundaries are boundedly non-passive in the virtual force space.
Analogous to the port-Hamiltonian/network passivity structure, the input and output are $\vec{f}_{bnd,k}$ and $B_r^Tz$, respectively.
Theorem~\ref{theorem:wall_nonpassivity} formalizes bounded non-passivity for all the boundaries and static obstacles in the robot's vicinity and proves limited activity.
\begin{theorem}
\label{theorem:wall_nonpassivity}
    Consider an NMR described by~\eqref{eq:system} and input $u=\vec{f}_{bnd,k}$.
    For all $t\ge0$ and for each boundary or static obstacle, $k=1:B$ in the form of \eqref{eq:f_bnd},
    \begin{equation}
        \int_0^t z^TB_r\vec{f}_{bnd,k}\;d\tau \leq E_{max,k}^{bnd},
        \label{eq:stability_bnd_def}
    \end{equation}
    where $E_{max,k}^{bnd}=\alpha_b\beta_b$.\\
    Hence, the total energy a boundary injects into the system up to any time $t$ is upper-bounded, and the boundaries are boundedly non-passive.
\end{theorem}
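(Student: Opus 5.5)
The plan is to treat the boundary force as the gradient of a scalar potential. The integral then becomes a potential difference, and the bound follows from the range of that potential.

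\textbf{Step 1: interpret the port output.} $B_r^Tz$ is the velocity of the force application point $P$, expressed in the local frame $m$--$n$. With $u=[F_m,F_n]^T$, the product $z^TB_ru$ is the mechanical power delivered by $\vec{f}_{bnd,k}$ at $P$. Concretely, $\dot{x}_P\cdot u = vF_m + b\omega F_n$, which agrees with the $B_r$ structure seen in the proof of Theorem~\ref{theorem:Nonpassivity_f_des}. So I will verify $z^TB_r\vec{f}_{bnd,k}=\vec{f}_{bnd,k}\cdot \dot{\vec{p}}_P$ from the matrices in Appendix~\ref{sec:app:matrices}.

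\textbf{Step 2: write the boundary force as a gradient.} Let $\vec{d}_k$ be the normal vector from the static boundary $k$ to $P$, and set $s_k=\|\vec{d}_k\|$. Because the boundary is static and $\vec{d}_k$ is normal to it, $\dot{s}_k=\vec{e}_k\cdot\dot{\vec{p}}_P$. For a straight wall this is exact; for a general static obstacle it holds at the closest point by the envelope (first-variation) argument, since the foot point slides tangentially. Define
\[
U_k(s_k)=\alpha_b\beta_b\exp\!\left(-\frac{s_k}{\beta_b}\right).
\]
Then $\vec{f}_{bnd,k}=-\nabla_{P}U_k$, and therefore
\[
z^TB_r\vec{f}_{bnd,k}=\alpha_b e^{-s_k/\beta_b}\dot{s}_k=-\frac{d}{dt}U_k.
\]

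\textbf{Step 3: integrate.} This gives
\[
\int_0^t z^TB_r\vec{f}_{bnd,k}\,d\tau=U_k(s_k(0))-U_k(s_k(t)).
\]
Since $s_k\ge 0$, we have $0<U_k\le\alpha_b\beta_b$. Hence the integral is at most $U_k(s_k(0))\le\alpha_b\beta_b=E^{bnd}_{max,k}$, uniformly in $t$. In the worst case the robot starts touching the wall and moves away to infinity. Summing over $k$ gives the aggregate bound $\sum_k\alpha_b\beta_b$.

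\textbf{Main obstacle.} I expect the difficulty to be Step 2 for non-planar obstacles. There the map from $P$ to the normal distance $s_k$ may be nonsmooth, where the closest point jumps, or the normal direction may rotate as the robot moves. I would first handle straight boundaries, which are the sidewalk curbs and walls of interest. I would then extend to convex static obstacles, where the distance function is $C^1$ outside the obstacle with gradient $\vec{e}_k$. For nonconvex obstacles, $s_k$ is still Lipschitz and the identity holds almost everywhere, which suffices for the integral. A secondary check is that the Coriolis terms do no work, so that only the port power enters; this is consistent with the energy identity used for $V_4$ in Theorem~\ref{theorem:Nonpassivity_f_des}.
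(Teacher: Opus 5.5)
Your proposal is correct and follows the same route as the paper. Both identify $z^TB_r\vec f_{bnd,k}$ with the power $\dot y\, f_{bnd,k}(y)$ at the application point $P$, change variables to the wall distance, and bound the exponential integral by $\int_0^\infty \alpha_b e^{-y/\beta_b}\,dy=\alpha_b\beta_b$; your potential-difference form $U_k(s_k(0))-U_k(s_k(t))\le U_k(s_k(0))\le\alpha_b\beta_b$ is slightly tidier than the paper's, which writes the upper limit as $y_\infty$ rather than $y(t)$ and appeals to a ``wider integration range.''
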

\begin{proof}~Let's define $y=\|\vec{d}_k\|$ as the distance from the point of force application $P$ from the static obstacle, or boundary; see Fig.~\ref{fig:DDNMR} and ~\ref{fig:SFMNMR}. 
    In addition, $f_m$ and $f_n$ are the components of $\vec{f}_{bnd,k}(y)$ in the local coordinates.
    Therefore,
    \begin{equation}
        z^TB_r\vec{f}_{bnd,k}=[v\;b\omega]\cdot\begin{bmatrix} f_m \\ f_n \\ \end{bmatrix}=\vec{f}_{bnd,k}\cdot \vec{v}=\dot{y}f_{bnd,k}(y),
        \label{eq:stability_bnd_proof1}
    \end{equation}
    where $\vec{v}$ is the robot velocity vector at point $P$ and $\dot{y}$ is its projection onto $\vec{f}_{bnd,k}$.
    Since $dy=\dot{y}dt$, ~\eqref{eq:stability_bnd_proof1} yields
    \begin{equation}
        \int_0^t z^TB_r\vec{f}_{bnd,k}\;d\tau = \int_0^t\dot{y}f_{bnd,k}\;d\tau=\int_{y_0}^{y_\infty} f_{bnd,k}(y)\;dy,
        \label{eq:stability_bnd_proof2}
    \end{equation}
    with $y_0$ and $y_\infty$ being the distances to the boundary $k$ at $t=0$ and $t=\infty$, respectively.
    Replacing~\eqref{eq:f_bnd} and considering a wider integration range for the always positive and monotonic integral gives
    \begin{equation}
        \int_0^t z^TB_r\vec{f}_{bnd,k}\;d\tau \leq \int_0^\infty\alpha_b e^{-\frac{y}{\beta_b}}\;dy,
        \label{eq:stability_bnd_proof3}
    \end{equation}
    and therefore,
    \begin{equation}
        \int_0^t z^TB_r\vec{f}_{bnd,k}\;d\tau \leq E_{max,k}^{bnd},
        \label{eq:stability_bnd_proof4}
    \end{equation}
    with $E_{max,k}^{bnd}=\alpha_b\beta_b$, which completes the proof.
\end{proof}
Since the pedestrians are not static, we assume that they may be non-passive, but they are not destabilizingly active.
Similar to Theorem~\ref{theorem:wall_nonpassivity}, the port's input and output are $\vec{f}_{soc,j}$ and $B_r^Tz$, respectively.
Assumption~\ref{assumption:bounded_nonpassivity} formalizes the bounded non-passivity concept for all pedestrians in the robot's vicinity.
The assumption is similar to the bounded non-passivity of the leader in teleoperation literature~\cite{Guo2025}.
\begin{assumption}\label{assumption:bounded_nonpassivity}
    There exists a constant $E_{\max,j}^{ped}\ge0$ for each pedestrian $j=1:N$ such that, for all $t\ge0$
    \begin{equation}
        \int_0^t z^TB_r\vec{f}_{soc,j}d\tau \leq E_{\max,j}^{ped}.
        \label{eq:stability_NonPassive_define}
    \end{equation}
    Equivalently, the total energy the pedestrian supplies to the system up to any time $t$ is uniformly bounded.   
\end{assumption}
\begin{theorem}\label{theorem:boundedness}
Consider the NMR dynamics~\eqref{eq:DynamicsFinal} with input described by
\begin{equation}
u=\vec{F}=\sum_{j=1}^{N}\vec{f}_{soc,j}+\sum_{k=1}^{B}\vec{f}_{bnd,k}+\vec{f}_{des}\;,
\label{eq:TotalForce}
\end{equation}
and under Assumption~\ref{assumption:bounded_nonpassivity}.
The Lyapunov function
\begin{equation}\label{eq:stability_V_define}
    V = \tfrac{1}{2} z^T M_r z
\end{equation}
satisfies
\begin{equation}
    V(t) \;\leq\; V(0) + E, \qquad \forall t \geq 0,
\end{equation}
with
\begin{equation}
    E=E_{max}^{des} + \sum_{j=1}^N E_{\max,j}^{ped}+ \sum_{k=1}^B E_{max,k}^{bnd}.
\end{equation}
Consequently, the state $z$ is uniformly bounded,
\begin{equation}
    \|z(t)\|^2 \;\leq\; \frac{2}{\lambda_{\min}(M_r)}\Big(V(0)+E\Big).
\end{equation}
\end{theorem}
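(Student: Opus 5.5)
We will use the standard energy (storage-function) argument for port-Hamiltonian systems. We differentiate $V=\tfrac12 z^TM_rz$ along~\eqref{eq:DynamicsFinal}, show that the Coriolis/centrifugal contribution does no work, and split the remaining supplied power according to the superposition~\eqref{eq:TotalForce}. Each of the three port integrals is then bounded by Theorem~\ref{theorem:Nonpassivity_f_des}, Theorem~\ref{theorem:wall_nonpassivity}, and Assumption~\ref{assumption:bounded_nonpassivity}, respectively.

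\textbf{Step 1 (power balance).} We compute
\begin{equation*}
\dot V = z^TM_r\dot z+\tfrac12 z^T\dot M_r z = z^T\big(B_ru - C_rz\big)+\tfrac12 z^T\dot M_r z .
\end{equation*}
With the matrices of Appendix~\ref{sec:app:matrices} for the differential-drive robot, $M_r=\mathrm{diag}(m,\,I+mb^2)$ is constant, so $\dot M_r=0$. The centrifugal term $C_r$ has the skew form coupling $v$ and $\omega$ through $mb\omega$; this is exactly the $mb\omega^2$ and $-mbv\omega$ pair visible in~\eqref{eq:system}. Hence $z^TC_rz=mb\omega^2 v - mbv\omega^2=0$, i.e.\ $\dot M_r-2C_r$ is skew-symmetric in the reduced coordinates. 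This gives the lossless identity $\dot V=z^TB_ru$, and the Lyapunov function is a pure energy storage whose only sources are the virtual ports.

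\textbf{Step 2 (port splitting and integration).} Substituting~\eqref{eq:TotalForce} and using linearity,
\begin{equation*}
V(t)-V(0)=\int_0^t z^TB_r\vec f_{des}\,d\tau+\sum_{j=1}^N\int_0^t z^TB_r\vec f_{soc,j}\,d\tau+\sum_{k=1}^B\int_0^t z^TB_r\vec f_{bnd,k}\,d\tau .
\end{equation*}
We bound each term:
\begin{itemize}
\item the first by $E_{max}^{des}$ (Theorem~\ref{theorem:Nonpassivity_f_des});
\item the pedestrian terms by $E^{ped}_{\max,j}$ (Assumption~\ref{assumption:bounded_nonpassivity});
\item the boundary terms by $E^{bnd}_{max,k}=\alpha_b\beta_b$ (Theorem~\ref{theorem:wall_nonpassivity}).
\end{itemize}
Summing yields $V(t)\le V(0)+E$ for all $t\ge0$. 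The uniform bound on $z$ then follows from the Rayleigh inequality $V\ge\tfrac12\lambda_{\min}(M_r)\|z\|^2$, valid since $M_r$ is positive definite ($m>0$, $I+mb^2>0$).

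\textbf{Main obstacle.} The delicate point is that the port bounds were established in a slightly different setting. Theorem~\ref{theorem:Nonpassivity_f_des} was derived along trajectories of~\eqref{eq:system}, where only $\vec f_{des}$ acts, and it relied on the boundedness from Theorem~\ref{theorem:UUBStable}. Here, by contrast, the trajectory is driven by the full force. I would handle this by reading each bound as a property of the port along \emph{any} admissible trajectory. For boundaries this is clean: the proof of Theorem~\ref{theorem:wall_nonpassivity} is a path integral of a conservative exponential potential and is trajectory-independent. For $\vec f_{des}$ I would first try to argue directly that $\vec f_{des}$ is dissipative for large speeds: $z^TB_r\vec f_{des}\le -\tfrac{m}{\tau_d}(v^2-v\,v_{\max})$ up to the bounded $\omega$-term, so positive injection only occurs in a bounded speed set. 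If a trajectory-independent constant is not obtainable, I would instead state the result with $E_{max}^{des}$ interpreted as the supremum along the actual closed-loop trajectory, as the theorem implicitly does.
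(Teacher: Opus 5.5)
Your proposal is correct and follows the paper's proof essentially step for step. The paper likewise differentiates $V=\tfrac12 z^TM_rz$, uses skew-symmetry to reduce $\dot V$ to $z^TB_r\vec F$, splits the power by superposition, bounds the three port integrals via Theorem~\ref{theorem:Nonpassivity_f_des}, Assumption~\ref{assumption:bounded_nonpassivity}, and Theorem~\ref{theorem:wall_nonpassivity}, and finishes with $V\ge\tfrac12\lambda_{\min}(M_r)\|z\|^2$; your explicit check that $M_r$ is constant and $z^TC_rz=0$ just makes the paper's one-line skew-symmetry appeal concrete.

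The caveat you raise is a real one, and the paper's proof does not resolve it either. The paper cites Theorem~\ref{theorem:Nonpassivity_f_des} directly, but that bound rests on $\dot V_4=z^TB_r\vec f_{des}$ and Theorem~\ref{theorem:UUBStable}, both valid only along the $\vec f_{des}$-only dynamics~\eqref{eq:system}. Your first suggested repair would not close this gap on its own. The $\omega$-part $-\frac{mbv_{des}}{\tau_d}\omega\sin\psi$ of $z^TB_r\vec f_{des}$ is not a priori bounded, and even a pointwise-bounded positive power can integrate to something unbounded over $[0,t]$.
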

\begin{proof}
We select the Lyapunov candidate as~\eqref{eq:stability_V_define}. 
Differentiation and the skew-symmetry property yield
\begin{align}
\dot{V}=z^T(-C_rz+B_r\vec{F})+\frac{1}{2}z^T\dot{M}z=z^TB_r\vec{F},
\label{eq:stability_V_dot_1}
\end{align}
or 
\begin{align}
\dot{V}=z^TB_r\vec{f}_{des}+z^TB_r\sum_{j=1}^N\vec{f}_{soc,j}+z^TB_r\sum_{k=1}^B\vec{f}_{bnd,k}.
\label{eq:stability_V_dot_general}
\end{align}
Integrating~\eqref{eq:stability_V_dot_general} from $0$ to $t$ and using Theorem~\ref{theorem:Nonpassivity_f_des}, Theorem~\ref{theorem:wall_nonpassivity}, and Assumption~\ref{assumption:bounded_nonpassivity} gives
\begin{equation}
    \begin{aligned}
        V(t)-V(0)&=\int_0^tz^TB_r\vec{f}_{des}\\
        &+\int_0^tz^TB_r\sum_{j=1}^N\vec{f}_{soc,j}\;d\tau+\int_0^tz^TB_r\sum_{k=1}^B\vec{f}_{bnd,k}\;d\tau,\\
        &=\int_0^tz^TB_r\vec{f}_{des}+\sum_{j=1}^N\int_0^tz^TB_r\vec{f}_{soc,j}d\tau\\
        &+\sum_{k=1}^B\int_0^tz^TB_r\vec{f}_{bnd,k}d\tau\leq E,
    \end{aligned}
\end{equation}
with
\begin{equation}
    E=E_{max}^{des} + \sum_{j=1}^N E_{\max,j}^{ped}+ \sum_{k=1}^B E_{max,k}^{bnd}.
\end{equation}
Therefore,
\begin{equation}
    V\leq V(0)+E.
    \label{eq:stability_proof1}
\end{equation}
Regarding the second part, since
\begin{equation}
    V=\frac{1}{2}z^TM_rz\geq \frac{1}{2}\lambda_{min}\|z\|^2,
    \label{eq:stability_proof2}
\end{equation}
rearranging~\eqref{eq:stability_proof2} results in
\begin{equation}
     \|z\|^2\leq \frac{2(V(0)+E)}{\lambda_{min}},
\end{equation}
which completes the proof.
\end{proof}
Therefore, the NMR~\eqref{eq:DynamicsFinal} is uniformly bounded with the defined social forces in Section~\ref{sec:main}.
\section{Parameter Calibration}\label{sec:cal}
\textcolor{black}{This section calibrates the models' parameters through simulations.
We sweep a range of selected parameters during hundreds of simulation runs and evaluate a cost function for each trial.}
Table~\ref{tab:simupar} presents the optimized parameters for the models.
\par
Consider a scenario with an NMR and a pedestrian facing each other on a sidewalk with a certain width.
In each trial, NMR and the pedestrian move towards each other and, in a bilateral cooperative interaction, avoid and pass each other, as shown in Fig.~\ref{fig:SFMNMR}.
The NMR moves using SFM with $\Gamma_{SFM}$ in half of the runs and TSFM with $\Gamma_{TSFM}$ in the other half.
The pedestrian movement model is SFM used in~\cite{Liu2022-T-ITS}.\par
\textcolor{black}{Two normalized indices evaluate each parameter set's performance in each trial: comfort metric $I_t$ and speed metric $I_v$.
Pedestrian comfort is the primary goal.
Hence, the final parameter set for each model must realize a comfortable interaction.
Since $T_p$, the minimum of $t_{c,j}$ in each trial, correlated with comfort in pedestrian interaction with e-scooters~\cite{JAFARI2024-2-NATCOM}, $I_t$ is
\begin{align}
I_t=1-\exp{\left(-T_p\right)}.
\label{eq:I_t}
\end{align}
where $I_t=0$ is the most uncomfortable and $I_t=1$ is the ideal outcome.}
\par
\textcolor{black}{However, maximizing $I_t$ is not enough, since there are trivial solutions, for example, when NMR is at a standstill. 
NMR must maximize pedestrian comfort while providing its intended service. 
Therefore, we normalize the robot's speed near the pedestrian, inside an influence zone $\mathcal{I}_z$, to define a secondary metric $I_v$,
\begin{align}
v_{vic}&=\frac{1}{t_z} \int_{\mathcal{I}_z} v \, dt, \label{eq:v_vic}\\
I_v&= \frac{v_{vic}}{v_{max}},
\label{eq:I_v}
\end{align}
where $v$ and $v_{max}$ are the robot's instantaneous and maximum speeds, respectively.
$t_z$ is the duration the robot stays inside $\mathcal{I}_z$.
$I_v=0$ shows a perfect pause, and $I_v=1$ indicates movement with maximum speed all the time.}
\par
Following~\cite{Liu2022-T-ITS}, the influence zone is five meters in front of the pedestrian up to two meters after the robot passes the pedestrian.
The choice is a practical decision.
Selecting a smaller influence zone leads to the minimum speed dominating the metric rather than the average speed. 
This is undesirable as the overall service performance depends on the average speed, not the minimum.
On the other hand, selecting broader zones causes the robots' free motion, unaffected by the pedestrians, to be counted in the metric evaluation.
The presence of free motion at maximum speed in the averaging reduces the impact of the speed drop period on the metric, making it less representative of the interaction. \par
\textcolor{black}{From the SFM and TSFM parameter sets, \eqref{eq:SFMParset} and \eqref{eq:TSFMParset}, the calibration process sweeps the practical ranges of $\Gamma_{SFM}^{cal}=\left[\alpha, \beta, \sigma\right]$ and $\Gamma_{TSFM}^{cal}=\left[T_{cr}, \beta_T, \sigma\right]$, respectively.
The rest of the parameters are set according to Table~\ref{tab:simupar}. 
The sweep looks for the sets of parameters that maximize $I_Q=I_t+I_v$.}
During the search, the algorithm ignores the maximum $ I_Q$ values that occur with very low values of either $I_t$ or $I_v$.
The measure discards extreme cost functions.
They disproportionately favor one metric while neglecting the other and still maximize the summation.
An example of the extreme case is when the robot moves directly to its destination at maximum speed and leaves the avoidance maneuver to the pedestrian.
In summary, the parameter sweep looking for maximum $I_Q$ has constrains on minimum $I_t$ and $I_v$, i.e., $I_v>0.4$ and $I_t>0.65$.
\par
A sample simulation trial is described in Appendix~\ref{sec:app:cal}.
We repeat the trial shown in Fig.~\ref{fig:TSFMsimu} using the parameters shown in Table~\ref{tab:simupar} for both SFM and TSFM and evaluate $I_Q$.
The preset parameters and parameters from the robot dynamics are the same for both models.
Table~\ref{tab:simupar} contains the preset constants, parameters from the dynamics, the results of the calibration process, and the evaluated cost function for the final local maximum.
\par
\textcolor{black}{Since both SFM and TSFM are calibrated to maximize a common cost function, Table~\ref{tab:simupar} fairly compares their performance during the simulations.}
The simulation results indicate that the optimized TSFM performs slightly better in the comfort metric and slightly worse in the speed metric compared to the optimized SFM.
The experiments implement the same set of calibrated parameters for fair comparison.
Section~\ref{sec:res:kin}, specifically Fig.~\ref{fig:boxes}(c) and (d), shows that the gap between the two models is much wider in practice.
\begin{table}
\renewcommand\arraystretch{1.1}
\centering
\caption{SFM and TSFM parameters used in experiments. 
Preset parameters are set based on our previous experience and trial and error.
Parameters from robot dynamics are set to actual values, causing virtual forces that match physical interpretations; see Remark~\ref{remark:ParSel}.
Optimized parameters are the results of parameter sweeps in iterative simulations.
The optimal parameters correspond to the mentioned cost functions.}
\begin{tabular}{||b{0.21\textwidth}|
                >{\centering\arraybackslash}b{0.07\textwidth}|
                >{\centering\arraybackslash}b{0.04\textwidth}|
                >{\centering\arraybackslash}b{0.04\textwidth}||}
\hline
\textbf{Name} & \textbf{Symbol (units)} & \textbf{SFM} & \textbf{TSFM} \\ 
\hline\hline
\multicolumn{4}{||l||}{\textbf{Preset parameters}} \\ 
\hline
Relaxation time & $\tau_d~(s)$ & 0.3 & 0.3 \\ 
\hline
Isotropic constant & $\lambda~(-)$ & 0.04 & -- \\ 
\hline
Boundary force magnitude const. & $\alpha_b~(N)$ & 300 & 300 \\
\hline
Boundary force range const. & $\beta_b~(m)$ & 0.5 & 0.5 \\
\hline
\multicolumn{4}{||l||}{\textbf{Parameters from robot dynamics}} \\
\hline
Mass & $m~(kg)$ & 25 & 25 \\
\hline
Inertia & $I~(kg.m^2)$ & 1.216 & 1.216 \\ 
\hline
Torque arm & $b~(m)$ & 0.5 & 0.5 \\ 
\hline
\multicolumn{4}{||l||}{\textbf{Optimized parameters}} \\
\hline
Force magnitude constant & $\alpha~(N)$ & 125 & -- \\
\hline
Force range constant & $\beta~(m)$ & 1.9 & -- \\ 
\hline
Desired speed adjuster & $\sigma~(N)$ & 140 & 140 \\
\hline
PTTC shift & $T_{cr}~(s)$ & -- & 2.5 \\ 
\hline
Force range constant & $\beta_T~(s)$ & -- & 0.3 \\ 
\hline
\multicolumn{4}{||l||}{\textbf{Optimization cost functions}} \\
\hline
Comfort metric & $I_t~(-)$ & 0.67 & 0.7 \\ 
\hline
Speed metric & $I_v~(-)$ & 0.51 & 0.46 \\ 
\hline
Cost function & $I_Q~(-)$ & 1.18 & 1.16 \\
\hline
\end{tabular}
\label{tab:simupar}
\end{table}
\section{Experiments}\label{sec:exp}
This section explains the experiment setup, design, and procedure.
Section~\ref{sec:exp:setup} details the NMR, the hardware, and the software used in the trials.
Section~\ref{sec:exp:DOE} explains the design of the experiments, the scenario, and the methods.
In addition, for the interested reader, Appendix~\ref{sec:app:Trial} describes a sample trial and extracts the critical variables from the trial.
Moreover, the Institutional Review Board (IRB) approved all experimental procedures.
All participants signed informed consents. \par
\subsection{Experiment setup}\label{sec:exp:setup}
\begin{figure*}
    \centering
    \subfigure[]{
        \includegraphics[width=0.26\textwidth]{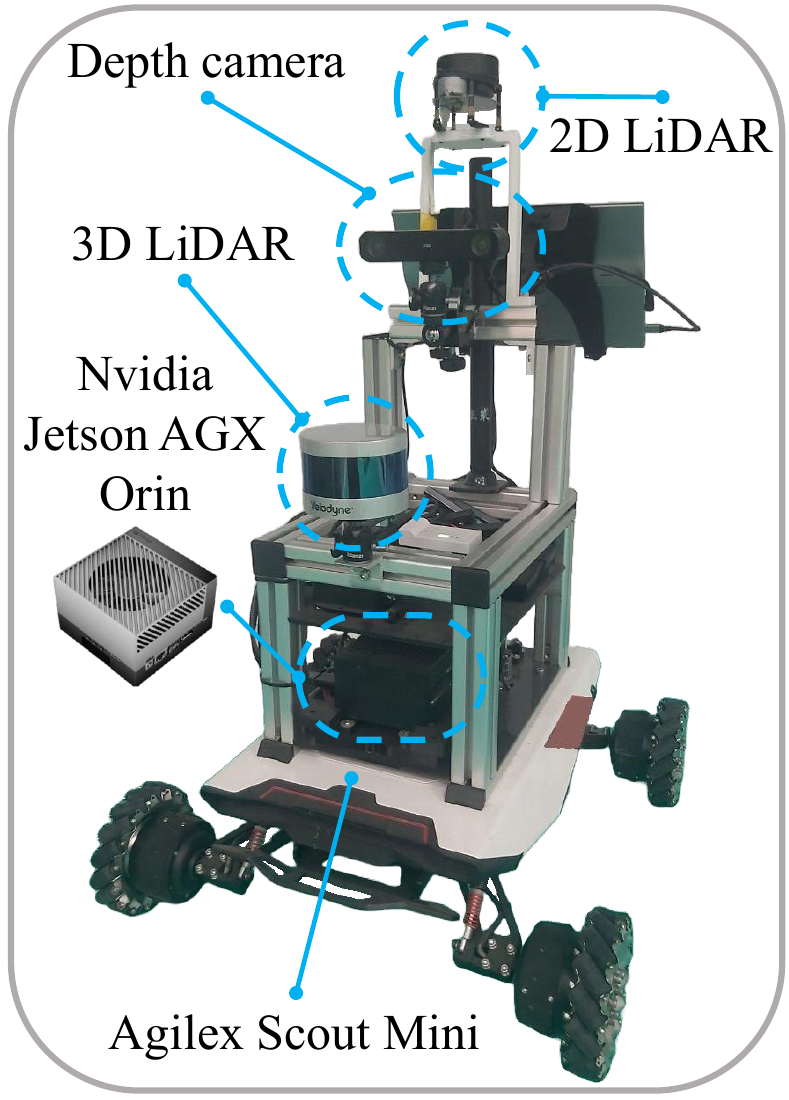}
    }
    \subfigure[]{
        \includegraphics[width=0.51\textwidth]{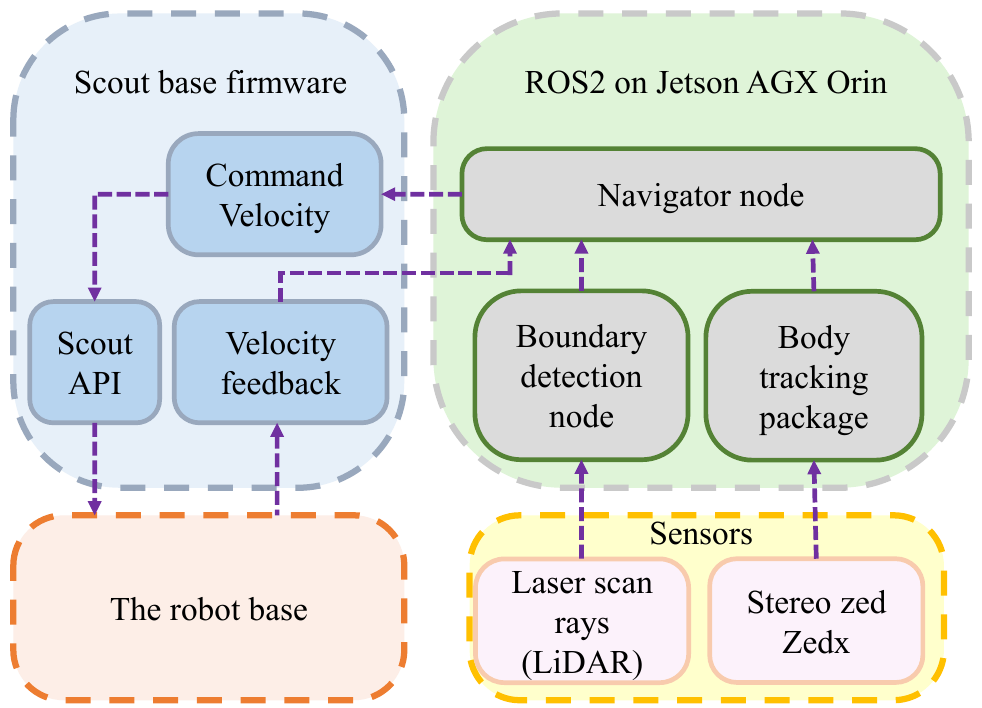}
    }
    \caption{The experiment setup. 
    (a) An Agilex Scout Mini is the mobile base. 
    A Zedx depth camera detects the pedestrians. 
    A 3D LiDAR measures the distance from the walls. 
    An Nvidia Jetson AGX Orin is the processor.
    (b) The block diagram of the whole system includes four major parts:
    The sensors collect the environmental data.
    The ROS2 middleware receives data from the sensors, generates the translational and rotational velocity commands, and sends the commands to the base.
    The base's internal controller realizes the received commands and sends feedback to the navigator node.}
    \label{fig:ExpSetup}
\end{figure*}
\begin{figure}
    \centering
    \subfigure[]{
        \includegraphics[width=0.161\textwidth]{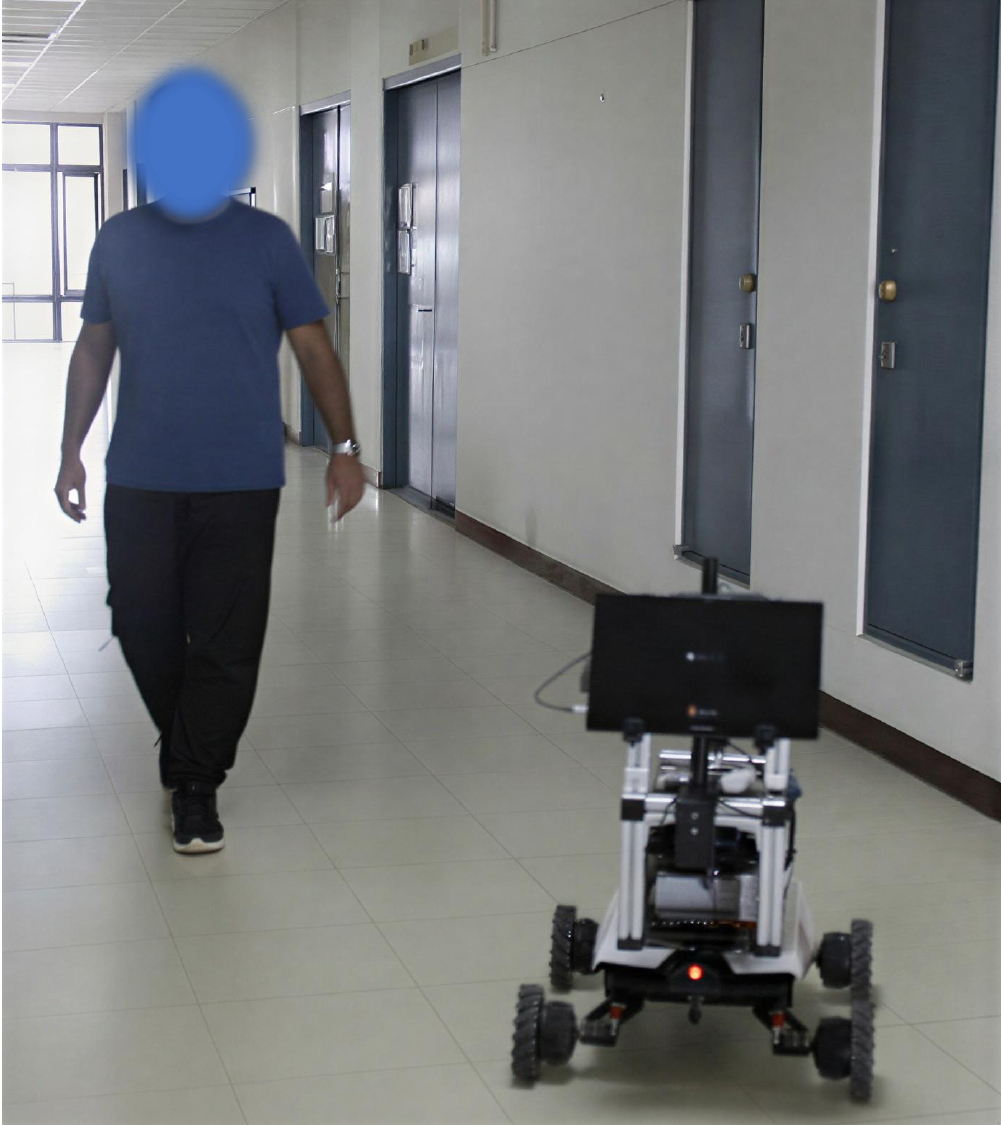}
    }
    \subfigure[]{
        \includegraphics[width=0.29\textwidth]{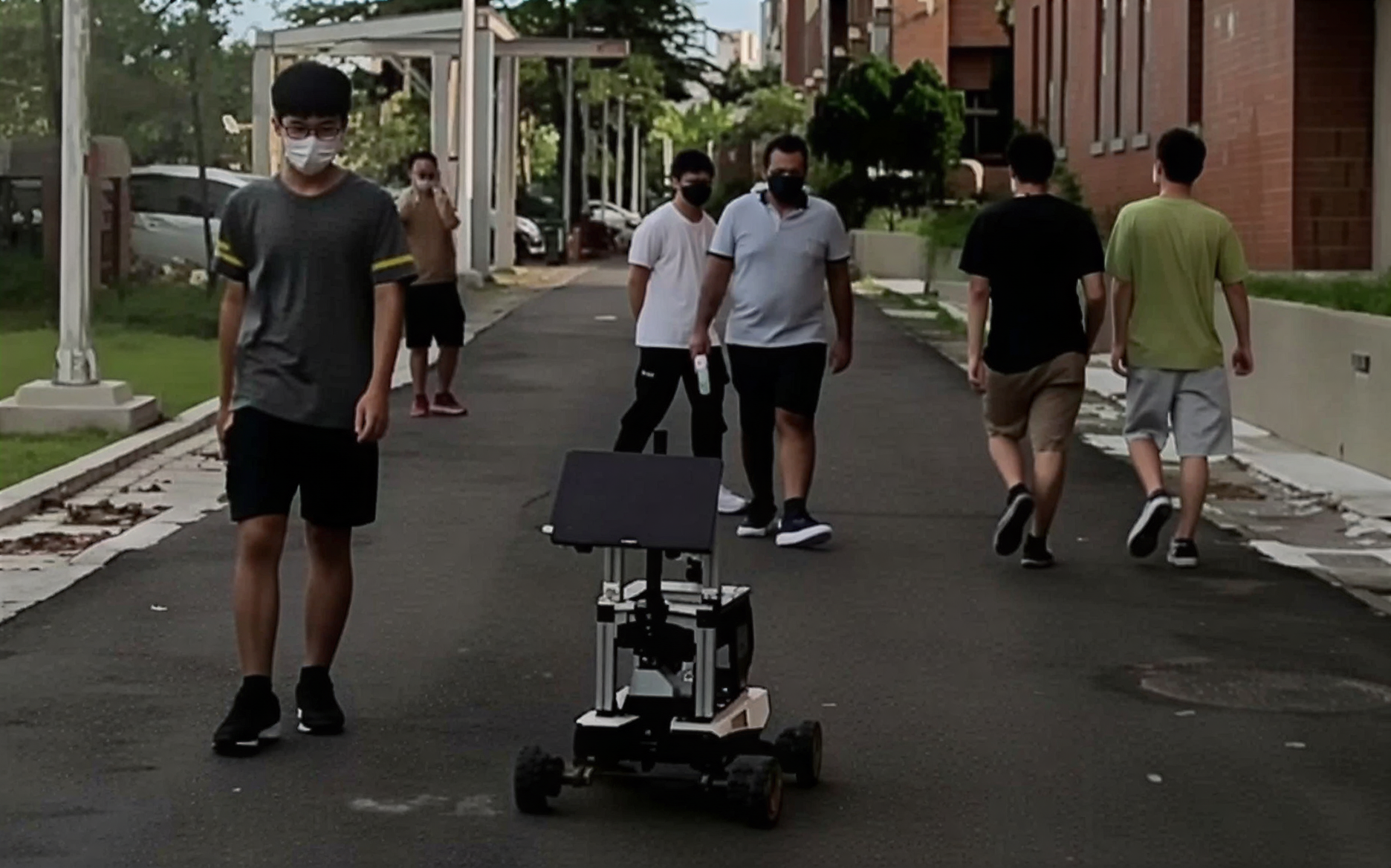}
    }
    \caption{The experiment trials. 
    (a) The robot passes a pedestrian in a sample trial indoors.
    (b) The robot moves through a pedestrian crowd in an outdoor setting.}
    \label{fig:ExpTrial_Vid}
\end{figure}
This section briefly introduces the experiment setup shown in Fig.~\ref{fig:ExpSetup}(a).
We implemented the proposed models and framework on Agilex Scout Mini.
The robot has Mecanum wheels and is holonomic.
We virtually set the lateral velocity to zero, converting it to a nonholonomic differential-drive mobile robot.
A Zedx stereo depth camera detects the pedestrians. 
A Velodyne VLP-16 Puck 3D LiDAR detects the walls.
A 2D LiDAR acts as a redundant collision detector during emergencies.
\par
Fig.~\ref{fig:ExpSetup}(b) is the system block diagram.
The depth camera captures RGB-D frames and uses the zed-ros2-wrapper package to identify the pedestrians, compute the relative positions and velocities, and communicate the topics at a rate of 15 Hz.
Similarly, the 3D Lidar extracts the distance to the walls from the point cloud data and publishes it at a rate of 10 Hz.
An Nvidia Jetson AGX Orin Developer Kit 64GB edge GPU (1.6 GHz) serves as the central processor. 
It collects the data, processes the navigation algorithms, and communicates with the low-level controller of the mobile base. 
The operating system is Ubuntu 22.04 with Jetpack 6.2. We use the ROS2 Humble middleware.
The navigator node itself doesn't require GPU calculations.
Nevertheless, the Jetson CUDA and Tensor cores perform person detection through the collected depth images and boundary identification from LiDAR point clouds.
The central ROS2 node on the Jetson publishes the translational and rotational speed commands to the robot's base at the rate of 50 Hz.
ROS2 $rqt$ package reported the sampling rates.
\subsection{Experiments design}\label{sec:exp:DOE}
This section details the experiment design, demographics, and the participants' biases toward mobile robots.
During all scenarios, NMR and a pedestrian move towards each other and pass in a hallway with a width $3.2~m$.
Pedestrians are instructed to move comfortably as if they are on a sidewalk facing a robot.
Fig.~\ref{fig:ExpTrial_Vid}(a) shows a sample trial with a pedestrian facing an NMR in a hallway. 
In addition to one-on-one trials, we perform trials using SFM and TSFM where the NMR is cruising through a pedestrian crowd outdoors.
Fig.~\ref{fig:ExpTrial_Vid}(b) is a glimpse of the multi-pedestrian trial.
\par
The experiments are performed on the \textcolor{black}{National Cheng Kung University (NCKU) campus}.
Participants are recruited through announcements on the university social media groups.
Of 34 pre-planned volunteers, 22 males and 10 females showed up on the arranged trial time slot.
Participants are 25.7 years old on average, with a standard deviation of 3.58, ranging from 23 to 41 years old.
\par
Due to the volunteer nature of the sampling, a positive bias toward mobile robots is expected.
Among the 32 participants, 20 reported previous familiarity with mobile robots.
To evaluate the bias, the participants responded to a question asking whether they trust mobile robots to operate safely around people on a 1-5 Likert scale before the trial.
They scored 3.84 on average, with a standard deviation of 0.93, indicating a high level of trust in the population compared to neutral populations~\cite{Naneva2020}.
\par
In addition to SFM and TSFM, two remote-controlled scenarios are performed as a baseline for comparison.
During the remote-controlled scenarios, an operator controls the robot using a remote control. 
Similar to the autonomous navigators, the robot moves toward the pedestrian, avoids it, and returns to the center of the hallway after it passes the pedestrian.
The robot speed is set to $1.4~m/s$ and $2.8~m/s$. 
The corresponding trial types are ``R14" and ``R28", respectively.
The participants are told that all interactions are autonomous.
They are unaware of the existence of human-controlled scenarios.
\textcolor{black}{The operators stay at the end of the hall, behind the robot.
They are our lab members and regularly use the mobile robots for their research; hence, highly trained. 
However, they are not involved in this particular study and are unaware of the hypothesis.
They are told to perform the task to the best of their ability.}
\par
\textcolor{black}{In pilot tests without fixing the speed, the human operator frequently reduced the speed.
Considerably lower speed than in the autonomous cases prevents a controlled comparison.
In addition, due to high variability in participants' interpretations, subjective comfort ratings are noisy in slow interactions, whereas at higher speeds the variability decreases.
Thus, we selected moderate (R14) and aggressive (R28) interaction regimes so that the pedestrians' responses would be informative rather than trivial.
Overall, fixing the robot speed in the remote-control cases has two advantages: it creates a human-in-the-loop reference comparable to the autonomous cases and reduces the variability in reported comfort.
However, due to this constraint, the baselines must not be interpreted as human-optimal social navigation.}
\par
\textcolor{black}{We perform the experiments in batches of five trials, a block per method for each participant.
The order of the blocks randomly varies across participants; the volunteers do not know the order.
In other words, different participants attend different sequences of the three methods, but always with five consecutive trials of the same type.}
Thus, the total number of trials for SFM and TSFM is $32\times5=160$ each; all participants attend TSFM and SFM.
In addition, half of the participants attend R14 and the other half R28, resulting in $5\times16=80$ trials each; no participant attends both R14 and R28.
The remote-controlled trial type was arbitrarily selected for each pedestrian.
Of 480 trials, three trials were discarded due to data-saving mishaps: an R14, an R28, and an SFM.
\par
After each trial, the volunteers fill out a questionnaire with four statements and score their opinion regarding each statement on a 1-to-5 Likert scale.
The following statements assess the trial group's effect on the pedestrian's subjective safety:
\begin{itemize}
    \item \textbf{Q1.}~I felt comfortable with how the robot moved as it passed me.
    \item \textbf{Q2.}~The robot’s movement was smooth during the interaction.
    \item \textbf{Q3.}~The distance maintained by the robot while passing me was appropriate
    \item \textbf{Q4.}~The speed of the robot during the interaction felt comfortable to me.
\end{itemize}
\par
Furthermore, we record the robot's speed, heading, pedestrian relative distance, pedestrian relative velocity, and the distances to the left and right walls from the onboard sensors during each trial.
In addition, we save $t_{c,j}$, $\vec{v}_{des}$, $\vec{f}_{des}$, $\vec{f}_{soc,j}$, $\vec{f}_{bnd,k}$, and the command $z$.
Appendix~\ref{sec:app:Trial} shows an SFM and a TSFM trial, further clarifying the models and comparing their forces and speed variations.
\section{Results and Discussions}\label{sec:res}
This section presents the data collected from the trials, statistically compares the trial types, and discusses the implications.
Section~\ref{sec:res:kin} focuses on the interaction kinematics.
Section~\ref{sec:res:survey} statistically studies the pedestrian responses to the survey questions regarding comfort and naturalness of the robot movements across the trial groups.
Then, Section~\ref{sec:res:com} uses metrics developed for evaluating pedestrian comfort and robot sociability to benchmark our proposed algorithms against a recent study.
Section~\ref{sec:res:out} explains the outdoor trials, and Section~\ref{sec:res:lim} concludes the results, reviewing the key findings, the study limitations, and a note regarding the feasibility and practicality of the algorithms.
\par
In Fig.~\ref{fig:boxes}, the data is continuous. 
For the statistical significance tests related to Fig.~\ref{fig:boxes}(a)--(e), we use paired t-tests with the overlapping participants and Welch’s t-test for non-overlapping groups (R14 vs R28).
In Fig.~\ref{fig:Survey_boxes}, the data is ordinal. 
For the statistical significance tests related to Fig.~\ref{fig:Survey_boxes}(a)--(d), we use the Wilcoxon signed-rank test with the overlapping participants and the Mann-Whitney U for non-overlapping groups (R14 vs R28).
\par
\subsection{Statistics of kinematics}\label{sec:res:kin}
\begin{figure*}
\centering
\subfigure[]{\includegraphics[width=0.19\textwidth]{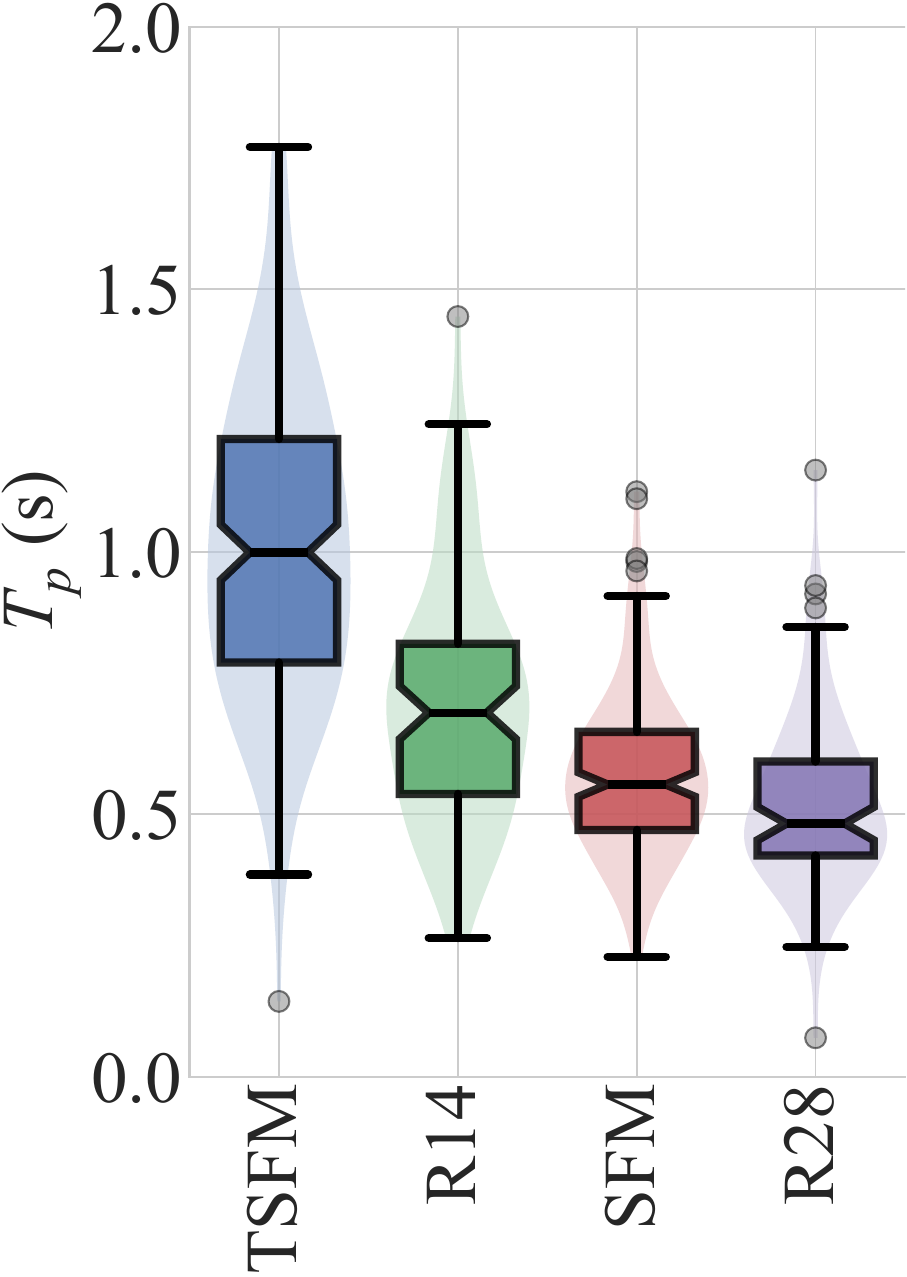}}
\subfigure[]{\includegraphics[width=0.19\textwidth]{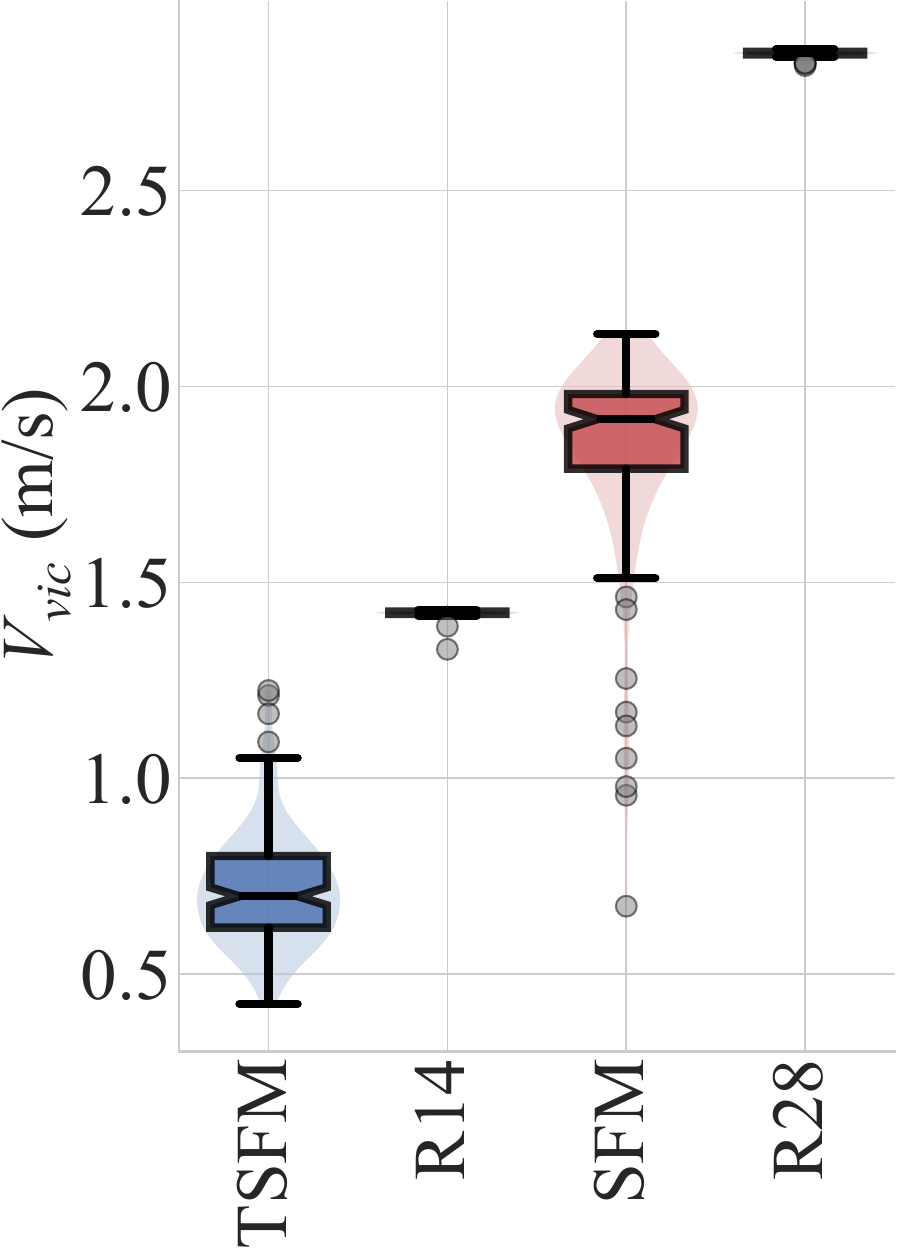}}
\subfigure[]{\includegraphics[width=0.19\textwidth]{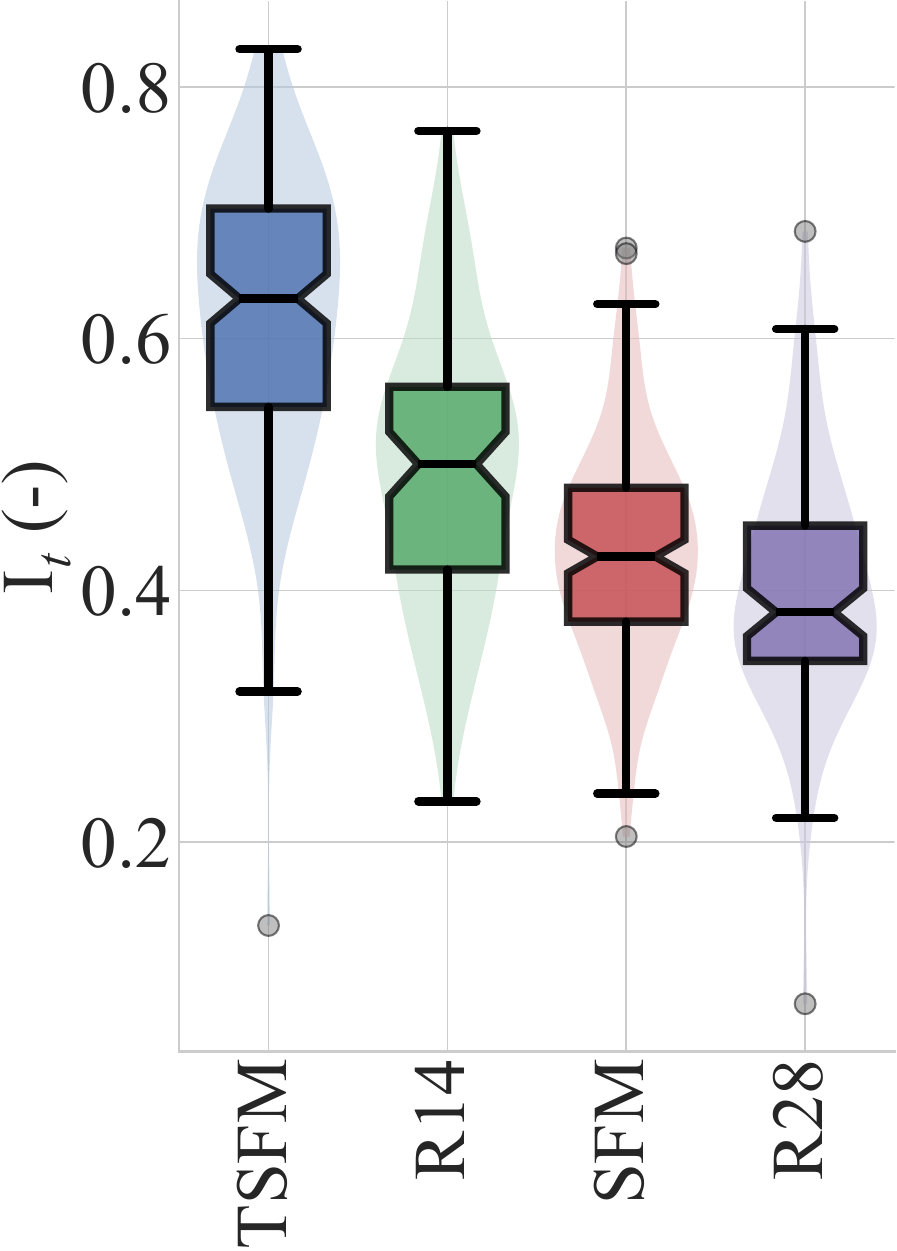}}
\subfigure[]{\includegraphics[width=0.19\textwidth]{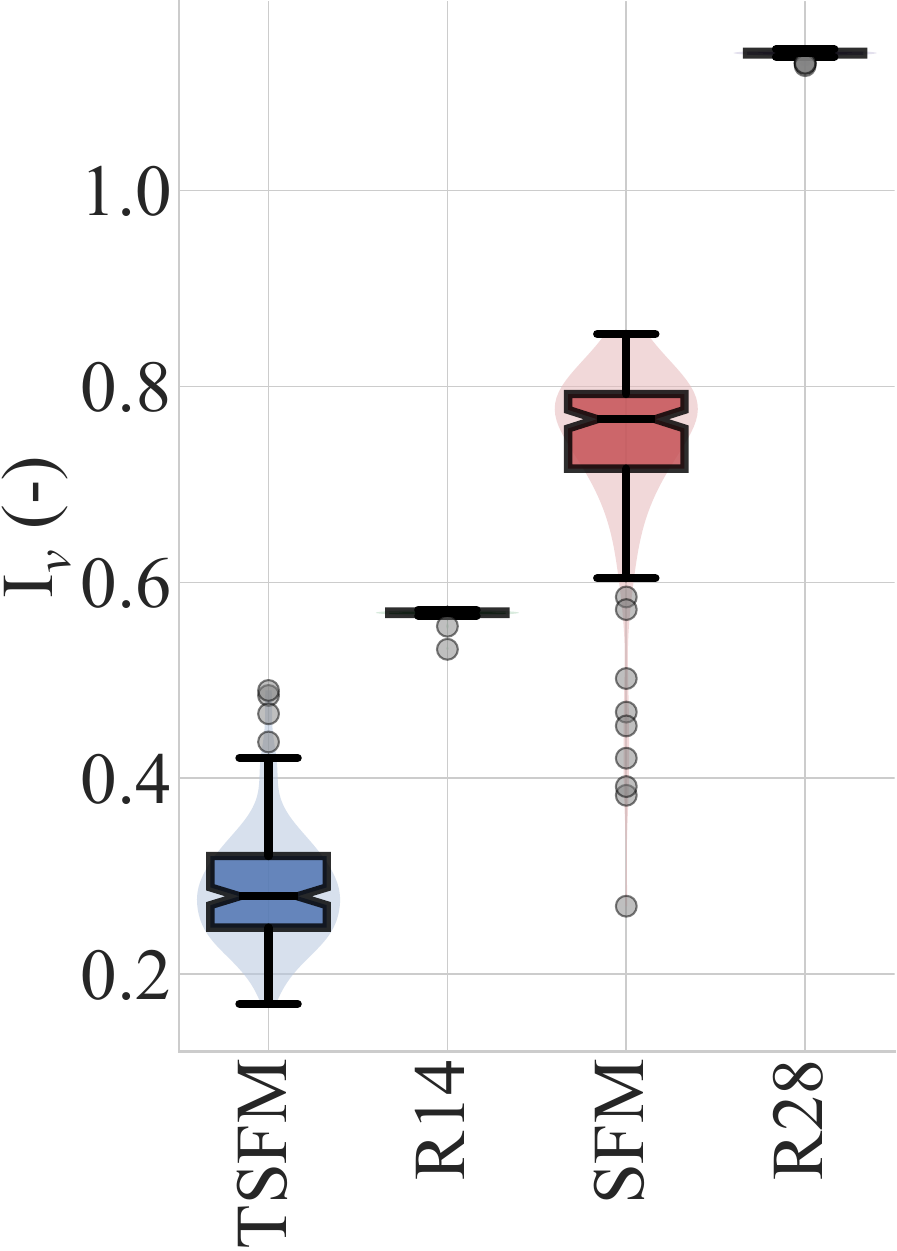}}
\subfigure[]{\includegraphics[width=0.19\textwidth]{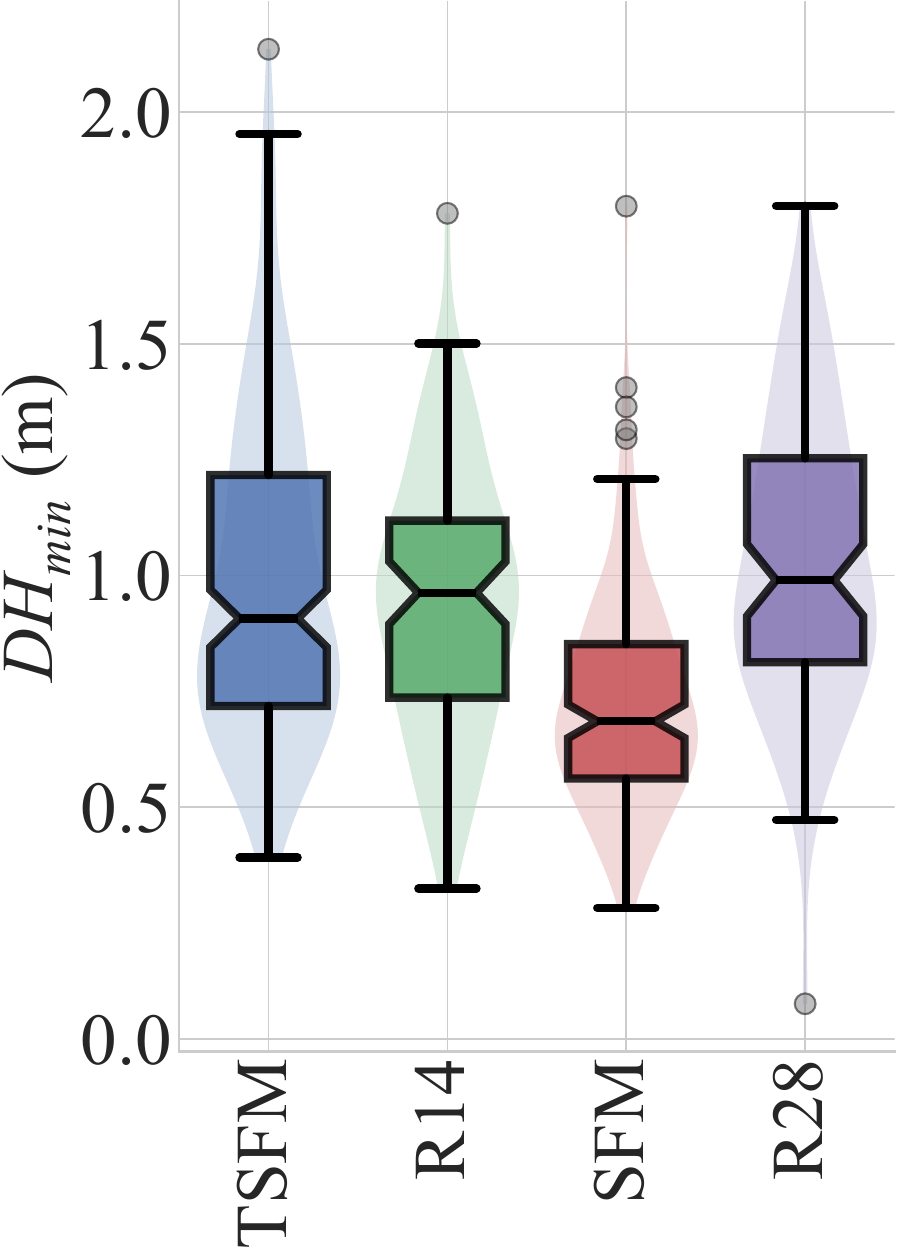}}
\caption{The data extracted from the trajectories recorded during the experimental trials. 
The plot compares four trial types: the robot moves using SFM (Sample size=159), the robot moves using TSFM (Sample size=160), an operator remotely controls the robot moving with a constant speed of $1.4~m/s$ (R14-Sample size=79), and an operator remotely controls the robot moving with a constant speed of $2.8~m/s$ (R28-Sample size=79).
(a) $T_p$: the minimum of PTTC $t_{c,j}$ during the trials.
(b) $v_{vic}$: the average speed in the vicinity of the pedestrian.
(c) $I_t$: the pedestrian comfort metric.
(d) $I_v$: the robot's speed metric.
(e) $DH_{min}$: the minimum distance between the robot and the human during the trials.
}
\label{fig:boxes}
\end{figure*}
In this section, we statistically analyze the experiment results of the autonomous trials against the human-controlled trials.
The kinematics variables benchmarking the trials are the minimum of PTTC $T_p$, the average velocity in the walker's vicinity $v_{vic}$, the comfort metric $I_t$, the speed metric $I_v$, and the minimum distance $DH_{min}$.
Fig.~\ref{fig:boxes} presents the results using standard Tukey box plots with notches as a \textcolor{black}{rough} visual statistical significance test.
\textcolor{black}{Appendix~\ref{sec:app:statistics} Table~\ref{tab:Box_det_Fig5and6} contains the details of Fig.~\ref{fig:boxes}(a)--(e).}
Briefly, TSFM, SFM, R14, and R28 stand for the robot moves autonomously using the SFM model, the robot moves autonomously using the TSFM model, an operator remotely controls the robot with a constant speed of $1.4~m/s$, and an operator remotely controls the robot with a constant speed of $2.8~m/s$, respectively.
\textcolor{black}{In addition, Tables~\ref{tab:St_test_Fig5a}--\ref{tab:St_test_Fig5e} are the results of statistical comparisons for all pairs of trial groups in Fig.~\ref{fig:boxes}.}
\par
Fig.~\ref{fig:boxes}(a),~\textcolor{black}{associated with statistical tests in Table~\ref{tab:St_test_Fig5a}}, compares the methods' $T_p$s. 
TSFM has significantly higher $T_p$ values compared to all other methods and is the most comfortable algorithm among the boxes, supported by a paired t-test on overlapping pedestrians:\\
\textbf{TSFM vs. SFM:~}TSFM ($\text{Mean} = 1.01$, $SD = 0.21$) has higher $T_p$ than SFM ($\text{Mean} = 0.58$, $SD = 0.12$), $t(31) = 12.68$, $p < .005$, $d = 2.24$.\\
\textbf{TSFM vs. R14:~}TSFM ($\text{Mean} = 0.94$, $SD = 0.17$) has higher $T_p$ than R14 ($\text{Mean} = 0.71$, $SD = 0.17$), $t(15) = 4.29$, $p < .005$, $d = 1.07$.\\
\textbf{TSFM vs. R28:~}TSFM ($\text{Mean} = 1.08$, $SD = 0.22$) has higher $T_p$ than R28 ($\text{Mean} = 0.53$, $SD = 0.11$), $t(15) = 10.16$, $p < .005$, $d = 2.54$.
\par
The next comfortable method is R14. 
Since the operator can not change the robot's speed and only steers, its ability to avoid the pedestrian comfortably is limited.
SFM and R28 are the least comfortable algorithms, while SFM performed slightly better.
R28 with a high constant speed creates the most uncomfortable robot-pedestrian interactions. \par
Fig.~\ref{fig:boxes}(b),~\textcolor{black}{associated with statistical tests in Table~\ref{tab:St_test_Fig5b}}, compares $v_{vic}$ during the trials.
TSFM is significantly slower than the other methods. 
Since R14 and R28 have constant speeds, their average speeds don't change in different trials and form very narrow boxes.
SFM average speed in the vicinity of the pedestrian is quite high, causing the significantly lower $T_p$ in Fig.~\ref{fig:boxes}(a).\par
Fig.~\ref{fig:boxes}(c) and Fig.~\ref{fig:boxes}(d),~\textcolor{black}{respectively associated with statistical tests in Tables~\ref{tab:St_test_Fig5c} and \ref{tab:St_test_Fig5d}}, present the normalized comfort~\eqref{eq:I_t} and speed metrics~\eqref{eq:I_v}.
Since the metrics are defined using $T_p$ and $v_{vic}$, they follow the same trends.
Compared to the simulation results in Table~\ref{tab:simupar}, TSFM performs slightly worse on the comfort metric and significantly worse on the speed metric during the experiments.
SFM performs significantly worse in the comfort metric and better in the speed metric.
Perhaps with experimental parameter tuning, SFM's comfort improves at the cost of speed sacrifice and reaches a similar performance to TSFM.
However, we didn't tune the parameters during the experiment for a fair evaluation.
\par
Fig.~\ref{fig:boxes}(e),~\textcolor{black}{associated with statistical tests in Table~\ref{tab:St_test_Fig5e}}, shows the minimum distance between the robot and the pedestrian during the trials.
TSFM maintains about the same minimum distance from the pedestrian.
Unlike TSFM, SFM has a lower minimum distance to the pedestrian.
A paired t-test demonstrates that TSFM is statistically significantly safer than SFM in keeping distance from the pedestrian:\\
\textbf{TSFM vs. SFM:~}TSFM ($\text{Mean} = 0.99$, $SD = 0.26$) has a higher $DH_{\min}$ than SFM ($\text{Mean} = 0.73$, $SD = 0.17$), $t(31) = 7.42$, $p < .005$, $d = 1.31$.
\par
Conventionally, the distance to the pedestrian is considered the main factor in human discomfort~\cite{Hall1963, Neggers2022}.
However, the lack of statistically significant difference hints that the remote-controlled trials have almost the same minimum avoidance distance, despite having significantly different speeds:\\
\textbf{R14 vs. R28 (Welch’s t-test):~}R14 ($\text{Mean} = 0.95$, $SD = 0.20$) does not significantly differ in $DH_{\min}$ from R28 ($\text{Mean} = 1.02$, $SD = 0.22$), $t(29.57) = -1.00$, $p = .328$, $g = -0.35$.\\
In addition, in our experiments, comfort, Fig.~\ref{fig:boxes}(a), does not follow the same pattern as minimum distance, Fig.~\ref{fig:boxes}(e).
The significant comfort difference while keeping the same distance and the different trends in Fig.~\ref{fig:boxes}(a) and Fig.~\ref{fig:boxes}(e) weaken the claim of distance being the main contributor to the comfort.
The result aligns with trends presented by a study focusing on pedestrian comfort estimations~\cite{Jafari2026-1-icra} and highlights the need for further studies to quantify pedestrian comfort.
Overall, TSFM supersedes SFM in both $T_p$ and $DH_{min}$ metrics, indicating that the PTTC integration improves pedestrian comfort.
\par
\subsection{Statistics of subjective safety}\label{sec:res:survey}
\begin{figure*}
\centering
\subfigure[]{\includegraphics[width=0.24\textwidth]{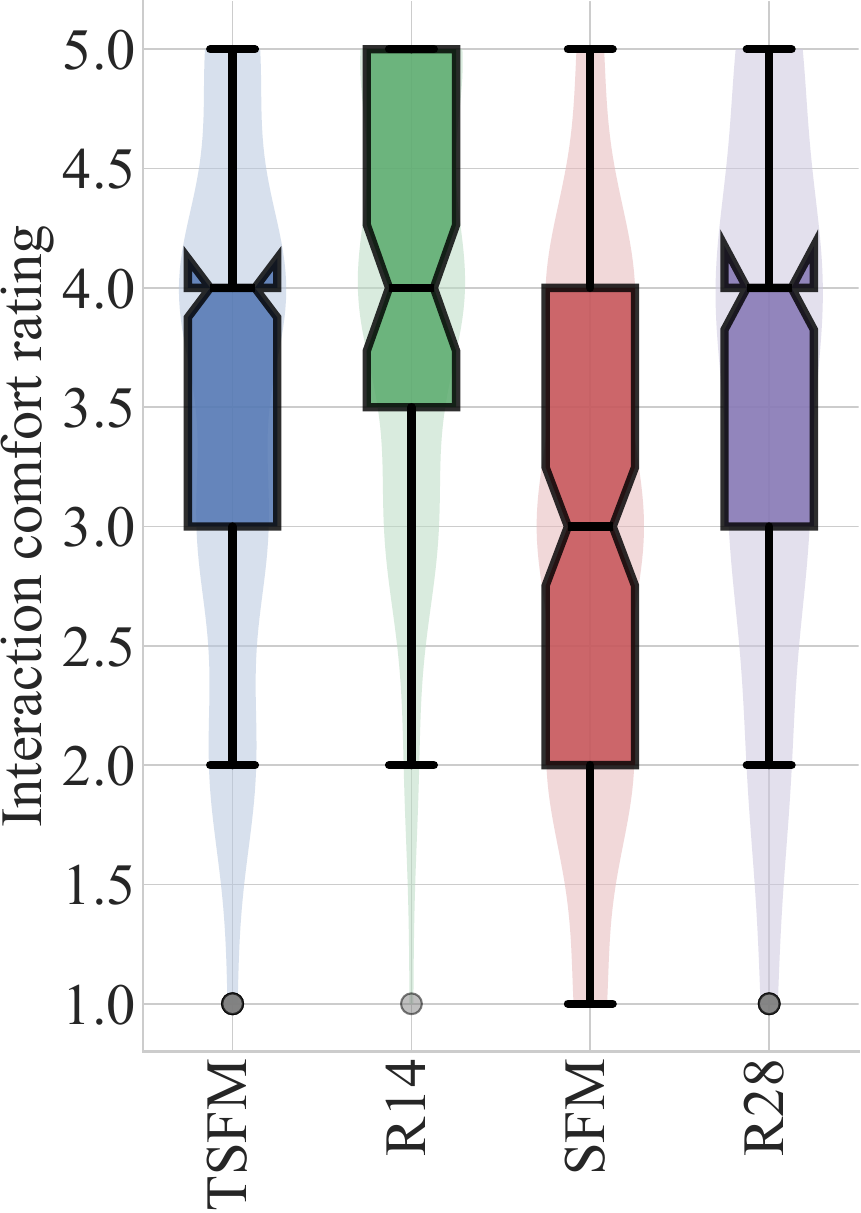}}
\subfigure[]{\includegraphics[width=0.24\textwidth]{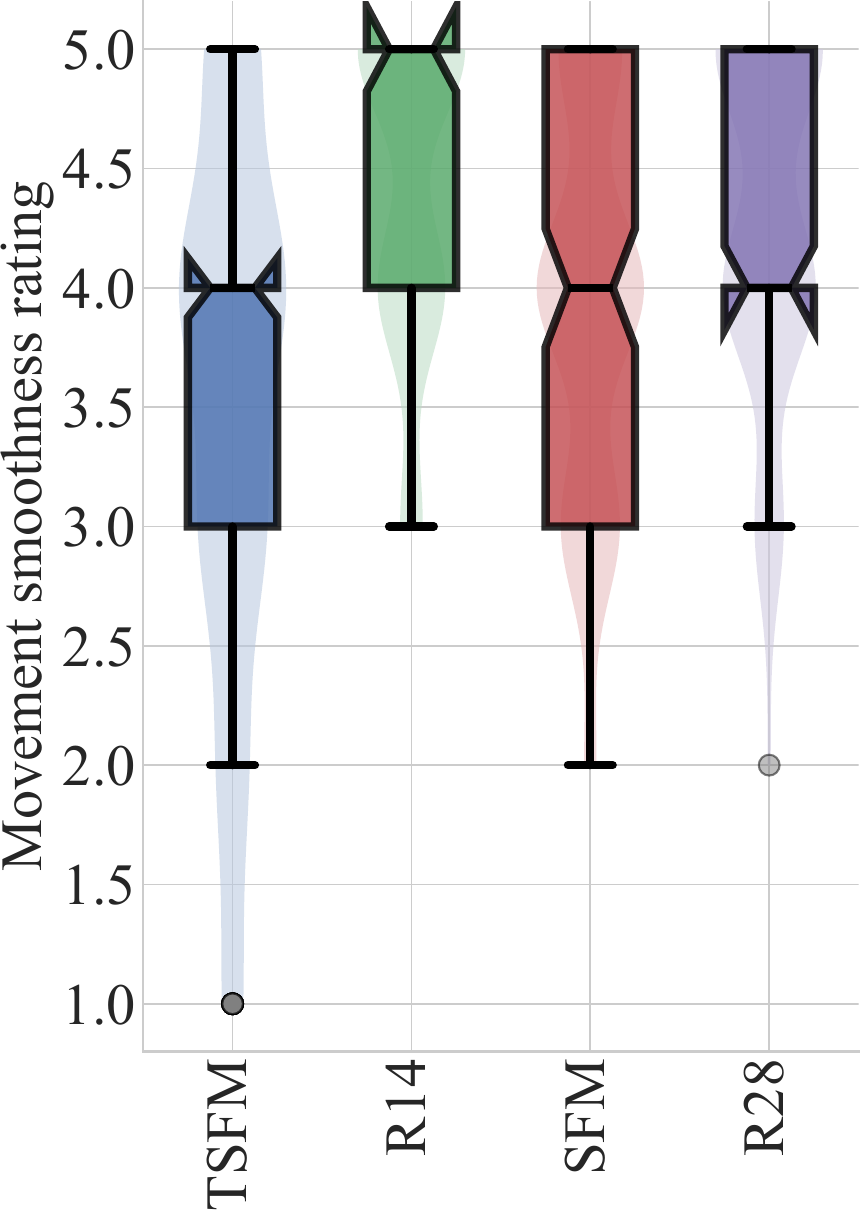}}
\subfigure[]{\includegraphics[width=0.24\textwidth]{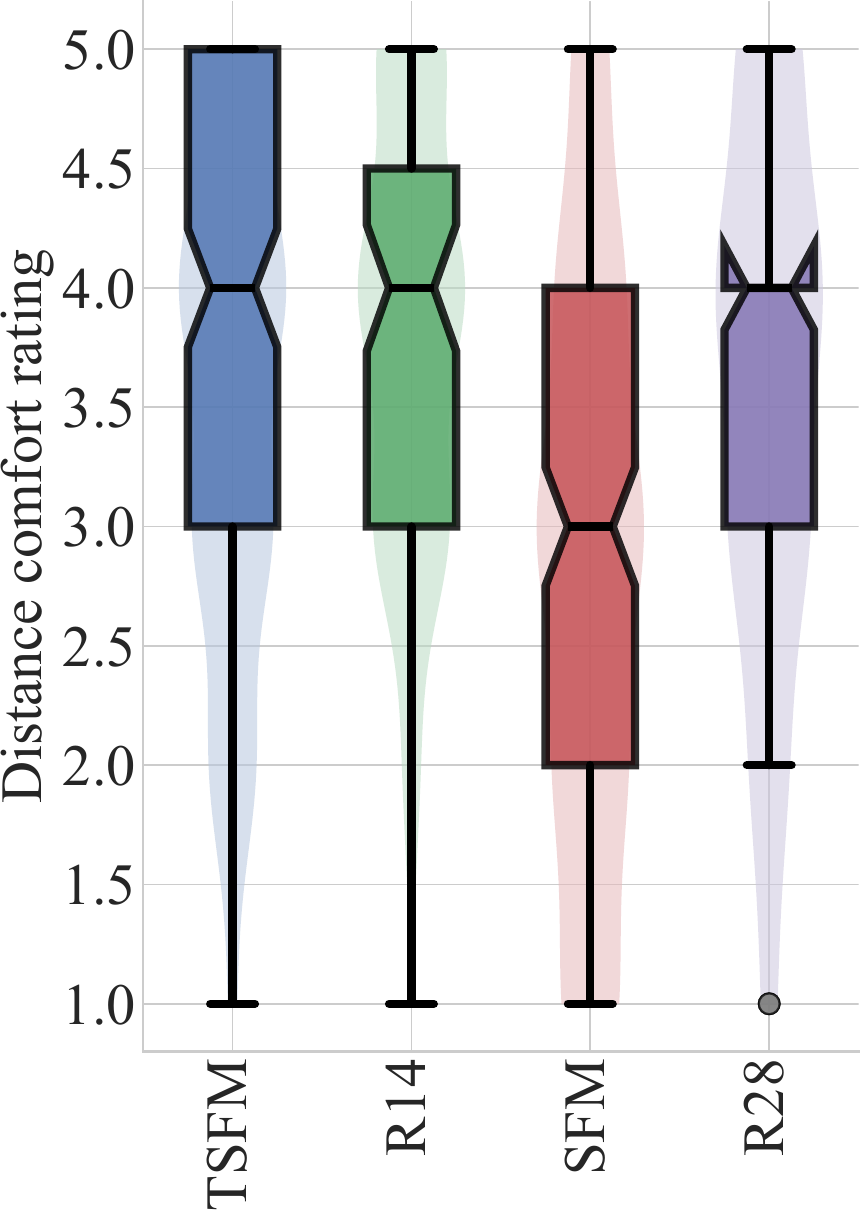}}
\subfigure[]{\includegraphics[width=0.24\textwidth]{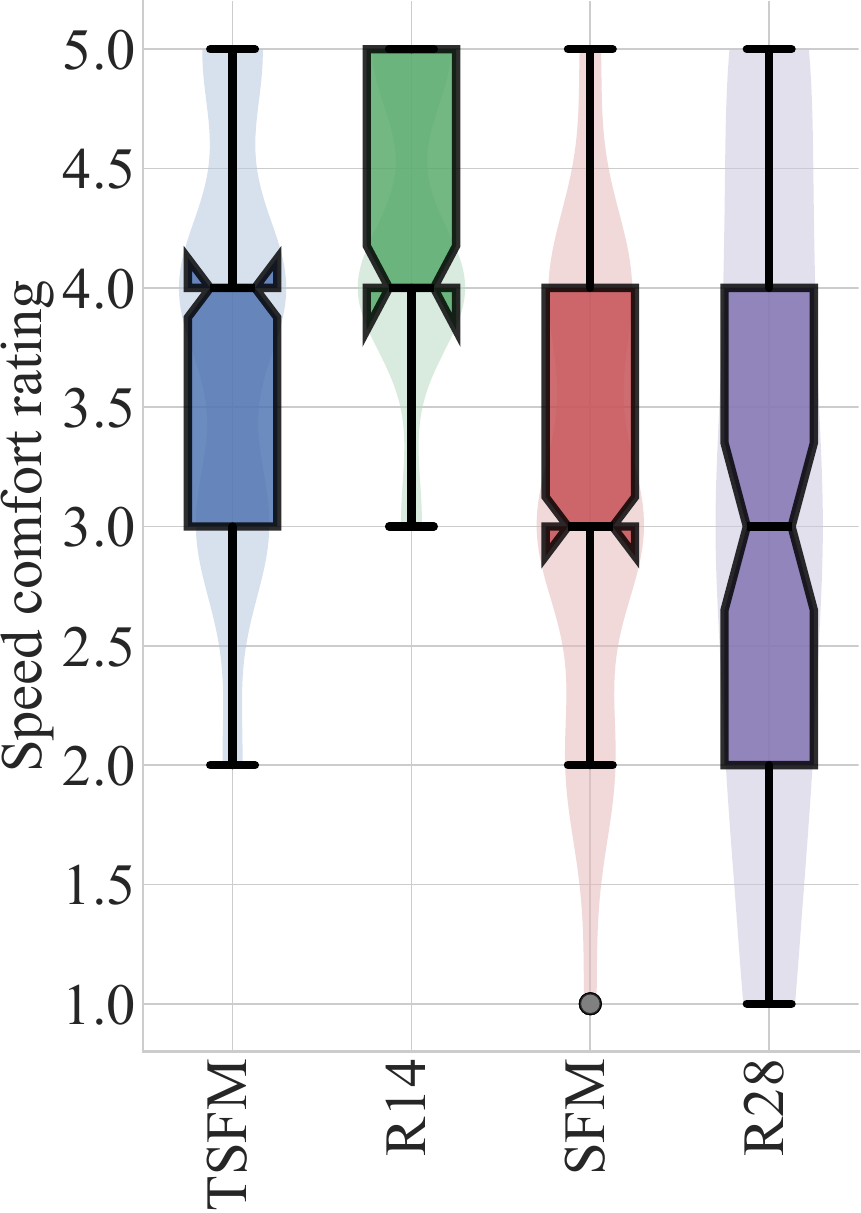}}
\caption{Survey results regarding pedestrians' opinions about the interaction qualitative features on a 1--5 Likert scale.
The plot compares four trial types: the robot moves using SFM (Sample size=159), the robot moves using TSFM (Sample size=160), an operator remotely controls the robot moving with a constant speed of $1.4~m/s$ (R14; Sample size=79), and an operator remotely controls the robot moving with a constant speed of $2.8~m/s$ (R28; Sample size=79).
The statements are 
(a) \textbf{Q1.}~I felt comfortable with how the robot moved as it passed me.
(b) \textbf{Q2.}~The robot's movement was smooth during the interaction.
(c) \textbf{Q3.}~The distance maintained by the robot while passing me was appropriate.
(d) \textbf{Q4.}~The speed of the robot during the interaction felt comfortable to me.
}
\label{fig:Survey_boxes}
\end{figure*}
This section statistically compares the survey responses for TSFM, R14, SFM, and R28, similar to Section~\ref{sec:res:kin}.
The remote-controlled scenarios R14 and R28 are the comparison baselines for the autonomous trials TSFM and SFM; the cases are added due to a lack of comparable literature with a similar setup.
The survey questions benchmarking the trials are about interaction comfort, movement smoothness, distance comfort rating, and speed comfort rating corresponding to Q1--Q4, explained in Section~\ref{sec:exp:DOE}, respectively.
\textcolor{black}{The boxplots in Fig.~\ref{fig:Survey_boxes}(a)--(d) are detailed in Appendix~\ref{sec:app:statistics} Table~\ref{tab:Box_det_Fig5and6}.
Tables~\ref{tab:St_test_Fig6a}--\ref{tab:St_test_Fig6d} show the statistical comparison tests for all pairs of trial groups in Fig.~\ref{fig:Survey_boxes}.}
\par
\par
Fig.~\ref{fig:Survey_boxes}(a),~\textcolor{black}{associated with statistical tests in Table~\ref{tab:St_test_Fig6a}}, presents the main questionnaire take-home result.
TSFM outperforms SFM regarding reported comfort, which is compatible with the results in Fig.~\ref{fig:boxes}(a) and (c):\\
\textbf{TSFM vs. SFM:~}Regarding interaction comfort rating, TSFM ($\text{Median} = 4$, $[Q1,\;Q3] = [3,\;4]$) shows significantly higher comfort than SFM ($\text{Median} = 3$, $[Q1,\;Q3] = [2,\;4]$), $W = 5.50$, $p < .005$, $RBC = .88$.\\
Based on these findings, we deduce that $T_p$ affects pedestrians' comfort.
A distance correlation (dCor) reports a moderate but statistically significant correlation between $T_p$ and the reported comfort in TSFM and R14 trials (TSFM: dCor=0.46 and p=0.001, R14: dCor=0.33 and p=0.005).
The results for SFM and R28 are insignificant for rejecting the null hypothesis.
\par
The correlation is weaker in comparison to the e-scooters, which is in the range of 0.5--0.6~\cite{JAFARI2024-2-NATCOM}.
For e-scooter pedestrian interaction,~\cite{JAFARI2024-2-NATCOM} highlights that the higher the relative velocity between the e-scooter and the pedestrian is, the stronger the correlation between $T_p$ and reported comfort will be.
The e-scooter minimum velocity is about $2.0~m/s$ for a comfortable ride with no balance issues, whereas the maximum set speed of the mobile robot in this study is $2.5~m/s$.
Therefore, a weaker correlation is not unexpected, but the relationship is still considerable and statistically significant.
\par
However, $T_p$ is not the only factor influencing pedestrian subjective safety.
Fig.~\ref{fig:boxes}(a) shows that TSFM has a higher $T_p$ than R14.
Fig.~\ref{fig:boxes}(b) shows that TSFM has a lower relative velocity as well.
In addition, they have almost identical distance and distance comfort ratings in Fig.~\ref{fig:boxes}(e) and Fig.~\ref{fig:Survey_boxes}(c), respectively.
Thus, if these were the only influencing factors on comfort, TSFM should be more pleasant than R14.
Yet, the participants are more comfortable with R14 compared to TSFM.\\
\textbf{TSFM vs. R14:~}Regarding interaction comfort rating, TSFM ($\text{Median} = 4$, $[Q1,\;Q3] = [3,\;4]$) shows significantly lower comfort than R14 ($\text{Median} = 4$, $[Q1,\;Q3] = [4,\;5]$), $W = 9.0$, $p = .05$, $RBC = .80$.
\par
In a similar case, SFM and R28 have almost the same level of $T_p$, but R28, despite being faster, is more comfortable, although it barely misses the $5\%$ threshold:\\
\textbf{SFM vs. R28:~}Regarding interaction comfort rating, SFM ($\text{Median} = 3$, $[Q1,\;Q3] = [2.75,\;4]$) shows a lower comfort than R14 ($\text{Median} = 4$, $[Q1,\;Q3] = [3,\;4]$), $W = 13.5$, $p = .07$, $RBC = .45$.\\
The observations highlight that remote-controlled scenarios have a comfort-related feature other than $T_p$, speed, and minimum distance, and the autonomous algorithms do not fully replicate the features.
\par
A possible reason is the movement's smoothness.
The participants rated both remote-controlled trial types as significantly smoother than the autonomous ones in Fig.~\ref{fig:Survey_boxes}(b),~\textcolor{black}{associated with statistical tests in Table~\ref{tab:St_test_Fig6b}}.\\
\textbf{TSFM vs. R14:~}Regarding interaction smoothness rating, TSFM ($\text{Median} = 4$, $[Q1,\;Q3] = [3,\;4.25]$) shows significantly lower comfort than R14 ($\text{Median} = 5$, $[Q1,\;Q3] = [4,\;5]$), $W = 0.0$, $p < .005$, $RBC = -1.0$.\\
\textbf{SFM vs. R28:~}Regarding interaction smoothness rating, SFM ($\text{Median} = 4$, $[Q1,\;Q3] = [3.75,\;4.25]$) shows significantly lower comfort than R28 ($\text{Median} = 5$, $[Q1,\;Q3] = [4,\;5]$), $W = 0.0$, $p = .007$, $RBC = -1.0$.\\
We believe the movement smoothness, along with $T_p$ and $DH_{min}$, influences the pedestrian comfort, leading to higher comfort ratings for remote-controlled trials in Fig.~\ref{fig:Survey_boxes}(a).
If we shift down the ratings in Fig.~\ref{fig:Survey_boxes}(a) by one point, the comfort pattern matches the pattern for $T_p$ in Fig.~\ref{fig:boxes}(a).
Despite comfort's dependence on smoothness, the features that mark an interaction as smooth for walkers remain unknown and require further research.
A recent study by Brayan et al. reaches the same conclusion~\cite{Brayan2026}.
\par
Fig.~\ref{fig:Survey_boxes}(c),~\textcolor{black}{associated with statistical tests in Table~\ref{tab:St_test_Fig6c}}, is the pedestrian-reported comfort focusing on the maintained distance by the robot.
When compared to Fig.~\ref{fig:boxes}(e), the patterns match, indicating that $DH_{min}$ is an appropriate indicator of the pedestrian understanding.
Furthermore, when we compare the $DH_{min}$ pattern in Fig.~\ref{fig:boxes}(e) with the reported comfort in Fig.~\ref{fig:Survey_boxes}(a), almost the same pattern is visible except for a discrepancy in the R14 group.
Thus, $DH_{min}$ is another metric for walkers' comfort when interacting with a mobile robot.
\par
Fig.~\ref{fig:Survey_boxes}(d),~\textcolor{black}{associated with statistical tests in Table~\ref{tab:St_test_Fig6d}}, shows the pedestrian ratings focused on robot speed.
The pattern aligns with the comfort patterns in Fig.~\ref{fig:Survey_boxes}(a) except for R28.
During the R28 trials, although the robot's speed made the walkers uncomfortable, the overall comfort rating was still relatively high.
We relate the exception to the perceived smoothness during remote-controlled trials.
\par
Overall, using empirical evidence, we summarize that the $T_p$, $DH_{min}$, and smoothness are the driving factors of pedestrian comfort.
Current research mostly uses $DH_{min}$ as the primary metric for pedestrian discomfort and ignores other factors.
A similar setup uses only a remote-controlled robot; the results demonstrate the insufficiency of the minimum distance in quantifying pedestrian comfort~\cite{Jafari2026-1-icra}.
Including these variables refines comfort quantification for mobile robots for a pleasant pedestrian walk.
\subsection{Comparison with previous research}\label{sec:res:com}
Direct comfort comparison with previous research is challenging because research with an empirical comfort study (using surveys) is rare; survey-driven comfort validation is a novelty of this study.
In Sections~\ref{sec:res:kin} and~\ref{sec:res:survey}, we discuss that the minimum distance is neither the only kinematic variable influencing comfort nor the dominant factor.
Nevertheless, in this section, we mainly compare our algorithm with previous research using $DH_{min}$ because it is the standard baseline and the most popular comfort metric.
\par
Table~\ref{tab:MetricComparison} compares our experimental results with a Human-Operated (HO) study~\cite{Brayan2026}, a learning-based method~\cite{Sen2025}, and a game-theory approach~\cite{Sun2026}. 
Brayan et al. present NavWareSet, a dataset that includes an NMR navigation in a crowd~\cite{Brayan2026}; a human operates the robot through a crowd in compliant and noncompliant categories.
In addition, Sen et al. train an NMR to navigate among humans~\cite{Sen2025}.
They use ORCA as their baseline.
Our setup and~\cite{Sen2025} have a similar scenario, mobile robot, and environment; speeds are different.
Moreover, Sun et al. introduce Bayesian Recursive Nash Equilibrium (BRNE) for cooperative collision avoidance~\cite{Sun2026}.
ORCA is also one of their baselines.
We compare the performances of TSFM, SFM, R14, and R28 with HO in compliant and noncompliant categories~\cite{Brayan2026}, with DR and ORCA from~\cite{Sen2025}, and with BRNE and ORCA from~\cite{Sun2026}.
The comparison metrics are~\cite{Sen2025}:
\begin{itemize}
    \item \textbf{Success Rate ($SR$):} Success rate is the number of trials where the robot passes the human without collision and reaches the goal point divided by the total number of trials.
    \item \textbf{Space Compliance ($SC$):} Space compliance is the ratio of the trajectory with less than $1.2~m$ distance to the human to the total trajectory.
    The total trajectory in this paper is the travel length within the influence zone $\mathcal{I}_z$.
    Sen et al. don't explicitly define the SC denominator.
    \item \textbf{Minimum Distance to Human ($DH_{min}$):} It is the minimum relative distance between the NMR and the human during each trial, i.e., Fig.~\ref{fig:boxes}(e).
\end{itemize}
Table~\ref{tab:MetricComparison} presents the means of the collected data.
The comparison is based on reported results by~\cite{Brayan2026, Sen2025, Sun2026} and does not use a unified experimental framework.
\par
\begin{table}
\centering
\renewcommand\arraystretch{1.2}
\caption{Robots' performance comparison between six autonomous navigation methods and four remote-controlled cases using mean values of metrics. 
HOs are human-operated trials in compliant and noncompliant cases for the frontal approach scenario using an NMR~\cite{Brayan2026};
R14 and R28 are remote-controlled cases at $1.4~m/s$ and $2.8~m/s$ constant speeds;
ORCA trials are baselines in~\cite{Sun2026} and~\cite{Sen2025};
DR uses the domain randomization technique~\cite{Sen2025};
BRNE uses a game theory approach~\cite{Sun2026};
SFM is our baseline;
TSFM uses PTTC to improve pedestrians' comfort.
}
\label{tab:MetricComparison}
\begin{tabular}{||m{0.18\textwidth}|p{0.06\textwidth}|p{0.06\textwidth}|p{0.08\textwidth}||}
\hline
\diagbox[width=13.12em, height=3.7em]{\textbf{Method}}{\textbf{Metric}} & \textbf{$SR~(\%)$} & \textbf{$SC~(\%)$} & \textbf{$DH_{min}~(m)$} \\
\hline\hline
\textbf{HO-compliant}~\cite{Brayan2026} & $100$ & $-$ & $1.06$ \\
\hline
\textbf{HO-noncompliant}~\cite{Brayan2026} & $100$ & $-$ & $0.56$ \\
\hline
\textbf{R14 (Ours)} & $100$ & $11$ & $0.95$ \\
\hline
\textbf{R28 (Ours)} & $100$ & $10$ & $1.02$ \\
\hline
\textbf{ORCA}\cite{Sen2025} & $-$ & $49$ & $0.48$ \\
\hline
\textbf{ORCA}~\cite{Sun2026} & $73$ & $-$ & $0.6^{\mathrm{a}}$ \\
\hline
\textbf{DR}\cite{Sen2025} & $91^{\mathrm{a}}$ & $33$ & $0.59$ \\
\hline
\textbf{BRNE}~\cite{Sun2026} & $97$ & $-$ & $1.2^{\mathrm{a}}$ \\
\hline
\textbf{SFM (Ours)} & $100$ & $14$ & $0.73$ \\
\hline
\textbf{TSFM (Ours)} & $100$ & $11$ & $0.99$ \\
\hline
\end{tabular}
\vspace{1ex}
\\
\raggedright
\text{$^{a}$ Simulations results replace unavailable experimental results.}
\end{table}
The NMR using SFM and TSFM successfully avoided the pedestrian and passed him/her toward the end of the hallway in all $320$ trials, compared to DR's 91\% and ORCA's 73\% success rates.
In addition, a lower portion of the trajectory is inside the pedestrian intimate zone~\cite{Hall1963}, reflected in lower $SC$s.
Regarding space compliance, TSFM outperforms all other autonomous methods, including SFM.
In addition, TSFM has a similar SC to our remote-controlled cases.
\par
The minimum relative distance to human $DH_{min}$ indicates better performance for SFM and TSFM compared to DR, both ORCAs, and noncompliant HO.
TSFM stays further from the pedestrian compared to SFM, DR, both ORCAs, and noncompliant HO; TSFM's performance is similar to our remote-controlled scenarios and the compliant HO.
However, BRNE outperforms TSFM in the avoidance distance metric by about 20\% margin.
We highlight that SFM and TSFM are explicit models, don't need training, and are simpler and more interpretable compared to other methods.
\par
\subsection{Outdoor multi-pedestrian trials}\label{sec:res:out}
We conduct a qualitative proof of concept similar to Huber et al.~\cite{Huber2022}.
Outdoor trials with multiple pedestrians on an actual sidewalk verify that the methods work in real-world situations; see Fig.~\ref{fig:ExpTrial_Vid}(b) and the Supplementary video. 
\par
The NMR must move forward in a predefined direction and pass through multiple moving pedestrians with arbitrary set directions. 
Some pedestrians move together as a group, and some others walk individually.
The pedestrians' speed is within a normal walking range, and they are free to change directions or avoid the robot to resemble a delivery robot in service.
The trial ends when the NMR passes the crowd and centralizes on the sidewalk.
\par
\textcolor{black}{
We designed TSFM to react to PTTC explicitly; the robot's reaction to low PTTCs exponentially grows by design, i.e.,~\eqref{eq:f_soc:TSFM}--\eqref{eq:f_soc:TSFMeq}.
In the crowd section of the supplementary video, the TSFM robot's exponential reaction to low PTTC is visible. 
In the TSFM trial, the robot stops, slightly moves backward, and readjusts before moving forward, whereas in the SFM trial, the robot deviates and keeps moving as long as the distance is acceptable, even if PTTC is low.
TSFM prioritizes maintaining safer temporal margins, whereas SFM prioritizes spatial separation.
Because TSFM maintains a higher PTTC and since PTTC's minimum $T_p$ and comfort are correlated, TSFM is statistically more comfortable than SFM.
}
\par
\textcolor{black}{Overall,} using both SFM and TSFM, the NMR successfully navigates through pedestrian crowds on unstructured sidewalks where the robot simultaneously performs head-on encounters, overtaking maneuvers, and other complex multi-agent interactions without requiring extensive scenario-specific training.
The outdoor tests show that the methods work beyond the controlled environments and support the practical feasibility of applying SFM and TSFM to NMRs.
\par
\subsection{Highlights and limitations}\label{sec:res:lim}
This paper suggests a framework for NMR-pedestrian interactions.
To our knowledge, no research has studied the stability of a mobile robot when a pedestrian is involved.
Using a passivity-based approach, we prove the stability of our algorithms applied to the system.
More importantly, we formalize the agents' dynamics on sidewalks, establishing a framework for further algorithm developments.
\par
Regarding the proposed algorithms, we benchmark pedestrian comfort using the introduced metrics.
TSFM outperforms SFM in pedestrian comfort, but the advantage comes at the cost of robot speed.
In addition, TSFM's movement patterns show similar minimum distance and space compliance to the remote-controlled trials.
Both TSFM and SFM surpass ORCA and DR in the metrics, even though the robot moves significantly faster.
\par
Regarding reported subjective safety, $T_p$ moderately correlates with comfort, providing a viable option for incorporating pedestrians' feelings into navigation algorithms using on-board sensors.
Nevertheless, further analysis shows that movement smoothness is also a major contributor to subjective safety and suggests future directions for improving walkers' experience on futuristic sidewalks.
\par
Versatility in handling various scenarios without training and simplicity are additional advantages of the proposed methods.
The simplicity of SFM and TSFM is another advantage over learning methods.
They do not require extensive scenario-dependent training, use explicit formulations, are intuitive, and are easy to debug and further develop.
\par
The study faces a few methodological constraints.
Although being comparable to similar research in volunteer group size, a larger population that is more gender and age-balanced improves the reliability of the results.
Moreover, the positive bias in attitude toward mobile robots distorts the questionnaire response distributions.
The post-trial collection of the pedestrian opinions introduces noise into the reported comfort, too.
Another limitation is using only two preset speeds in the remote-controlled case, which may not fully reveal the trends.
Therefore, we recommend using several intermediate speed steps.
\section{Conclusion}\label{sec:con}
Futuristic sidewalks include mobile robots cruising through pedestrian crowds.
Due to practical necessities, most of these robots are nonholonomic.
However, nearly all research on mobile robot-pedestrian interaction in public spaces is developed for holonomic mobile robots.
We propose a framework for force-based navigation algorithms for application to NMRs.
\par
As proof of concept, we apply SFM and TSFM in the framework.
Since PTTC correlates with pedestrian discomfort, we extend a PTTC-based SFM (TSFM) from e-scooters to NMRs.
We develop, calibrate, and experimentally compare the introduced models in terms of pedestrian comfort and the robot's speed.
The stability analysis of mobile robot-pedestrian interaction using a passivity-based approach is another novelty.
Moreover, outdoor multi-pedestrian tests verify the algorithms' performance on real sidewalks.
\par
Furthermore, we compare the proposed methods to a previous learning-based approach using criteria introduced by the research.
The results favor the SFM-based approaches in terms of pedestrian comfort metrics.
Moreover, statistical analysis of the survey results shows that TSFM outperforms SFM in walkers' comfort.
In addition, the autonomous algorithms match the remote-controlled cases in $T_p$ and space compliance.
However, the human-driven robots move more smoothly than the autonomous navigators, yielding partial improvement of walkers' comfort.
Overall, we recommend TSFM and SFM for NMRs due to their practical feasibility, implementation simplicity, and superior performance.
\par
\textcolor{black}{This paper does not include any predictions of pedestrian movements.
However, incorporating pedestrian path predictions is a promising research direction.
Predicting walkers' trajectories leads to smoother robot motion, higher PTTC, and more comfortable interactions.
High-level predictions address complex cases, such as sudden stops and sharp turns in crowds, deadlocks, pushovers, and online detection and reaction to pedestrian social group behaviors—challenges hardly manageable by methods relying only on instantaneous states.}
Extending the framework to a cooperating group of mobile robots is another challenge worth considering.
Our ongoing research focuses on quantifying the pedestrians' comfort using questionnaires in similar scenarios.

\ifCLASSOPTIONcaptionsoff
  \newpage
\fi

	\bibliographystyle{IEEEtran}
	\bibliography{Ref}
\appendices
\section{Detailed dynamics}\label{sec:app:matrices}
The matrices introduced in \eqref{eq:Dynamics} are
\begin{equation}
    \label{MCB}
    M= \begin{bmatrix}
        m & 0 & -mbS_{\theta} \\
        0 & m & mbC_{\theta} \\
        -mbS_{\theta} & mbC_{\theta} & I + mb^2
    \end{bmatrix},
\end{equation}
\begin{equation*}
    C= \begin{bmatrix}
        0 & 0 & -mb\dot{\theta}C_{\theta} \\
        0 & 0 & -mb\dot{\theta}S_{\theta} \\
        0 & 0 & 0
    \end{bmatrix}\text{, and~}
    B= \begin{bmatrix}
    C_{\theta} & -S_{\theta} & 0\\
    S_{\theta} & C_{\theta}  & 0 \\
    0 & b & 1
    \end{bmatrix}.
\end{equation*}
The matrices introduced in \eqref{eq:DynamicsReduced} are
\begin{equation}
    \label{MCB_r}
    M_r = \begin{bmatrix}
        m & 0 \\
        0 & I + mb^2
    \end{bmatrix},
    C_r = \begin{bmatrix}
        0 & -mb\dot{\theta}\\
        mb\dot{\theta} & 0 \\ 
    \end{bmatrix},
\end{equation}
    \begin{equation*}
    B_r = \begin{bmatrix}
        1 & 0 & 0\\
        0 & b & 1
    \end{bmatrix}\text{, and~}
    \dot{J}^T = \begin{bmatrix}
        \dot{\theta}C_{\theta} & -\dot{\theta}S_{\theta} & 0 \\
        0 & 0 & 0
    \end{bmatrix}. 
\end{equation*}
In addition, \eqref{eq:DynamicsFinal} is equivalent to
\begin{equation}
\left\{
\begin{aligned}
    & m\dot{v}-mb\omega^2 = F_m\\
    & (I+mb^2)\dot{\omega}+mb\omega v = bF_n+\tau
\end{aligned}
\right.,
\label{eq:DynamicsSimple}
\end{equation}
where $F_m$ and $F_n$ are $\vec{F}$ components in the local frame.
\section{Proofs}\label{sec:app:proof}
\textbf{Lemma 1.}
    Consider an NMR described by~\eqref{eq:system}.
    If $v_{des}$ is constant, the system is Ultimately Uniformly Bounded (UUB).
    Specifically, the speed $v$ enters the bounded region $[0, v_{des}\cos{\psi}]$ after some finite time and remains in that region thereafter.
\begin{proof}~ Consider the Lyapunov candidate
\begin{align}
    V_1=\frac{1}{2}mv^2+\frac{1}{2}(I+mb^2)\omega^2+\frac{mbv_{des}}{\tau_d}(1-\cos{\psi})\ge0.
    \label{eq:V1}
\end{align}
Differentiating $V_1$ along the system trajectories gives
\begin{equation}
\begin{aligned}
\dot{V}_1 &= m\dot{v}v + (I+mb^2)\dot{\omega}\omega + \frac{mbv_{des}}{\tau_d}\omega \sin{\psi} \\
&= -\frac{m}{\tau_d}v(v-v_{des}\cos{\psi}) + mb\omega^2 v \\
&\quad -\frac{mbv_{des}}{\tau_d}\omega \sin{\psi} - mbv\omega^2 + \frac{mbv_{des}}{\tau_d}\omega \sin{\psi} \\
&= -\frac{m}{\tau_d}v(v-v_{des}\cos{\psi}).
\end{aligned}
\label{eq:V_dot_1}
\end{equation}
$\dot{V}_1$ is negative for all $v$ outside the region $[0, v_{des}\cos{\psi}]$. 
If $v$ is initially outside this region, it approaches the region and enters it after some finite time. 
Once $v$ is inside the region, it remains there since once it exists, $\dot{V}_1$ becomes negative and drives it back inside.
Thus, the system is Ultimately Uniformly Bounded and after some finite time $0\leq v\leq v_{des}\cos{\psi}$.
\end{proof}
\textbf{Lemma 2.}
    Consider an NMR described by~\eqref{eq:system}.
    If $v_{des}$ is constant, the system is Ultimately Uniformly Bounded (UUB).
    Specifically, the speed $v$ enters the bounded region $[v_{des}\cos{\psi}, v_{des}]$ after some finite time and remains in that region thereafter.
\begin{proof} 
~ Consider the Lyapunov candidate
\begin{equation}
    \begin{aligned}
    V_2&=\frac{1}{2}m(v-v_{des})^2+\frac{1}{2}(I+mb^2)\omega^2\\
    &+\frac{mbv_{des}}{\tau_d}(1-\cos{\psi})\ge0.
    \end{aligned}
    \label{eq:V2}
\end{equation}
Differentiating $V_2$ along the system trajectories gives
\begin{equation}
\begin{aligned}
\dot{V}_2 &= m\dot{v}(v-v_{des}) + (I+mb^2)\dot{\omega}\omega + \frac{mbv_{des}}{\tau_d}\omega \sin{\psi} \\
&= -\frac{m}{\tau_d}(v-v_{des})(v-v_{des}\cos{\psi}) + mb\omega^2 v - mb\omega^2 v_{des}\\
&\quad -\frac{mbv_{des}}{\tau_d}\omega \sin{\psi} - mbv\omega^2 + \frac{mbv_{des}}{\tau_d}\omega \sin{\psi} \\
&= -\frac{m}{\tau_d}v(v-v_{des}\cos{\psi})- mb\omega^2 v_{des}.
\end{aligned}
\label{eq:V_dot_2}
\end{equation}
$\dot{V}_2$ is negative for all $v$ outside the region $[v_{des}\cos{\psi}, v_{des}]$. 
If $v$ is initially outside this region, it approaches the region and enters it after some finite time. 
Once $v$ is inside the region, it remains there since once it exists $\dot{V}_2$ becomes negative and drives it back inside.
Thus, the system is Ultimately Uniformly Bounded and after some finite time $v_{des}\cos{\psi}\leq v\leq v_{des}$.
\end{proof}
\par
\textbf{Theorem 1.}
    Consider an NMR described by~\eqref{eq:system}.
    If $v_{des}$ is constant, the system is asymptotically stable.
    Specifically, the speed $v$, the angular speed $\omega$, and the relative angle $\psi$ satisfy
    \[
    \lim_{t \to \infty} v(t) = v_{des}, \qquad 
    \lim_{t \to \infty} \omega(t) = 0, \qquad 
    \lim_{t \to \infty} \psi(t) = 0.
    \]
\begin{proof}~
The proof directly follows Lemmas~\ref{lemma:UUB1} and~\ref{lemma:UUB2}. 
The speed $v$ is UUB in two regions that share a single point, $v_{des}\cos{\psi}$; therefore, $\lim_{t \to \infty} v(t)=v_{des}\cos{\psi}$.
\par
Considering \eqref{eq:V_dot_2}, $\dot{V}_2\leq 0$ when $\lim_{t \to \infty} v(t)=v_{des}\cos{\psi}$.
Therefore, the system is stable in the sense of Lyapunov.
Moreover, by LaSalle’s invariance principle, as $t \to \infty$, every trajectory approaches the largest, and only in this case, invariant set where $V_2=0$.
Thus, $\omega$ and $\psi$ converge to the origin. Now, we can conclude that as ${t \to \infty}$, $v(t) \to v_{des}$, which completes the proof.
\end{proof}
\textbf{Theorem 2.}
    Consider an NMR described by~\eqref{eq:system} and bounded but time-varying $v_{des}$.
    The system is UUB, i.e., the speed $v$ and the angular speed $\omega$ eventually enter a bounded region and remain there.
\begin{proof}~Consider the partial Lyapunov candidate
\begin{align}
    V_3=\frac{1}{2}m(v-v_{max})^2+\frac{1}{2}(I+mb^2)\omega^2\ge0.
    \label{eq:V3}
\end{align}
Differentiation yields
\begin{align}
    \dot{V}_3&=V_{3d}^v+V_{3d}^\omega,
    \label{eq:V_dot_3}
\end{align}
where
\begin{align}
    V_{3d}^v&=-\frac{m}{\tau_d}(v-v_{max})(v-v_{des}\cos{\psi})~\text{and}\\
    V_{3d}^\omega&=-mbv_{max}\omega^2-\frac{mbv_{des}}{\tau_d}\omega\sin{\psi}.
    \label{eq:V3d}
\end{align}
Noting that $v_{des}\leq v_{max}$ and $\lvert \cos{\psi} \rvert\leq 1$, it follows that for $\lvert v \rvert> v_{max}$, $V_{3d}^v<0$.
Similarly, for $\lvert \omega \rvert> \frac{1}{\tau_d}$, we have $V_{3d}^\omega<0$.
Hence, outside the regions $\lvert v \rvert < v_{\max}$ and $\lvert \omega \rvert < 1/\tau_d$, $\dot{V}_3$ is negative, and therefore, $v$ and $\omega$ can not diverge simultaneously.
\par
On the other hand, $V_{3d}^v$ and $V_{3d}^\omega$ are in quadratic form, and thus, upper bounded inside the regions, i.e., between their roots.
As a result, the divergence of neither of the terms can be compensated by the other term to keep $\dot{V}_3\ge0$; $\dot{V}_3$ becomes negative if either of $v$ or $\omega$ diverges.
Consequently, neither of them diverges, and both $v$ and $\omega$ eventually enter a bounded region and stay there.
Therefore, the system is UUB, and the proof is complete.
\end{proof}
\section{A calibration run}\label{sec:app:cal}
\begin{figure}
\centering
\subfigure[]{\includegraphics[width=0.48\textwidth]{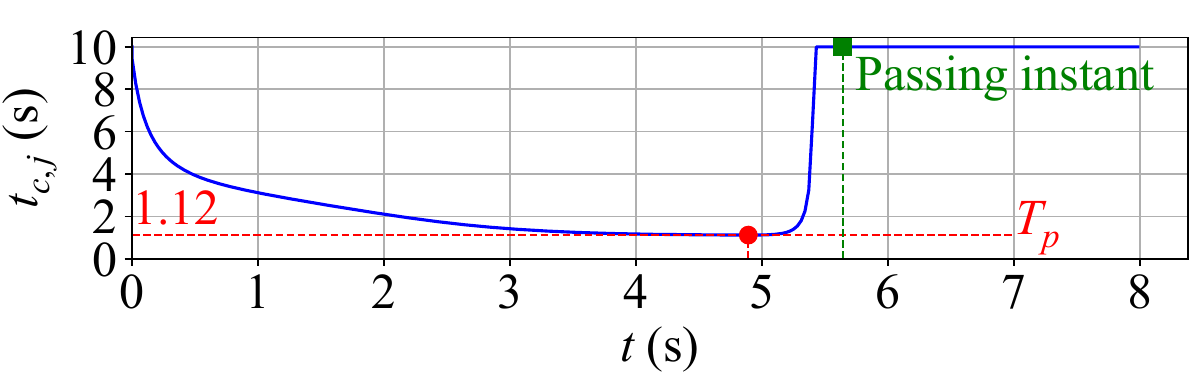}}
\subfigure[]{\includegraphics[width=0.48\textwidth]{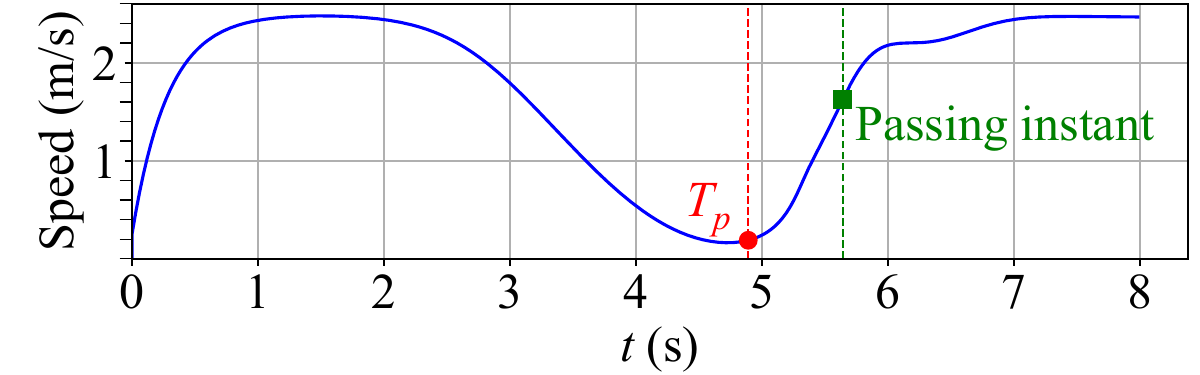}}
\subfigure[]{\includegraphics[width=0.48\textwidth]{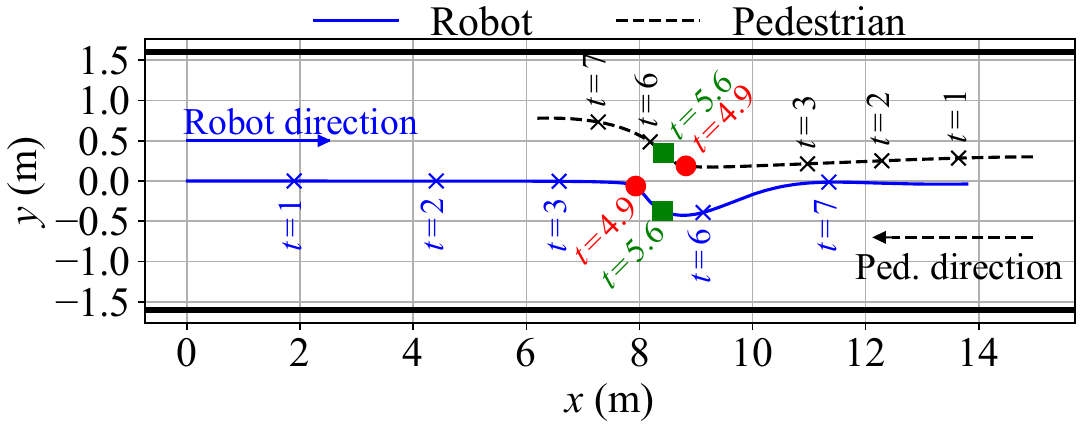}}
\caption{A sample TSFM simulation on a sidewalk with $3.2~m$ width.
(a) PTTC, denoted by $t_{c,j}$, changes as the robot approaches the pedestrian. 
$T_p$ is the minimum of $t_{c,j}$ and it happens just before the passing point.
$t_{c,j}\geq10$ and $t_{c,j}<0$ are set to $10$.
(b) The robot's speed changes as it approaches the pedestrian. 
In TSFM, $T_p$ and the minimum speed happen almost simultaneously if the social force is the dominant term.
(c) The TSFM NMR and an SFM pedestrian pass and avoid each other. 
The passing instances, timestamps, and $T_p$ are marked on the trajectories by \textcolor{darkgreen}{\rule{1.2ex}{1.2ex}}, \textcolor{black}{$\times$}, and \textcolor{black}{\scalebox{1.6}{$\bullet$}}, respectively.}
\label{fig:TSFMsimu}
\end{figure}
Fig.~\ref{fig:TSFMsimu} describes a sample TSFM trial on a sidewalk with $3.2~m$ width.
In the simulated interaction, the robot and the pedestrian move toward each other from opposite ends of the hall.
When they are far enough, the $t_{c,j}$ is large and $\|\vec{f}_{soc,j}\|\approx0$; We set $t_{c,j}\geq10~s$ to $t_{c,j}=10~s$ for convenience.\par
As they get closer, $t_{c,j}$ drops, Fig.~\ref{fig:TSFMsimu}(a), resulting in $\|\vec{f}_{soc,j}\|$ increase. 
Therefore, the avoidance maneuver starts.
According to \eqref{eq:v_des}--\eqref{eq:S}, the increase in $\|\vec{f}_{soc,j}\|$ results in a drop in desired speed $\|\vec{v}_{des}\|$ and the robot speed $\|\vec{v}\|$, Fig.~\ref{fig:TSFMsimu}(b), consequently.\par
Moments before the robot passes the pedestrian, $t=4.9~s$, $t_{c,j}$ bounces back from its minimum, $T_p$, and increases to infinity as the ``$\cos$" term in PTTC's denominator, from the dot product in~\eqref{eq:tcj}, converges to zero.
Finally, when the robot passes the pedestrian, the $t_{c,j}\leq0~s$, $\|\vec{f}_{soc,j}\|$ drops to zero, and the robot speeds up, as shown in Fig.~\ref{fig:TSFMsimu}(c).\par
\section{Sample trials}\label{sec:app:Trial}
\begin{figure*}[t]
\centering
\subfigure[]{\includegraphics[width=0.48\textwidth]{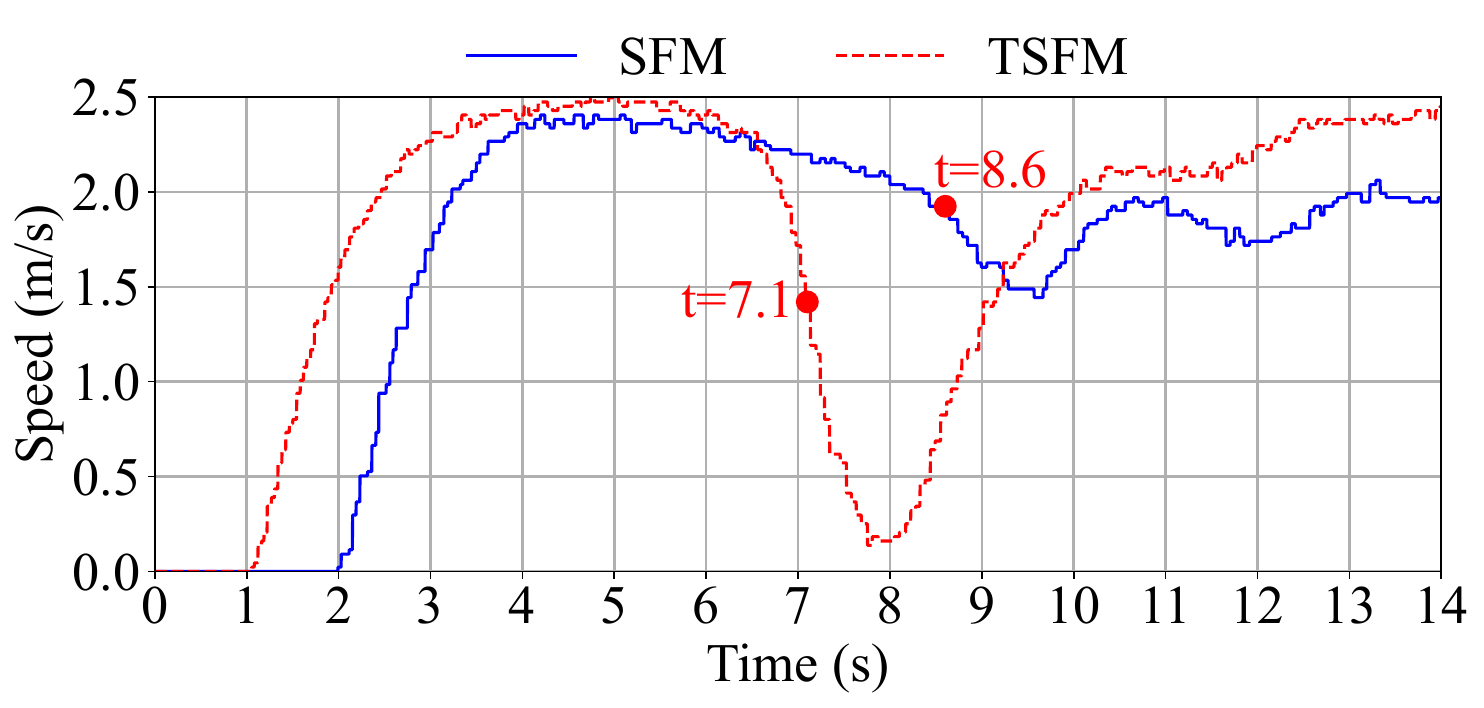}}
\subfigure[]{\includegraphics[width=0.48\textwidth]{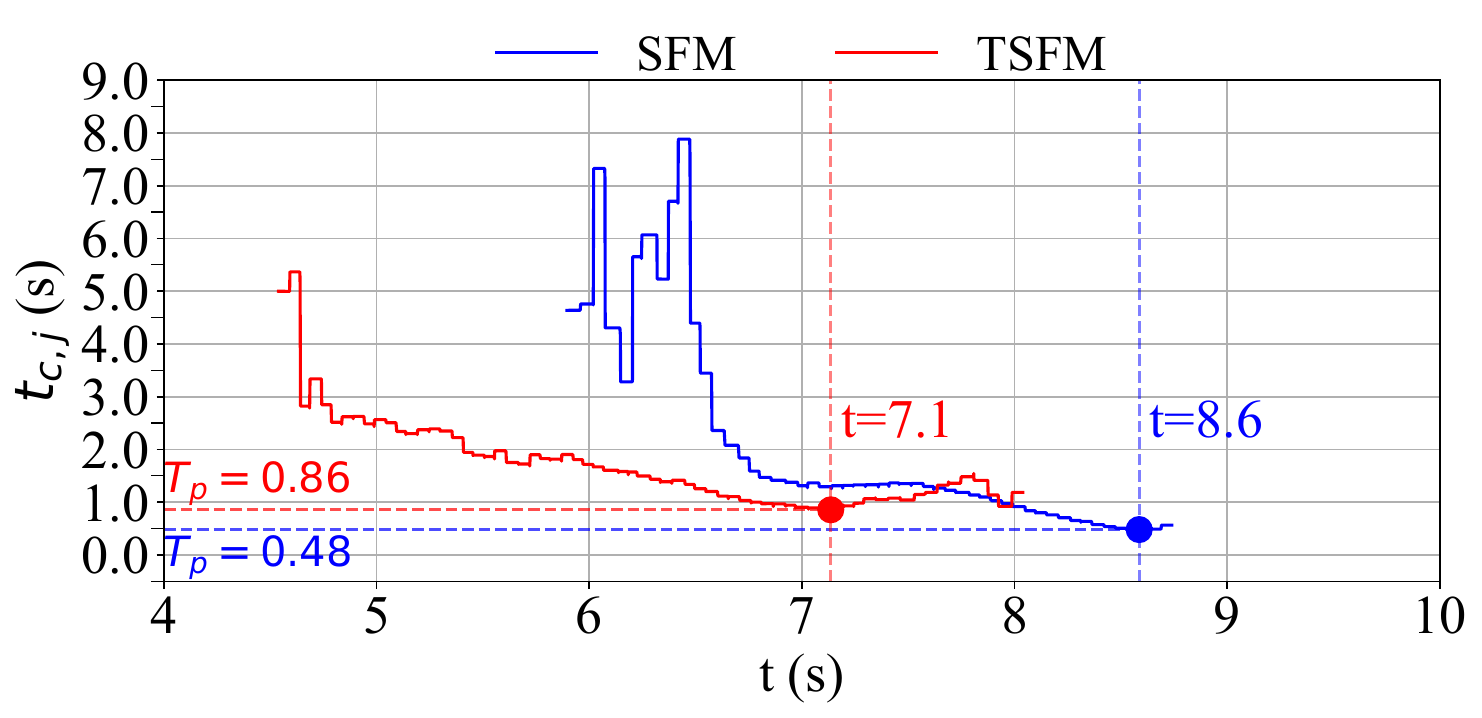}}\\
\subfigure[]{\includegraphics[width=0.48\textwidth]{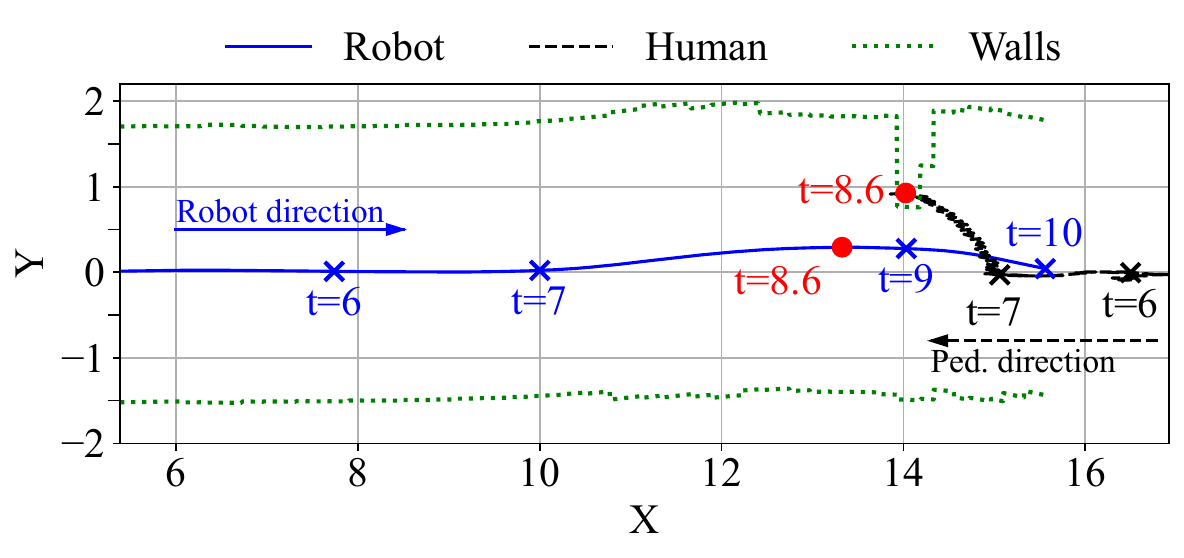}}
\subfigure[]{\includegraphics[width=0.48\textwidth]{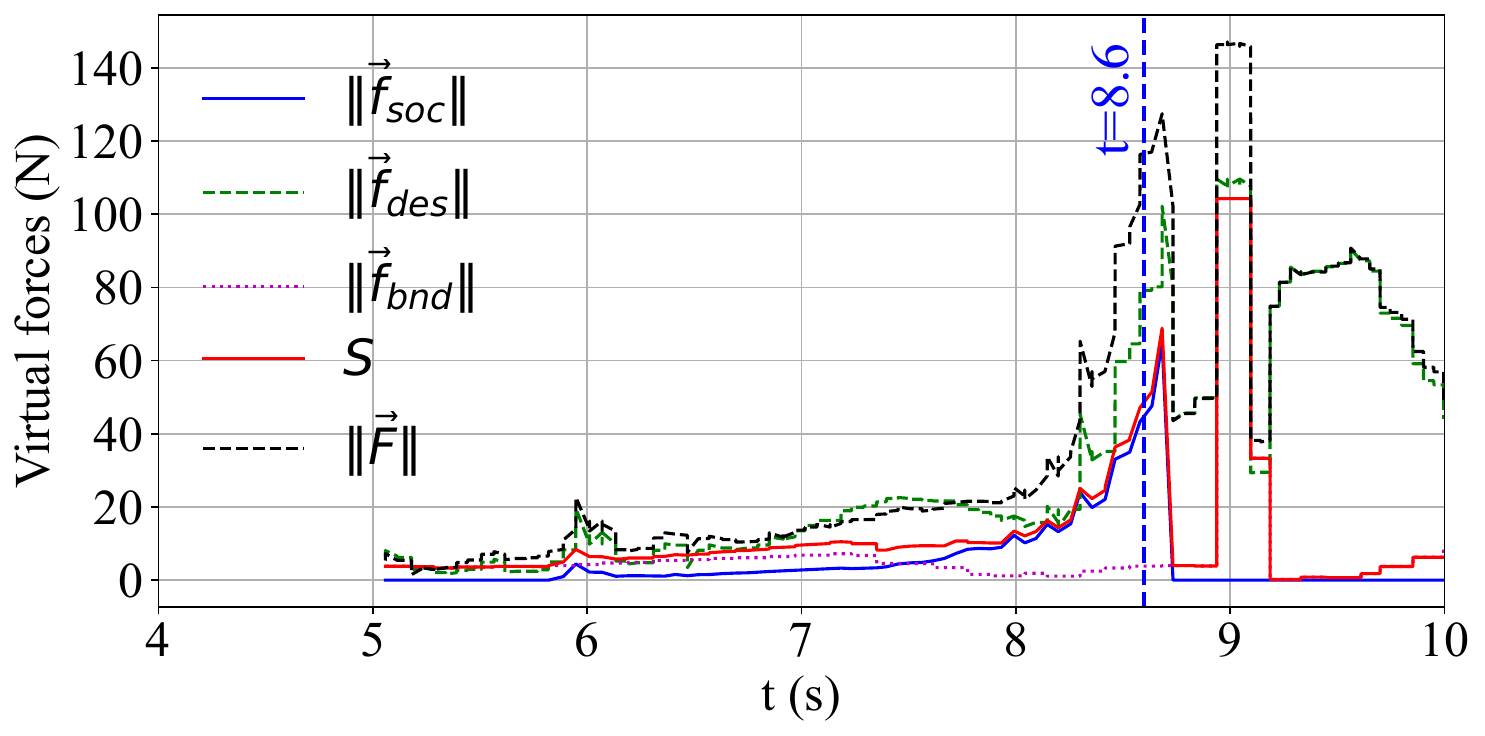}}\\
\subfigure[]{\includegraphics[width=0.48\textwidth]{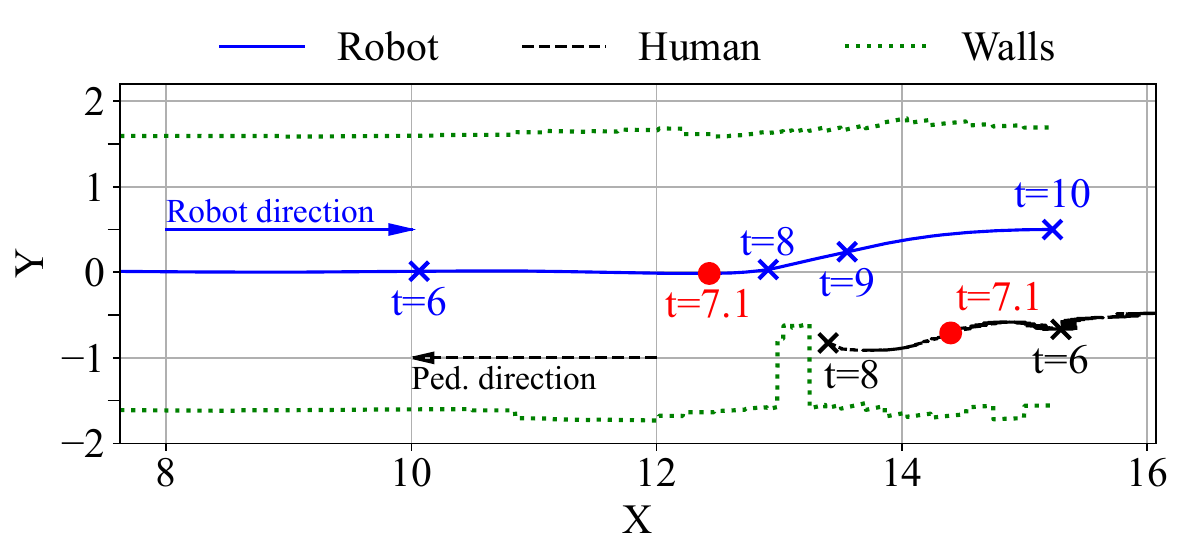}}
\subfigure[]{\includegraphics[width=0.48\textwidth]{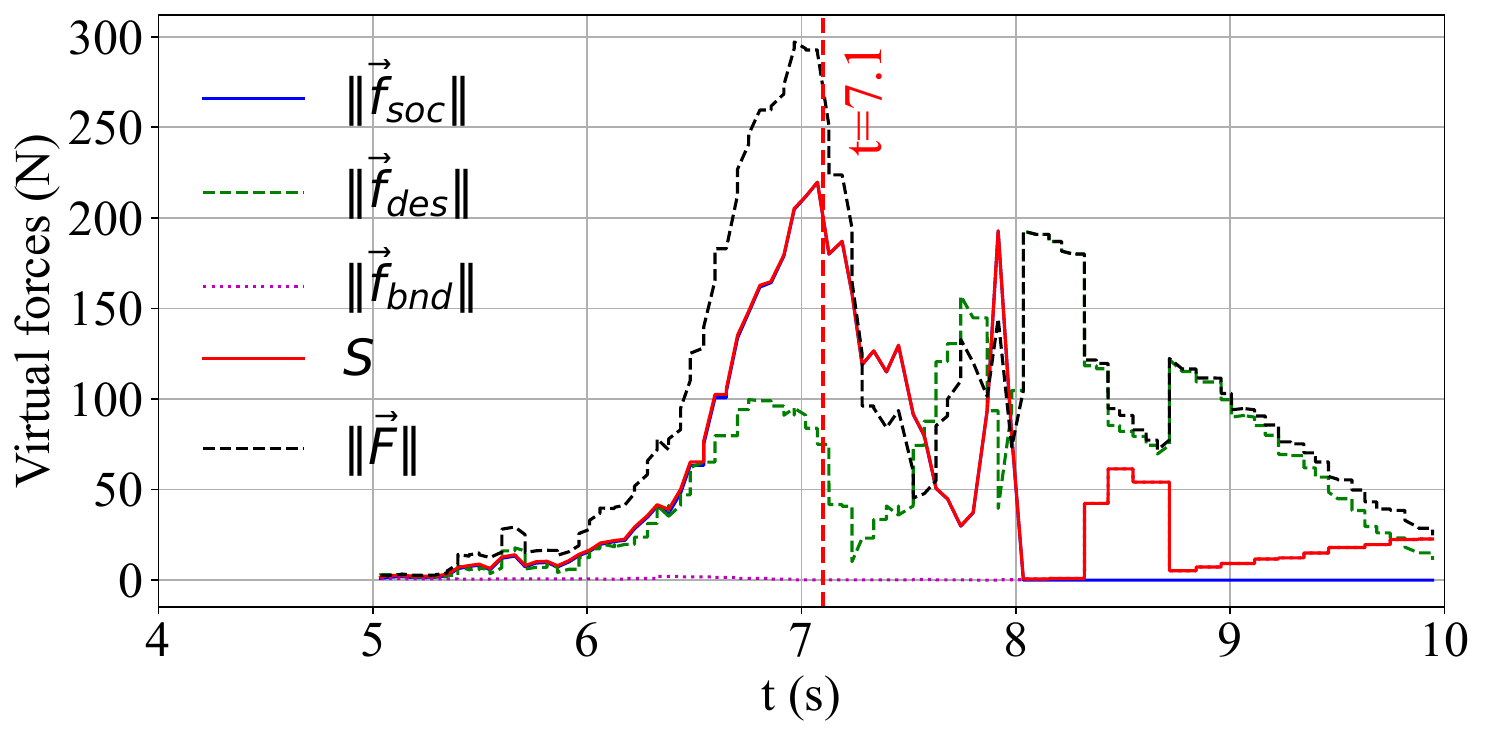}}
\caption{An SFM and a TSFM sample experimental trials with marked $T_p$s happened at $t=8.6~s$ and $t=7.1~s$, respectively. 
In addition, arbitrary timestamps are marked on the trajectories.
(a) The robot's speed variation during the trials. 
(b) The PTTC $t_{c,j}$ evolution during the trials. The minimum determines $T_p$ and its corresponding instance.
(c) An SFM NMR and a pedestrian pass and avoid each other.
The pedestrian moves from the right side of the robot to the left during the trial.
The robot moves to the left first and then swerves to the right to avoid a collision.
(d) The virtual forces vary during the SFM trial.  
(e) A TSFM NMR and a pedestrian pass and avoid each other.
(d) The virtual forces vary during the TSFM trial. \\
Fig. (a) and (b) include independent trials. SFM and TSFM plots are in the same figure to save space.
The abnormal change in the wall distance to the robot is because the LiDAR detects the pedestrian as a boundary when the robot and the walker are side by side.}
\label{fig:expTrialTr}
\end{figure*}
This section presents two experimental trials that employ SFM and TSFM, and compares their performance.
During the trials, the robot moves from the left side of the hallway to the right side, and the pedestrian moves in the opposite direction.
Fig.~\ref{fig:expTrialTr}(a) shows the robot speed for the trials.
During free motion, the robot's speed converges to the set value of $v_{max}$, i.e., $2.5~m/s$.
\par
Around $t=5~s$ for both trials, the robot detects the human.
The pedestrian disappears from the camera around $t=8.7~s$ and $t=8.0~s$ in SFM and TSFM trials, respectively.
Fig.~\ref{fig:expTrialTr}(c) and \ref{fig:expTrialTr}(e) show the agents' trajectories from the human appearance up to the last detection moment.
In addition, they depict the detected walls during the trial.
\par
Fig.~\ref{fig:expTrialTr}(b) shows the PTTC evolution in both trials. 
During the first moments of the pedestrian's appearance, the measurements vary significantly, and therefore, PTTC is not accurate.
However, as the agents get closer, the PTTC becomes more persistent. 
The figure also marks the minimum of PTTC, $T_p$, during the trials.
\par
Fig.~\ref{fig:expTrialTr}(d) and Fig.~\ref{fig:expTrialTr}(f) show $\|\vec{f}_{soc,j}\|$, $\|\vec{f}_{bnd,k}\|$, and $\|\vec{f}_{des}\|$ during the sample SFM and TSFM trials, respectively.
In addition, they show $S$ defined in \eqref{eq:S} and  $\|\vec{F}\|$.
Note that $\vec{f}_{des}$ affects $\|\vec{F}\|$ but not $S$.
In addition, $S$ is the summation of the norms where $\|\vec{F}\|$ sums the forces up as vectors and then applies the norm.
\par
Initially, the pedestrian is out of range of the onboard depth camera, and the robot cannot see the pedestrian, $\|\vec{f}_{soc,j}\|=0$.
In addition, the robot remains at the center of the hall, $\|\vec{f}_{bnd,k}\|\approx0$.
Therefore, $S\approx0$ and the $v_{des}$ is very close to $v_{max}$.
Consequently, $\vec{f}_{des}$ is the only effective force and the robot speeds up to $v_{max}$ freely, regardless of the model, Fig.~\ref{fig:expTrialTr}(a).
\par
The robot detects the pedestrian at $t=5.9~s$ and $t=4.5~s$ in SFM and TSFM trials, respectively.
For a short time after the initial detection, the detected relative distance and PTTC are inaccurate and vary significantly; see Fig.~\ref{fig:expTrialTr}(b).
Nevertheless, the inaccuracy doesn't affect the robot's behavior because the distance is too large to cause any significant forces, considering the exponential nature of the terms.
As the robot and the pedestrian close in and the detection stabilizes, $\vec{f}_{soc,j}$ gradually rises and pushes the robot away from the pedestrian path.
\par
In Fig.~\ref{fig:expTrialTr}(c), SFM trial, the robot deviates from its initial path at about $t=7~s$.
Since the pedestrian is slightly to the right side of the robot in the hall, the robot moves to the left.
However, around the same time, the pedestrian, not seeing any action from the robot up to that moment, decides to suddenly change its course to the left side of the robot to avoid it.
As a result, at $t=8~s$ the robot and the pedestrian are on the left side of the hall getting closer to each other.
Thus, Fig.~\ref{fig:expTrialTr}(d) indicates sudden peaks in force magnitudes between $t=8~s$ and $t=9~s$, resulting in a drop in the robot's speed seen in Fig.~\ref{fig:expTrialTr}(a) during the span.
Finally, the robot passes the pedestrian and accelerates to $v_{max}$ while moving to the hallway centerline.
\par
In Fig.~\ref{fig:expTrialTr}(e), TSFM trial, the robot deviates from its path at about $t=8~s$.
However, Fig.~\ref{fig:expTrialTr}(a) shows that it already significantly slowed down at about $t=6~s$ and reached $0.2~m/s$ at $t=7.8~s$.
Fig.~\ref{fig:expTrialTr}(f) indicates that the critical moment happened at about $t=7.1~s$ where the forces' magnitude maximized.
Still, the robot deviates from its trajectory and avoids the human moving at low speed, and then speeds up to $v_{max}$.
\par
Fig.~\ref{fig:expTrialTr}(c) and Fig.~\ref{fig:expTrialTr}(e) contain abnormalities regarding the left and right walls when the robot is passing the pedestrian, respectively.
The reason for the abnormalities is the misidentification of the pedestrian as a boundary by LiDAR when the agents are side by side.
Since we focus on the robot's avoidance behavior, and at this point, the robot has practically passed the pedestrian, the abnormality doesn't interfere with the results.
However, the misinterpretation changes the robot's behavior in free motion after passing the pedestrian and must be addressed in applications.
\par
Fig.~\ref{fig:expTrialTr}(b) is the main takeaway from the section.
It demonstrates the evolution of PTTC during the trials. 
Ignoring the initial fluctuations due to measurement instability, the PTTC decreases as the agents approach and increases just before they pass each other, creating a local minimum point $T_p$.
Following \cite{JAFARI2024-2-NATCOM} for e-scooters, we assume that pedestrian comfort correlates with $T_p$ for mobile robots, too.
During the trials, we collected $T_p$ as the minimum of $t_{c,j}$ defined in \eqref{eq:tcj}, and the speed in the vicinity of the pedestrian using \eqref{eq:v_vic}.\par
\section{Statistics}\label{sec:app:statistics}
\begin{table}[hb]
\centering
\renewcommand\arraystretch{1.05}
\caption{Box plot details for Fig.~\ref{fig:boxes} and Fig.~\ref{fig:Survey_boxes}. The table treats each trial as an independent sample since it describes the box/violin plots.}
\label{tab:Box_det_Fig5and6}
\begin{tabular}{||c|c|c|c|c||}
\hline
 & \textbf{TSFM} & \textbf{R14} & \textbf{SFM} & \textbf{R28} \\
\hline\hline
\multicolumn{5}{||c||}{Box plot details for Fig.~\ref{fig:boxes}(a)} \\
\hline
\textbf{N / Median} & 160 / 1.00 & 79 / 0.69 & 159 / 0.56 & 79 / 0.48 \\
\hline
\textbf{Mean / SD} & 1.01 / 0.30 & 0.71 / 0.23 & 0.58 / 0.16 & 0.53 / 0.17 \\
\hline
\textbf{Q3 / Q1} & 1.21 / 0.79 & 0.82 / 0.54 & 0.66 / 0.47 & 0.60 / 0.42 \\
\hline
\textbf{U/L notch} & 1.05 / 0.95 & 0.74 / 0.64 & 0.58 / 0.53 & 0.51 / 0.45 \\
\hline
\textbf{U/L Whis.} & 1.77 / 0.39 & 1.24 / 0.26 & 0.92 / 0.23 & 0.86 / 0.25 \\
\hline
\multicolumn{5}{||c||}{Box plot details for Fig.~\ref{fig:boxes}(b)} \\
\hline
\textbf{N / Median} & 160 / 0.70 & 79 / 1.42 & 159 / 1.92 & 79 / 2.85 \\
\hline
\textbf{Mean / SD} & 0.72 / 0.15 & 1.42 / 0.01 & 1.85 / 0.23 & 2.85 / 0.01 \\
\hline
\textbf{Q3 / Q1} & 0.80 / 0.62 & 1.43 / 1.42 & 1.98 / 1.79 & 2.85 / 2.85 \\
\hline
\textbf{U/L notch} & 0.72 / 0.68 & 1.42 / 1.42 & 1.94 / 1.89 & 2.85 / 2.85 \\
\hline
\textbf{U/L Whis.} & 1.05 / 0.42 & 1.43 / 1.41 & 2.13 / 1.51 & 2.86 / 2.84 \\
\hline
\multicolumn{5}{||c||}{Box plot details for Fig.~\ref{fig:boxes}(c)} \\
\hline
\textbf{N / Median} & 160 / 0.63 & 79 / 0.50 & 159 / 0.43 & 79 / 0.38 \\
\hline
\textbf{Mean / SD} & 0.62 / 0.11 & 0.50 / 0.11 & 0.43 / 0.09 & 0.40 / 0.10 \\
\hline
\textbf{Q3 / Q1} & 0.70 / 0.55 & 0.56 / 0.42 & 0.48 / 0.38 & 0.45 / 0.34 \\
\hline
\textbf{U/L notch} & 0.65 / 0.61 & 0.53 / 0.47 & 0.44 / 0.41 & 0.40 / 0.36 \\
\hline
\textbf{U/L Whis.} & 0.83 / 0.32 & 0.76 / 0.23 & 0.63 / 0.24 & 0.61 / 0.22 \\
\hline
\multicolumn{5}{||c||}{Box plot details for Fig.~\ref{fig:boxes}(d)} \\
\hline
\textbf{N / Median} & 160 / 0.28 & 79 / 0.57 & 159 / 0.77 & 79 / 1.14 \\
\hline
\textbf{Mean / SD} & 0.29 / 0.06 & 0.57 / 0.00 & 0.74 / 0.09 & 1.14 / 0.00 \\
\hline
\textbf{Q3 / Q1} & 0.32 / 0.25 & 0.57 / 0.57 & 0.79 / 0.72 & 1.14 / 1.14 \\
\hline
\textbf{U/L notch} & 0.29 / 0.27 & 0.57 / 0.57 & 0.78 / 0.76 & 1.14 / 1.14 \\
\hline
\textbf{U/L Whis.} & 0.42 / 0.17 & 0.57 / 0.57 & 0.85 / 0.60 & 1.14 / 1.14 \\
\hline
\multicolumn{5}{||c||}{Box plot details for Fig.~\ref{fig:boxes}(e)} \\
\hline
\textbf{N / Median} & 160 / 0.91 & 79 / 0.96 & 159 / 0.69 & 79 / 0.99 \\
\hline
\textbf{Mean / SD} & 0.99 / 0.35 & 0.95 / 0.29 & 0.73 / 0.23 & 1.02 / 0.31 \\
\hline
\textbf{Q3 / Q1} & 1.22 / 0.72 & 1.12 / 0.74 & 0.85 / 0.56 & 1.25 / 0.81 \\
\hline
\textbf{U/L notch} & 0.97 / 0.85 & 1.03 / 0.90 & 0.72 / 0.65 & 1.07 / 0.91 \\
\hline
\textbf{U/L Whis.} & 1.95 / 0.39 & 1.50 / 0.32 & 1.21 / 0.28 & 1.80 / 0.47 \\
\hline
\multicolumn{5}{||c||}{Box plot details for Fig.~\ref{fig:Survey_boxes}(a)} \\
\hline
\textbf{N / Median} & 160 / 4 & 79 / 4 & 160 / 3 & 80 / 4 \\
\hline
\textbf{Mean / SD} & 3.52 / 1.07 & 4.04 / 0.94 & 2.97 / 1.10 & 3.54 / 1.12 \\
\hline
\textbf{Q3 / Q1} & 4 / 3 & 5 / 3.5 & 4 / 2 & 4 / 3 \\
\hline
\textbf{U/L notch} & 4.12 / 3.88 & 4.26 / 3.74 & 3.25 / 2.75 & 4.18 / 3.82 \\
\hline
\textbf{U/L Whis.} & 5 / 2 & 5 / 2 & 5 / 1 & 5 / 2 \\
\hline
\multicolumn{5}{||c||}{Box plot details for Fig.~\ref{fig:Survey_boxes}(b)} \\
\hline
\textbf{N / Median} & 160 / 4 & 79 / 5 & 160 / 4 & 80 / 4 \\
\hline
\textbf{Mean / SD} & 3.51 / 1.15 & 4.43 / 0.69 & 3.92 / 0.84 & 4.31 / 0.74 \\
\hline
\textbf{Q3 / Q1} & 4 / 3 & 5 / 4 & 5 / 3 & 5 / 4 \\
\hline
\textbf{U/L notch} & 4.12 / 3.88 & 5.18 / 4.82 & 4.25 / 3.75 & 4.18 / 3.82 \\
\hline
\textbf{U/L Whis.} & 5 / 2 & 5 / 3 & 5 / 2 & 5 / 3 \\
\hline
\multicolumn{5}{||c||}{Box plot details for Fig.~\ref{fig:Survey_boxes}(c)} \\
\hline
\textbf{N / Median} & 160 / 4 & 79 / 4 & 160 / 3 & 80 / 4 \\
\hline
\textbf{Mean / SD} & 3.66 / 1.09 & 3.81 / 0.93 & 2.86 / 1.23 & 3.55 / 1.11 \\
\hline
\textbf{Q3 / Q1} & 5 / 3 & 4.5 / 3 & 4 / 2 & 4 / 3 \\
\hline
\textbf{U/L notch} & 4.25 / 3.75 & 4.26 / 3.74 & 3.25 / 2.75 & 4.18 / 3.82 \\
\hline
\textbf{U/L Whis.} & 5 / 1 & 5 / 1 & 5 / 1 & 5 / 2 \\
\hline
\multicolumn{5}{||c||}{Box plot details for Fig.~\ref{fig:Survey_boxes}(d)} \\
\hline
\textbf{N / Median} & 160 / 4 & 79 / 4 & 160 / 3 & 80 / 3 \\
\hline
\textbf{Mean / SD} & 3.80 / 0.88 & 4.28 / 0.64 & 3.19 / 0.97 & 3.17 / 1.32 \\
\hline
\textbf{Q3 / Q1} & 4 / 3 & 5 / 4 & 4 / 3 & 4 / 2 \\
\hline
\textbf{U/L notch} & 4.12 / 3.88 & 4.18 / 3.82 & 3.12 / 2.88 & 3.35 / 2.65 \\
\hline
\textbf{U/L Whis.} & 5 / 2 & 5 / 3 & 5 / 2 & 5 / 1 \\
\hline
\end{tabular}
\end{table}
This section contains the statistical analysis results details. 
Table~\textcolor{black}{\ref{tab:Box_det_Fig5and6}} contains the details presented in Fig.~\ref{fig:boxes} and in Fig.~\ref{fig:Survey_boxes}.
The trials are treated as independent samples to match the descriptive box/violin plots.
Tables~\ref{tab:St_test_Fig5a} to~\ref{tab:St_test_Fig5e} contain statistical comparison test results, including p-values and the confidence intervals C.I. for each pair of groups per Fig.~\ref{fig:boxes}(a)--(e).
Similarly, Tables~\ref{tab:St_test_Fig6a} to~\ref{tab:St_test_Fig6d} contain statistical comparison test results, including p-values and the confidence intervals C.I. for each pair of groups per Fig.~\ref{fig:Survey_boxes}(a)--(d).
There are 32 participants in total.
The following tests are performed using the following design.
\begin{itemize}
    \item TSFM vs SFM: paired t-test (32 participants overlap)
    \item TSFM vs R14: paired t-test (16 participants overlap)
    \item TSFM vs R28: paired t-test (16 participants overlap)
    \item SFM vs R14: paired t-test (16 participants overlap)
    \item SFM vs R28: paired t-test (16 participants overlap)
    \item R14 vs R28: not paired (No participants overlap); we use an independent-samples test (i.e., Welch).
\end{itemize}
\begin{table}[H]
\centering
\renewcommand\arraystretch{1.1}
\setlength{\tabcolsep}{3pt}
\caption{
Pairwise results for Fig.~\ref{fig:boxes}(a): we report p-value and 95$\%$ CI of mean differences.
R14–R28 test uses Welch’s independent-samples t-test; all other comparisons use paired-samples t-tests on participant-level means (n = 32 for TSFM–SFM; n = 16 for pairs involving R14 or R28).
p-values smaller than 0.005 are shown by $0^+$. 
The table is symmetric.}
\label{tab:St_test_Fig5a}
\begin{tabular}{||c|c|c|c||}
\hline
 & \textbf{TSFM} & \textbf{R14} & \textbf{SFM}\\
\hline\hline
\textbf{R14} & $0^+$, [-0.34, -0.12] & -- & -- \\
\hline
\textbf{SFM} & $0^+$, [-0.50, -0.36] & $0^+$, [-0.24, -0.05] & -- \\
\hline
\textbf{R28} & $0^+$, [-0.67, -0.44] & $0^+$, [-0.29, -0.08] & $0.12$, [-0.14, 0.02] \\
\hline
\end{tabular}%
\end{table}
\begin{table}[H]
\centering
\renewcommand\arraystretch{1.1}
\setlength{\tabcolsep}{5.5pt}
\caption{Statistical test, Fig.~\ref{fig:boxes}(b); details as in Table~\ref{tab:St_test_Fig5a}.}
\label{tab:St_test_Fig5b}
\begin{tabular}{||c|c|c|c||}
\hline
 & \textbf{TSFM} & \textbf{R14} & \textbf{SFM}\\
\hline\hline
\textbf{R14} & $0^+$, [0.67, 0.75] & -- & -- \\
\hline
\textbf{SFM} & $0^+$, [1.06, 1.21] & $0^+$, [0.26, 0.50] & -- \\
\hline
\textbf{R28} & $0^+$, [2.06, 2.19] & $0^+$, [1.43, 1.43] & $0^+$, [0.88, 1.00] \\
\hline
\end{tabular}
\end{table}
\begin{table}[H]
\centering
\renewcommand\arraystretch{1.1}
\setlength{\tabcolsep}{3pt}
\caption{Statistical test, Fig.~\ref{fig:boxes}(c); details as in Table~\ref{tab:St_test_Fig5a}.}
\label{tab:St_test_Fig5c}
\begin{tabular}{||c|c|c|c||}
\hline
 & \textbf{TSFM} & \textbf{R14} & \textbf{SFM}\\
\hline\hline
\textbf{R14} & $0^+$, [-0.14, -0.05] & -- & -- \\
\hline
\textbf{SFM} & $0^+$, [-0.21, -0.16] & $0^+$, [-0.12, -0.03] & -- \\
\hline
\textbf{R28} & $0^+$, [-0.29, -0.20] & $0^+$, [-0.15, -0.04] & $0.11$, [-0.08, 0.01] \\
\hline
\end{tabular}
\end{table}
\begin{table}[H]
\centering
\renewcommand\arraystretch{1.1}
\setlength{\tabcolsep}{5.5pt}
\caption{Statistical test, Fig.~\ref{fig:boxes}(d); details as in Table~\ref{tab:St_test_Fig5a}.}
\label{tab:St_test_Fig5d}
\begin{tabular}{||c|c|c|c||}
\hline
 & \textbf{TSFM} & \textbf{R14} & \textbf{SFM}\\
\hline\hline
\textbf{R14} & $0^+$, [0.27, 0.30] & -- & -- \\
\hline
\textbf{SFM} & $0^+$, [0.43, 0.48] & $0^+$, [0.10, 0.20] & -- \\
\hline
\textbf{R28} & $0^+$, [0.82, 0.87] & $0^+$, [0.57, 0.57] & $0^+$, [0.35, 0.40] \\
\hline
\end{tabular}
\end{table}
\begin{table}[H]
\centering
\renewcommand\arraystretch{1.1}
\setlength{\tabcolsep}{4pt}
\caption{Statistical test, Fig.~\ref{fig:boxes}(e); details as in Table~\ref{tab:St_test_Fig5a}.}
\label{tab:St_test_Fig5e}
\begin{tabular}{||c|c|c|c||}
\hline
 & \textbf{TSFM} & \textbf{R14} & \textbf{SFM}\\
\hline\hline
\textbf{R14} & $0.59$, [-0.10, 0.17] & -- & -- \\
\hline
\textbf{SFM} & $0^+$, [-0.34, -0.19] & $0^+$, [-0.35, -0.18] & -- \\
\hline
\textbf{R28} & $0.38$, [-0.16, 0.06] & $0.33$, [-0.08, 0.23] & $0^+$, [0.14, 0.37] \\
\hline
\end{tabular}
\end{table}
\begin{table}[H]
\centering
\renewcommand\arraystretch{1.1}
\setlength{\tabcolsep}{4pt}
\caption{Pairwise results for Fig.~\ref{fig:Survey_boxes}(a): we report p-value and  95$\%$ CI of the effect size (rank-biserial correlation).
R14--R28 test uses Mann--Whitney U test; all other comparisons use Wilcoxon signed-rank tests on participant-level medians (n = 32 for TSFM--SFM; n = 16 for pairs involving R14 or R28).
p-values smaller than 0.005 are shown by $0^+$. 
The table is symmetric.}
\label{tab:St_test_Fig6a}
\begin{tabular}{||c|c|c|c||}
\hline
 & \textbf{TSFM} & \textbf{R14} & \textbf{SFM}\\
\hline\hline
\textbf{R14} & $0.05$, [0.33, 1.00] & -- & -- \\
\hline
\textbf{SFM} & $0^{+}$, [-1.00, -0.62] & $0^{+}$, [-1.00, -1.00] & -- \\
\hline
\textbf{R28} & $0.87$, [-0.40, 0.69] & $0.13$, [-0.63, 0.09] & $0.07$, [-0.09, 1.00] \\
\hline
\end{tabular}
\end{table}
\begin{table}[H]
\centering
\renewcommand\arraystretch{1.1}
\setlength{\tabcolsep}{4.5pt}
\caption{Statistical test, Fig.~\ref{fig:Survey_boxes}(b); details as in Table~\ref{tab:St_test_Fig6a}.}
\label{tab:St_test_Fig6b}
\begin{tabular}{||c|c|c|c||}
\hline
 & \textbf{TSFM} & \textbf{R14} & \textbf{SFM}\\
\hline\hline
\textbf{R14} & $0^{+}$, [1.00, 1.00] & -- & -- \\
\hline
\textbf{SFM} & $0.02$, [0.07, 0.89] & $0.01$, [-1.00, -1.00] & -- \\
\hline
\textbf{R28} & $0^{+}$, [1.00, 1.00] & $0.76$, [-0.39, 0.30] & $0.01$, [1.00, 1.00] \\
\hline
\end{tabular}
\end{table}
\begin{table}[H]
\centering
\renewcommand\arraystretch{1.1}
\setlength{\tabcolsep}{4pt}
\caption{Statistical test, Fig.~\ref{fig:Survey_boxes}(c); details as in Table~\ref{tab:St_test_Fig6a}.}
\label{tab:St_test_Fig6c}
\begin{tabular}{||c|c|c|c||}
\hline
 & \textbf{TSFM} & \textbf{R14} & \textbf{SFM}\\
\hline\hline
\textbf{R14} & $0.30$, [-0.20, 0.85] & -- & -- \\
\hline
\textbf{SFM} & $0^{+}$, [-0.93, -0.41] & $0.01$, [-1.00, -0.29] & -- \\
\hline
\textbf{R28} & $0.80$, [-0.40, 0.69] & $0.62$, [-0.48, 0.28] & $0.01$, [0.50, 1.00] \\
\hline
\end{tabular}
\end{table}
\begin{table}[H]
\centering
\renewcommand\arraystretch{1.1}
\setlength{\tabcolsep}{3.5pt}
\caption{Statistical test, Fig.~\ref{fig:Survey_boxes}(d); details as in Table~\ref{tab:St_test_Fig6a}.}
\label{tab:St_test_Fig6d}
\begin{tabular}{||c|c|c|c||}
\hline
 & \textbf{TSFM} & \textbf{R14} & \textbf{SFM}\\
\hline\hline
\textbf{R14} & $0.01$, [0.38, 1.00] & -- & -- \\
\hline
\textbf{SFM} & $0^{+}$, [-1.00, -0.44] & $0^{+}$, [-1.00, -1.00] & -- \\
\hline
\textbf{R28} & $0.01$, [-1.00, -0.27] & $0^{+}$, [-0.84, -0.23] & $0.22$, [-0.78, 0.43] \\
\hline
\end{tabular}
\end{table}
\end{document}